\theoremstyle{plain}
\newtheorem{lemma}{Lemma}
\newtheorem{proposition}{Proposition}
\newtheorem{corollary}{Corollary}
\theoremstyle{definition}
\newtheorem{definition}{Definition}
\theoremstyle{remark}
\def\AA{\mathcal{A}}\def\BB{\mathcal{B}}
\def\DD{\mathcal{D}}\def\FF{\mathcal{F}}
\def\GG{\mathcal{G}}\def\II{\mathcal{I}}
\def\MM{\mathcal{M}}
\def\PP{\mathcal{P}}
\def\SS{\mathcal{S}}\def\TT{\mathcal{T}}
\def\XX{\mathcal{X}}
\def\Ebb{\mathbb{E}}
\def\Rbb{\mathbb{R}}
\def\R{\Rbb}
\def\one{{\mathds{1}}}
\def\*{\star}
\DeclareMathSymbol{\mhef}{\mathord}{operators}{`\-}
\newcommand{\norm}[1]{ \| #1 \|  }
\newcommand{\abs}[1]{ \left| #1 \right|  }
\DeclareMathOperator*{\argmin}{arg\,min}
\newcommand{\E}{\Ebb}
\icmltitlerunning{Safe Reinforcement Learning Using Advantage-Based Intervention}
\def\prob{\mathrm{Prob}}
\def\safeset{\ensuremath{\SS_{\mathrm{safe}}}}
\def\unsafeset{\ensuremath{\SS_{\mathrm{unsafe}}}}
\def\violation{\ensuremath{s_\triangleright}}
\def\absorbing{\ensuremath{s_\circ}}
\def\intervened{\ensuremath{s_\dagger}}
\def\supp{\mathrm{Supp}}
\def\alg{\textsc{SAILR}\xspace}
\def\algfull{Safe Advantage-based Intervention for Learning policies with Reinforcement\xspace}
\begin{document}

\twocolumn[
\icmltitle{Safe Reinforcement Learning Using Advantage-Based Intervention}




\begin{icmlauthorlist}
    \icmlauthor{Nolan Wagener}{GT}
    \icmlauthor{Byron Boots}{UW}
    \icmlauthor{Ching-An Cheng}{MSR}
\end{icmlauthorlist}

\icmlaffiliation{GT}{Institute for Robotics and Intelligent Machines, Georgia Institute of Technology, Atlanta, Georgia, USA}
\icmlaffiliation{UW}{Paul G. Allen School of Computer Science and Engineering, University of Washington, Seattle, Washington, USA}
\icmlaffiliation{MSR}{Microsoft Research, Redmond, Washington, USA}

\icmlcorrespondingauthor{Nolan Wagener}{\href{mailto:nolan.wagener@gatech.edu}{\texttt{nolan.wagener@gatech.edu}}}

\icmlkeywords{Reinforcement Learning Theory, Safe Reinforcement Learning}

\vskip 0.3in
]



\printAffiliationsAndNotice{}  

\begin{abstract}

Many sequential decision problems involve finding a policy that maximizes total reward while obeying safety constraints.
Although much recent research has focused on the development of safe reinforcement learning (RL) algorithms that produce a safe policy {after} training, ensuring safety \emph{during} training as well remains an open problem.
A fundamental challenge is performing exploration while still satisfying constraints in an unknown Markov decision process (MDP).
In this work, we address this problem for the chance-constrained setting.
We propose a new algorithm, \alg, that uses an {intervention mechanism} based on advantage functions to keep the agent safe throughout training and optimizes the agent's policy using off-the-shelf RL algorithms designed for unconstrained MDPs.
Our method comes with strong guarantees on safety during \emph{both} training and deployment (i.e., after training and without the intervention mechanism) and policy performance compared to the optimal safety-constrained policy.
In our experiments, we show that \alg violates constraints far less during training than standard safe RL and constrained MDP approaches and converges to a well-performing policy that can be deployed safely without intervention.
Our code is available at \href{https://github.com/nolanwagener/safe_rl}{\texttt{https://github.com/nolanwagener/safe\_rl}}.
\end{abstract}

\section{Introduction}
\begin{figure}[t]
    \centering
    \includegraphics[width=0.4\textwidth]{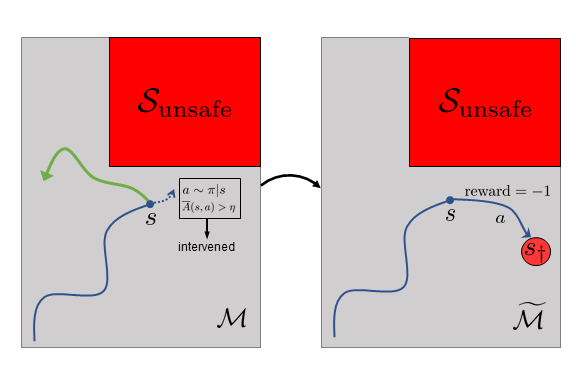}
    \caption{Advantage-based intervention of \alg and construction of the surrogate MDP $\widetilde\MM$. In $\MM$, whenever the policy $\pi$ proposes an action $a$ which is disadvantageous (w.r.t. a backup policy $\mu$) in terms of safety, $\mu$ intervenes and guides the agent to safety (green path). From the perspective of $\pi$, it transitions to an absorbing state $\intervened$ and receives a penalizing reward of $-1$.}
    \label{fig:intervention}
\end{figure}
Reinforcement learning (RL) \citep{sutton2018reinforcement} enables an agent to learn good behaviors with high returns through interactions with an environment of interest.
However, in many settings, we want the agent not only to find a high-return policy but also avoid undesirable states as much as possible, even during training.
For example, in a bipedal locomotion task, we do not want the robot to fall over and risk damaging itself either during training or deployment.
Maintaining safety while exploring an unknown environment is challenging, because venturing into new regions of the state space may carry a chance of a costly failure.

Safe reinforcement learning \citep{garcia2015comprehensive, amodei2016concrete} studies the problem of designing learning agents for sequential decision-making with this challenge in mind.
Most safe RL approaches tackle the safety requirement either by framing the problem as a {constrained Markov decision process (CMDP)} \citep{altman1999constrained} or by using control-theoretic tools to restrict the actions that the learner can take.
However, due to the natural conflict between learning, maximizing long-term reward, and satisfying safety constraints, these approaches make different performance trade-offs.

CMDP-based approaches~\citep{borkar2005actor,achiam2017constrained,le2019batch} take inspiration from existing constrained optimization algorithms for non-sequential problems, notably the Lagrangian method~\citep{bertsekas2014constrained}.
The most prominent examples~\citep{chow2017risk,tessler2018reward} rely on first-order primal-dual optimization to solve a stochastic nonconvex saddle-point problem.
Though they eventually produce a safe policy, such approaches have no guarantees on policy safety during training.
Other safe RL approaches~\citep{achiam2017constrained,le2019batch,bharadhwaj2021conservative} conservatively enforce safety constraints on every policy iterate by solving a constrained optimization problem, but they can be difficult to scale due to their high computational complexity.
All of the above methods suffer from numerical instability originating in solving the stochastic nonconvex saddle-point problems~\citep{facchinei2007finite,lin2020gradient}; 
consequently, they are less robust than typical unconstrained RL algorithms.

Control-theoretic approaches to safe RL use interventions, projections, or planning \citep{hans2008safe,wabersich2018safe,dalal2018safe,berkenkamp2017safe} to enforce safe interactions between the agent and the environment, independent of the policy the agent uses.
The idea is to use domain-specific heuristics to decide whether an action proposed by the agent's policy can be safely executed.
However, some of these algorithms do not allow the agent to \emph{learn} to be safe after training~\citep{ wabersich2018safe,hans2008safe,polo2011safe}, so they may not be applicable in scenarios where the control mechanism relies on resouces only available during training (such as computationally demanding online planning).
It is also often unclear how these policies perform compared to the optimal policy in the CMDP-based approach.  
In this work, we propose a new algorithm, \alg (\emph{\algfull}), that uses a novel advantage-based intervention rule to enable safe and stable RL for \emph{general} MDPs.
Our method comes with strong guarantees on safety during \emph{both} training and deployment (i.e., after training and without the intervention mechanism) and has good on-policy performance compared to the optimal safety-constrained policy.
Specifically, \alg trains the agent's policy by calling an off-the-shelf RL algorithm designed for standard \emph{unconstrained} MDPs.
In each iteration, \alg:
\begin{enumerate*}[label=\textit{\arabic*)}]
    \item queries the base RL algorithm to get a data-collection policy; 
    \item runs the policy in the MDP while utilizing the advantage-based intervention rule to ensure safe interactions (and executes a backup policy upon intervention to ensure safety);
    \item transforms the collected data into experiences in a new unconstrained MDP that penalizes any visits of intervened state-actions (visualized in~\cref{fig:intervention}); and 
    \item gives the transformed data to the base RL algorithm to perform policy optimization.
\end{enumerate*}

Under very mild assumptions on the MDP and the safety of the backup policy used during the intervention,\footnote{We only assume that the unsafe states are absorbing and that the backup policy is safe from the initial state with high probability. We do \emph{not} assume that the backup policy can achieve high rewards.} we prove that running \alg with \emph{any} RL algorithm for unconstrained MDPs can safely learn a policy that has good performance in the safety-constrained MDP (with a bias propotional to how often the true optimal policy
would be overridden by our intervention mechanism).
Compared with existing work, \alg is easier to implement and runs more reliably than the CMDP-based approaches. In addition, since we only rely on estimated advantage functions, our approach is also more generic than the aforementioned control-theoretic approaches which make assumptions on smoothness or ergodicity of the problem.

We also empirically validate our theory by comparing \alg with several standard safe RL algorithms in simulated robotics tasks. 
The encouraging experimental results strongly support the theory: \alg can learn safe policies with competitive performance using a standard unconstrained RL algorithm, PPO~\citep{schulman2017proximal}, while incurring only a small fraction of unsafe training rollouts compared to the baselines.

\vspace{-1mm}
\section{Preliminaries} \label{sec:problem formulation}

\subsection{Notation}
\vspace{-1mm}

A $\gamma$-discounted infinite-horizon Markov decision process (MDP) is denoted as a 5-tuple, $\MM \coloneqq(\SS,\AA,P, r, \gamma)$, 
where $\SS$ is a state space, $\AA$ is an action space, $P(s'|s,a)$ is a transition dynamics, $r(s,a)\in[0,1]$ is a reward function, and $\gamma\in[0,1)$ is a discount factor.
In this work, $\SS$ and $\AA$ can be either discrete or continuous.
A policy $\pi$ on $\MM$ is a mapping $\pi: \SS \to \Delta(\AA)$, where $\Delta(\AA)$ denotes probability distributions on $\AA$.
We use the following overloaded notation:
For a state distribution $d\in\Delta(\SS)$ and a function ${f : \SS\to\R}$, we define $f(d) \coloneqq \E_{s\sim d}[f(s)]$; similarly, for a policy $\pi$ and a function $g :\SS\times\AA\to\R$, we define ${g(s, \pi) \coloneqq \E_{a\sim\pi|s}[g(s,a)]}$.
We would often omit the random variable in the subscript of expectations, if it is clear.

A policy $\pi$ induces a trajectory distribution $\rho^\pi(\xi)$, where $\xi = (s_0, a_0, s_1, a_1, \dots)$ denotes a random trajectory. The state-action value function of $\pi$ is defined as
${Q^\pi(s,a) \coloneqq \E_{\xi\sim\rho^\pi|s_0=s,a_0=a}[\sum_{t=0}^\infty \gamma^t r(s_t,a_t) ]}$ and its state value function as $V^\pi(s) = Q^\pi(s,\pi)$.
We denote the optimal stationary policy of $\MM$ as $\pi^*$ and its respective value functions as $Q^*$ and $V^*$. 
Let $d_t^\pi(s)$ be the state distribution at time $t$ induced by running $\pi$ in $\MM$ from an initial state distribution $d_0$ (note that $d_0^\pi = d_0$); then the average state distribution induced by $\pi$ is $d^\pi(s) \coloneqq (1-\gamma) \sum_{t=0}^\infty \gamma^t d_t^\pi(s)$.
For brevity, we overload the notation $d^\pi$ to also denote the state-action distribution $d^\pi(s,a) \coloneqq d^\pi(s)\pi(a|s)$.
Finally, later in the paper we will consider multiple variants of an MDP (specifically, $\MM$, $\overline{\MM}$, and $\widetilde{\MM}$) and will use the decorative symbol on the MDP notation to distinguish similar objects from different MDPs (e.g., $V^\pi$ and $\overline{V}^\pi$ will denote the state value functions of $\pi$ in $\MM$ and $\overline{\MM}$).
Throughout this paper, we'll take $\E_{s'|s,a}$ to mean $\E_{s'\sim \PP |s,a}$.

\vspace{-1mm}
\subsection{Safe Reinforcment Learning}
\vspace{-1mm}

We consider safe RL in a $\gamma$-discounted infinite horizon MDP $\MM$,
where safety means that the probability of the agent entering an unsafe subset $\unsafeset \subset \SS$ is low.
We assume that we know the unsafe subset $\unsafeset$ and the safe subset $\safeset\coloneqq \SS\setminus\unsafeset$.
However, we make no assumption on the knowledge of reward $r$ and dynamics $P$, except that the reward $r$ is zero on $\unsafeset$ and that $\unsafeset$ is absorbing: once the agent enters $\unsafeset$ in a rollout, it cannot travel back to $\safeset$ and stays in $\unsafeset$ for the rest of the rollout.

\vspace{-1mm}
\paragraph{Objective}
Our goal is to find a policy $\pi$ that is safe and has a high return in $\MM$, and to do so via a safe data collection process.
Specifically, while keeping the agent safe during exploration, we want to solve the following chance-constrained policy optimization problem:
\begin{align} \label{eq:basic formulation}
    \max_\pi \quad & V^\pi(d_0)\\
    \mathrm{s.t.} \quad &  (1-\gamma) \sum_{h=0}^\infty \gamma^h \prob(\xi_h\subset\safeset \mid \pi) \geq 1-\delta, \nonumber
\end{align}
where $\delta\in[0,1]$ is the tolerated failure probability, $\xi_h=(s_0,a_0,\dots,s_{h-1},a_{h-1})$ denotes an $h$-step trajectory segment, and $\prob(\xi_h\subset\safeset \mid \pi)$ denotes the probability of $\xi_h$ being safe (i.e., not entering $\unsafeset$ from time step $0$ to $h-1$) under the trajectory distribution $\rho^\pi$ of $\pi$ on $\MM$.\footnote{We abuse the notation $\xi_h \subset \safeset$ to mean that $s_\tau \in \safeset$ for each $s_\tau$ in $\xi_h =
(s_0, a_0, \dots, s_{h-1}, a_{h-1})$.}

We desire the the agent to provide anytime safety in \emph{both} training and deployment.
During training, the agent can interact with the unknown MDP $\MM$ to collect data under a training budget, such as the maximum number of environment interactions or allowed unsafe trajectories the agent can generate.
Once the budget is used up, training stops, and an approximate solution of \eqref{eq:basic formulation} needs to be returned.

The constraint in \eqref{eq:basic formulation} is known as a \emph{chance constraint}. The definition here accords to an exponentially weighted average (based on the discount factor $\gamma$) of trajectory safety probabilities of different horizons.
This weighted average concept arises naturally in $\gamma$-discounted MDPs, because the objective in \eqref{eq:basic formulation} can also be written as a weighted average of undiscounted expected returns, i.e.,
$V^\pi(d_0) = (1-\gamma) \sum_{h=0}^\infty \gamma^h U_h^\pi(d_0)$, where $U_h^\pi(d_0) \coloneqq \E_{\rho^\pi}[\sum_{t=0}^h  r(s_t,a_t) ]$.

\vspace{-1mm}
\paragraph{CMDP Formulation}
The chance-constrained policy optimization problem in \eqref{eq:basic formulation} can be formulated as a constrained Markov decision process (CMDP) problem \citep{altman1999constrained, chow2017risk}.
For the mathematical convenience of defining and analyzing the equivalence between \eqref{eq:basic formulation} and a CMDP, instead of treating $\unsafeset$ as a single meta-absorbing state, without loss of generality we define $\unsafeset \coloneqq \{ \violation, \absorbing \}$. The semantics of this set is that when an agent leaves $\safeset$ and enters $\unsafeset$, it first goes to $\violation$ and, regardless of which action it takes at $\violation$, it then goes to the absorbing state $\absorbing$ and stays there forever. We can view $\violation$ as a meta-state that summarizes the unsafe region in a given RL application (e.g., a biped robot falling on the ground) and $\absorbing$ as a fictitious state that captures the absorbing property of $\unsafeset$.

For an MDP $\MM$ with an unsafe set $\unsafeset \coloneqq \{ \violation, \absorbing \}$, define the cost $c(s,a) \coloneqq \one\{ s=\violation \}$, where $\one$ denotes the indicator function. 
Then we can define a CMDP $(\SS, \AA, P, r, c, \gamma)$ using a reward-based MDP $\MM \coloneqq (\SS, \AA, P, {\color{blue} r}, \gamma)$ and a cost-based MDP $\overline{\MM} \coloneqq (\SS, \AA, P, {\color{blue} c}, \gamma)$.
Using these new definitions, we can write the chance-constrained policy optimization in \eqref{eq:basic formulation} as a CMDP problem:
\begin{align}  \label{eq:CMDP formulation}
    \max_\pi \quad & V^\pi(d_0)   \quad  \mathrm{subject~to}  \quad   \overline{V}^\pi(d_0) \leq \delta .
\end{align}
For completeness, we include a proof of this equivalence in \cref{app:proof of CMDP version}, which follows from the fact that the unsafe probability can be represented as the expected cumulative cost, i.e., $\prob(\violation \in \xi^h \mid \pi ) = \E_{\rho^\pi} [ \sum_{t=0}^{h-1} c(s_t,a_t)]$.
In other words, the chance-constrained policy optimization problem is a CMDP problem that aims to find a policy that has a high cumulative reward $V^\pi(d_0)$ with cumulative cost $\overline{V}^\pi(d_0)$ below the allowed failure probability $\delta$.

\vspace{-1mm}
\paragraph{Challenges}
This CMDP formulation has been commonly studied to design RL algorithms to find good polices that can be deployed safely~\citep{chow2017risk, achiam2017constrained,tessler2018reward,efroni2020exploration}.
However, as mentioned in the introduction, these algorithms do not necessarily ensure safety during training and can be numerically unstable.
At a high level, this instability stems from the lack of off-the-shelf computationally reliable and efficient solvers for large-scale constrained stochastic optimization. 

While several control-theoretic techniques have been proposed to ensure safe data collection~\citep{dalal2018safe, wabersich2018safe,perkins2002lyapunov,chow2018lyapunov,chow2019lyapunov,berkenkamp2017safe,fisac2018general} and in some cases prevent the need for solving a constrained problem, it is unclear how the learned policy performs in terms of the objective $V^\pi(d_0)$ in \eqref{eq:CMDP formulation} (i.e., without any interventions).
Most of these algorithms also require stronger assumptions on the environment than approaches based on CMDPs (e.g., smoothness or ergodicity).

As we will show, our proposed approach retains the best of both approaches, ensuring safe data collection via interventions while guaranteeing good performance and safety when deployed without the intervention mechanism.

\vspace{-1mm}

\section{Method} \label{sec:method}
Our safe RL approach, \alg, finds an approximate solution to the CMDP problem in \eqref{eq:CMDP formulation} by using an advantage-based intervention rule for safe data collection and an off-the-shelf RL algorithm for policy optimization.
As we will see, \alg can ensure safety for \emph{both} training and deployment, when 
\begin{enumerate*}[label=\textit{\arabic*)}]
    \item the intervention rule belongs to an ``admissible class'' (see \cref{def:admissible intervention} in \cref{sec:gr}); and
    \item  the base RL algorithm finds a nearly optimal policy for a new \emph{unconstrained} problem of  a surrogate MDP $\widetilde{\MM}$
    constructed by the \emph{intervention rule} together with $\MM$.
\end{enumerate*}
Moreover, because \alg can reuse existing RL algorithms for unconstrained MDPs to optimize policies, it is easier to implement and is more stable than typical CMDP approaches based on constrained optimization.

Specifically, \alg optimizes policies iteratively as outlined in~\cref{alg:ouralg}.
As input, it takes an RL algorithm $\FF$ for unconstrained MDPs and an intervention rule $\GG:\pi\mapsto \GG(\pi)$, where $\pi' = \GG(\pi)$ is a \emph{shielded policy}  such that $\pi'$ runs a \emph{backup policy} $\mu:\SS\to\Delta(\AA)$ instead of $\pi$ when $\pi$ proposes ``unsafe actions.''
In every iteration, \alg first queries the base RL algorithm $\FF$ for a data-collection policy $\pi$ to execute in $\widetilde{\MM}$ (line~\ref{line:get RL policy}).
Then it uses the intervention rule $\GG$ to modify $\pi$ into $\pi'$ (line~\ref{line:get shielded policy}) such that running $\pi'$ in the original MDP $\MM$ can be safe with high probability while effectively simulating execution of $\pi$ in the surrogate $\widetilde{\MM}$.
Next, it collects the data $\DD$ by running $\pi'$ in $\MM$ and then transforms it into new data $\widetilde\DD$ of $\pi$ in $\widetilde\MM$ (line~\ref{line:collect data}).
It then feeds $\widetilde\DD$ to the base RL algorithm $\FF$ for policy optimization (line~\ref{line:optimize}), and optionally uses $\DD$ to refine the intervention rule $\GG$ (line~\ref{line:update intervention}).
The process above is repeated until the training budget is used up.
When this happens, \alg terminates and returns the best policy $\hat{\pi}^*$ the base algorithm $\FF$ can produce for $\widetilde{\MM}$ so far (line~\ref{line:return}). 

We provide the following informal guarantee for \alg, which is a corollary of our main result in \cref{th:main theorem} presented in \cref{sec:main result}.
\begin{proposition}[Informal Guarantee] \label{th:informal theorem}
    For \alg, if the intervention rule $\GG$ is admissible (\cref{def:admissible intervention} in \cref{sec:gr})) and the RL algorithm $\FF$ learns an $\varepsilon$-suboptimal policy $\hat\pi$ for $\widetilde{\MM}$,
    then, for any comparator policy $\pi^*$, $\hat{\pi}$ has the following performance and safety guarantees in $\MM$:
    \begin{align*}
        V^{\pi^*}(d_0) - V^{\hat\pi}(d_0)
        &\leq \frac{2}{1-\gamma} P_\GG(\pi^*) + \varepsilon \\
        \overline{V}^{\hat\pi} (d_0) &\leq \overline{V}^\mu(d_0) + \varepsilon,
    \end{align*}
    where $\mu$ is the backup policy in $\GG$ and $P_{\GG}(\pi^*)$ is the probability that $\pi^*$ visits the intervention set of $\GG$ in $\MM$.
\end{proposition}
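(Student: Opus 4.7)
The plan is to leverage the construction of $\widetilde{\MM}$ to relate the learned policy's behavior in $\MM$ to its near-optimality in $\widetilde{\MM}$. The surrogate MDP is a pessimistic deformation of $\MM$: every trajectory that avoids the intervention set collects the same discounted reward in both MDPs, while every trajectory that hits the intervention set at some time $t$ collects at most $\gamma^t\cdot\tfrac{1}{1-\gamma}$ of additional reward in $\MM$ but exactly $-\gamma^t\cdot\tfrac{1}{1-\gamma}$ in $\widetilde{\MM}$ (once absorbed into $\intervened$ with per-step reward $-1$). This yields the two key inequalities that drive the proof:
\begin{align*}
\widetilde{V}^\pi(d_0) &\leq V^\pi(d_0), \\
V^\pi(d_0) - \widetilde{V}^\pi(d_0) &\leq \tfrac{2}{1-\gamma}\, P_\GG(\pi),
\end{align*}
for every policy $\pi$, where the second follows by restricting the expectation to the event that the trajectory hits the intervention set (whose probability is $P_\GG(\pi)$) and bounding the per-trajectory reward gap by $\tfrac{2}{1-\gamma}$.

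The reward bound is then a direct chaining argument. Writing the $\varepsilon$-suboptimality of $\hat\pi$ in $\widetilde{\MM}$ against the comparator $\pi^*$ as $\widetilde{V}^{\hat\pi}(d_0) \geq \widetilde{V}^{\pi^*}(d_0) - \varepsilon$, I obtain
\begin{align*}
V^{\pi^*}(d_0) - V^{\hat\pi}(d_0)
&\leq V^{\pi^*}(d_0) - \widetilde{V}^{\hat\pi}(d_0) \\
&\leq V^{\pi^*}(d_0) - \widetilde{V}^{\pi^*}(d_0) + \varepsilon \\
&\leq \tfrac{2}{1-\gamma}\, P_\GG(\pi^*) + \varepsilon,
\end{align*}
where the first step uses the pessimism of $\widetilde{V}$ applied to $\hat\pi$, the second uses $\varepsilon$-suboptimality, and the third uses the trajectory-gap bound for $\pi^*$. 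Notice that the comparator $\pi^*$ is arbitrary here, so one may freely specialize it to the CMDP-optimal policy.

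The safety bound is the main obstacle and requires exploiting the admissibility of $\GG$ (Definition~\ref{def:admissible intervention}). My plan is to mirror the reward-side argument on the cost MDP $\overline{\MM}$: I expect admissibility to furnish a pointwise inequality between the one-step safety advantage $\overline{Q}^\mu(s,a) - \overline{V}^\mu(s)$ of any action the intervention would \emph{not} block and the reward perturbation introduced by $\widetilde{\MM}$, so that the performance-difference identity
\begin{align*}
\overline{V}^{\hat\pi}(d_0) - \overline{V}^\mu(d_0) = \tfrac{1}{1-\gamma}\, \E_{(s,a)\sim d^{\hat\pi}}\!\left[\overline{Q}^\mu(s,a) - \overline{V}^\mu(s)\right]
\end{align*}
can be controlled by $\widetilde{V}^{\tilde\pi^*}(d_0) - \widetilde{V}^{\hat\pi}(d_0)$, hence by $\varepsilon$. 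The witness that makes this tight is $\mu$ itself: because its advantage with respect to itself is identically zero, running $\mu$ in $\widetilde{\MM}$ never triggers an intervention, so $\widetilde{V}^\mu(d_0) = V^\mu(d_0)$ and the optimum $\widetilde{V}^{\tilde\pi^*}(d_0) \geq V^\mu(d_0)$ is competitive with $\mu$'s reward while incurring no extra safety cost. The delicate step, which I expect to be the technical heart of the full proof, is showing that the $-1$ penalty in $\widetilde{\MM}$ is calibrated large enough relative to the admissibility threshold so that reward-suboptimality transfers into safety-suboptimality additively, without any extra $1/(1-\gamma)$ factors beyond what is already absorbed into the definition of $\overline{V}^\mu$.
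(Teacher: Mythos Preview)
Your performance bound is essentially the paper's argument (\cref{th:value offset} plus chaining), and it is correct. One minor slip: in $\widetilde{\MM}$ the penalty $\widetilde{R}=-1$ is received \emph{once} at the intervened action, and the absorbing state $\intervened$ has reward $0$, not $-1$; so the post-intervention contribution is $-\gamma^t$, not $-\gamma^t/(1-\gamma)$. The resulting gap bound $\frac{2}{1-\gamma}P_\GG(\pi)$ still holds, but for the reason $|\widetilde{R}|+\frac{1}{1-\gamma}=\frac{2-\gamma}{1-\gamma}\le\frac{2}{1-\gamma}$.

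The safety argument has a genuine gap. Your plan applies the performance-difference lemma in $\overline{\MM}$ with the state-action distribution $d^{\hat\pi}$ \emph{in $\MM$}, and then hopes to control $\E_{d^{\hat\pi}}[\overline{A}^\mu(s,a)]$ by the $\varepsilon$-suboptimality in $\widetilde{\MM}$. The obstruction is that $\varepsilon$-suboptimality in $\widetilde{\MM}$ only bounds the \emph{first-hitting} probability of $\II$ (i.e., $P_\GG(\hat\pi)=\frac{1}{1-\gamma}\E_{\widetilde{d}^{\hat\pi}}[\one\{(s,a)\in\II\}]$), whereas in $\MM$ the trajectory continues after entering $\II$ and can revisit it arbitrarily often; so $\E_{d^{\hat\pi}}[\one\{(s,a)\in\II\}]$ is not controlled by $\varepsilon$. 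There is no pointwise link between $\overline{A}^\mu$ and the reward perturbation that survives this distributional mismatch.

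The paper closes this gap with two ingredients you do not invoke. First, it routes through the \emph{shielded} policy $\hat\pi'=\GG(\hat\pi)$: a trajectory-coupling argument (\cref{lm:policy safety bound,lm:decomposition of policy safety}) gives $\overline{V}^{\hat\pi}(d_0)\le \overline{V}^{\hat\pi'}(d_0)+P_\GG(\hat\pi)$, and admissibility then bounds $\overline{V}^{\hat\pi'}(d_0)$ by $\overline{Q}(d_0,\mu)$ via the PDL applied to $d^{\hat\pi'}$ (\cref{th:shielded policy is safe}). Second, bounding $P_\GG(\hat\pi)\le\varepsilon/|\widetilde{R}|$ crucially uses that $\II$ is \emph{partial} (\cref{th:partial}): one constructs a policy $\pi_f$ that redirects all mass on $\II$ to non-intervened actions and shows $\widetilde{V}^{\pi_f}\ge\widetilde{J}^{\hat\pi}_+$ (\cref{lm:safer policy,th:optimal policy never enters I}), whence $|\widetilde{R}|\,P_\GG(\hat\pi)=-\widetilde{J}^{\hat\pi}_-=\widetilde{J}^{\hat\pi}_+-\widetilde{V}^{\hat\pi}(d_0)\le\widetilde{V}^*(d_0)-\widetilde{V}^{\hat\pi}(d_0)\le\varepsilon$. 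Without the partial property this step fails (cf.\ \cref{sec:non-partial}). Your observation that $\mu$ is not intervened is tangential to this mechanism (and, incidentally, only exact for deterministic $\mu$).
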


In other words, if the base algorithm $\FF$ used by \alg can find an $\varepsilon$-suboptimal policy for the surrogate, unconstrained MDP $\widetilde{\MM}$, then the policy returned by \alg is roughly $\varepsilon$-suboptimal in the original MDP $\MM$, up to an additional error proportional to the probabilty that the comparator policy $\pi^*$ would be overridden by the intervention rule $\GG$ at some point while running in $\MM$.
Furthermore, the returned policy $\hat\pi$ is as safe as the backup policy $\mu$ of the intervention rule $\GG$, up to an additional unsafe probability $\varepsilon$ arising from the suboptimality in solving $\widetilde{\MM}$ with $\FF$.

\begin{algorithm}[t]
    \caption{\alg}
    \begin{algorithmic}[1]
        \label{alg:ouralg}
        \REQUIRE MDP $\MM$, RL algorithm $\FF$, Intervention rule $\GG$
        \ENSURE Optimized safe policy $\hat{\pi}^*$
        \STATE  $\FF\texttt{.Initialize()}$ \label{line:initialize}
        \WHILE{training budget available}
        \STATE $\pi \gets \FF\texttt{.GetDataCollectionPolicy()}$ \label{line:get RL policy}
        \STATE $\pi' \gets \texttt{DefineShieldedPolicy(} \pi \texttt{,} \GG \texttt{)}$ \label{line:get shielded policy}
        \STATE $\DD, \widetilde\DD \gets \texttt{CollectData(} \pi' \texttt{,} \MM \texttt{)}$ \label{line:collect data}
        \STATE $\FF\texttt{.OptimizePolicy(} \widetilde\DD \texttt{)}$ \label{line:optimize}
        \STATE $\GG \gets \texttt{UpdateInterventionRule(} \DD \texttt{)}$ \quad \textit{(Optional)} \label{line:update intervention}
        \ENDWHILE
        \STATE $\hat\pi^* \gets \FF\texttt{.GetOptimizedPolicy()}$ \label{line:return}
    \end{algorithmic}
\end{algorithm}

We point out the results above hold without any assumption on the MDP (other than that the unsafe subset $\unsafeset$ is absorbing and the reward is zero on $\unsafeset$). 
To learn a safe policy, \alg only needs a good \emph{unconstrained} RL algorithm $\FF$, a backup policy $\mu$ that is safe starting at the \emph{initial state} (not globally), and an advantage function estimate of $\mu$, as we explain later in this section.

The price we pay for keeping the agent safe using an intervention rule $\GG$ is a performance bias proportional to $P_{\GG}(\pi^*)/(1-\gamma)$.
This happens because employing an intervention rule during data collection limits where the agent can explore in $\MM$.
Thus, if the comparator policy $\pi^*$ goes to high-reward states which would be cut off by the intervention rule, \alg (and any other intervention-based algorithm) will suffer in proportion to the intervention probability.
Despite the dependency on $P_{\GG}(\pi^*)$, we argue that \alg provides a resonable trade-off for safe RL thanks to its training safety and numerical stability. Moreover, we will discuss how to use data to improve the intervention rule $\GG$ to reduce this performance bias.

In the following, we first discuss the design of our advantage-based intervention rules (\cref{sec:intervention}) 
and provide details of the new MDP $\widetilde{\MM}$ (\cref{sec:absorbing mdp}). 
Then we state and prove the main result \cref{th:main theorem} (\cref{sec:main result}).
The omitted proofs for the results in this section can be found in \cref{app:missing proofs}.

\vspace{-1mm}
\subsection{Advantage-Based Intervention} \label{sec:intervention}
\vspace{-1mm}

We propose a family of intervention rules based on \emph{advantage functions}. Each intervention rule $\GG$ here is specified by a 3-tuple $(\overline{Q},\mu,\eta)$, where $\overline{Q}:\safeset\times\AA\to[0,1]$ is a state-action value estimator,  $\eta\in[0,1]$ is a threshold, and $\mu\in\Pi$ is a backup policy.
Given an arbitrary policy $\pi$, $\GG=(\overline{Q},\mu,\eta)$ constructs a new \emph{shielded} policy $\pi'$ 
based on an intervention set $\II$ defined by the advantage-like function $\overline{A}(s,a) \coloneqq \overline{Q}(s,a) - \overline{Q}(s,\mu) $:
\begin{align} \label{eq:intervention set}
    \II \coloneqq \{ (s,a) \in \safeset \times \AA \;:\; \overline{A}(s,a) > \eta \}.
\end{align}
When sampling $a$ from $\pi'(\cdot|s)$ at some $s\in\safeset$, it first samples $a_-$ from $\pi(\cdot|s)$.
If $(s,a_-) \notin \II$, it executes $a=a_-$.
Otherwise, it samples $a$ according to  $\mu(\cdot|s)$.
Mathematically, $\pi'$ is described by the conditional distribution
\begin{align} \label{eq:shielded policy}
    \pi'(a|s) \coloneqq \pi(a|s) \one\{(s,a)\notin\II\} + \mu(a|s) w(s),
\end{align}
where $w(s) \coloneqq 1 - \sum_{\tilde a : (s,\tilde a) \in \II} \pi(\tilde a|s)$.
Note that $\pi'$ may still take actions in $\II$ when $\mu$ has non-zero probability assigned to those actions.

By running the shielded policy contructed by the advantage function $\overline{A}$, SAILR controls the safety relative to the backup policy $\mu$ with respect to $d_0$. 
As we will show later, if the relative safety for each time step (i.e., advantage) is close to zero, then the relative safety overall is also close to zero (i.e. $\overline{V}^{\pi'}(d_0)\leq \delta$).
Note that the sheilded policy $\pi'$, while satisfying $\overline{V}^{\pi'}(d_0)\leq \delta$, can generally visit (with low probability) the states where $\overline{V}^\mu(s)>0$ (e.g., $=1$). At these places where $\mu$ is useless for safety, we need an intervention rule that naturally deactivates and lets the learner explore. Our advantage-based rule does exactly that. On the contrary, designing an intervention rule direclty based on Q-based functions, as in \citep{bharadhwaj2021conservative,thananjeyan2020recovery,eysenbach2018leave,srinivasan2020learning}, can be overly conservative in this scenario.

\vspace{-1mm}
\subsubsection{Motivating Example}
\vspace{-1mm}

Let us use an example to explain why the advantage-based rule works. Suppose we have a baseline policy $\mu$ that is safe starting at the intial state of the MDP $\MM$ (i.e., $\overline{V}^\mu(d_0)$ is small).
We can use $\mu$ as the backup policy and construct an intervention rule $\GG = (\overline{Q}^\mu, \mu, 0)$, where we recall $\overline{Q}^\mu$ denotes the state-action value of $\mu$ for the cost-based MDP $\overline{\MM}$.
Because the intervention set in \eqref{eq:intervention set} only allows actions that are no more unsafe than than backup policy $\mu$ in execution, intuitively we see that the intervenend policy $\pi'$ will be at least as safe as the baseline policy $\mu$.
Indeed, we can quickly verify this by the performance difference lemma~(\cref{lm:pdl}):
${\overline{V}^{\pi'}(d_0) = \overline{V}^{\mu}(d_0) + \frac{1}{1-\gamma}\E_{d^{\pi'}}[\overline{A}^\mu(s,a)] \leq \overline{V}^{\mu}(d_0)}$.
Importantly, in this example, we see that the safety of $\pi'$ is ensured without requiring $\overline{V}^{\mu}(s)$ to be small for any $s\in\SS$, but only starting from states sampled from  $d_0$.

\vspace{-1mm}
\subsubsection{General Rules} \label{sec:gr}
\vspace{-1mm}

We now generalize the above motivating example to a class of \emph{admissible} intervention rules.
\begin{definition}[$\sigma$-Admissible Intervention Rule] \label{def:admissible intervention}
We say an intervention rule $\GG=(\overline{Q},\mu,\eta)$ is \emph{$\sigma$-admissible} if, for some $\sigma \geq 0$, the following holds for all $s\in\safeset$ and $a\in\AA$:
    \begin{align} 
        & \overline{Q}(s,a) \in [0,\gamma] \label{eq:admissible intervention (upper bound)} \\
        & \overline{Q}(s,a) + \sigma \geq c(s,a) + \gamma \E_{s'|s,a}[ \overline{Q}(s',\mu) ]  \label{eq:admissible intervention (conservative)},
    \end{align}
    where we recall $c(s,a) = \one\{ s=\violation \}$.
    If the above holds with $\sigma=0$, we say $\GG$ is \emph{admissible}.
\end{definition}

One can verify that the previous example $\GG = (\overline{Q}^\mu, \mu, 0)$ is admissible.
But more generally, an admissible intervention rule with a backup policy $\mu$ can use $\overline{Q}\neq\overline{Q}^\mu$.
In a sense, admissibility (with $\sigma = 0$) only needs $\overline{Q}$ to be a conservative version of $\overline{Q}^\mu$, because $\overline{Q}^\mu(s,a) = c(s,a) + \gamma \E_{s' |s,a}[ \overline{Q}^\mu(s,\mu)]$ and \eqref{eq:admissible intervention (conservative)} uses an upper bound; the $\sigma$ term is a slack to allow for non-conservative $\overline{Q}$. 
More precisely, we have the following relationship. 
\begin{restatable}{proposition}{PessimsticEstimate} \label{th:pessimistic estimate}
If $\GG=(\overline{Q},\mu,\eta)$ is $\sigma$-admissible,
then $\overline{Q}^\mu(s,a) \leq \overline{Q}(s,a) + \frac{\sigma}{1-\gamma}$ for all $s \in \safeset$ and $a \in \AA$.
\end{restatable}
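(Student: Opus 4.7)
The plan is to interpret the admissibility inequality \eqref{eq:admissible intervention (conservative)} as a one-step ``approximate Bellman inequality'' for the cost-based MDP $\overline{\MM}$ under the backup policy $\mu$, and then compare it against the exact Bellman equation satisfied by $\overline{Q}^\mu$.

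First, I would recall that $\overline{Q}^\mu$ satisfies the Bellman equation $\overline{Q}^\mu(s,a) = c(s,a) + \gamma \E_{s'|s,a}[\overline{Q}^\mu(s',\mu)]$. Subtracting \eqref{eq:admissible intervention (conservative)} from this gives, for every $(s,a) \in \safeset \times \AA$,
\begin{equation*}
\Delta(s,a) \coloneqq \overline{Q}^\mu(s,a) - \overline{Q}(s,a) \leq \sigma + \gamma \E_{s'|s,a}\bigl[\overline{Q}^\mu(s',\mu) - \overline{Q}(s',\mu)\bigr].
\end{equation*}
To make sense of $\overline{Q}(s',\mu)$ for $s' \in \unsafeset$, I extend $\overline{Q}$ to the unsafe states by the natural convention $\overline{Q}(\violation, \cdot) = 1$ and $\overline{Q}(\absorbing, \cdot) = 0$, matching the true values of $\overline{Q}^\mu$ there; hence $\Delta \equiv 0$ on $\unsafeset \times \AA$.

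Next, by the upper bound \eqref{eq:admissible intervention (upper bound)} on $\overline{Q}$ and the boundedness $\overline{Q}^\mu \in [0,1]$, the quantity $M \coloneqq \sup_{(s,a) \in \SS \times \AA} \Delta(s,a) \in [0,1]$ is finite. Plugging $\E_{s'|s,a}[\Delta(s',\mu)] \leq M$ into the recursion yields $\Delta(s,a) \leq \sigma + \gamma M$ for $(s,a) \in \safeset \times \AA$, while $\Delta \equiv 0$ on $\unsafeset \times \AA$; taking the supremum on the left gives $M \leq \sigma + \gamma M$, hence $M \leq \sigma / (1-\gamma)$, which is precisely the asserted bound.

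The main subtlety is the interpretation of $\overline{Q}(s',\mu)$ when $s' \in \unsafeset$: although the paper defines $\overline{Q}$ only on $\safeset \times \AA$, the absorbing structure of $\unsafeset$ together with $c$ forces a unique natural extension. With this extension, the argument is equivalent to observing that the Bellman operator $T$ for $\mu$ in $\overline{\MM}$ is a monotone $\gamma$-contraction with fixed point $\overline{Q}^\mu$, that admissibility reads $T\overline{Q} \leq \overline{Q} + \sigma$, and that iterating $k$ times with monotonicity gives $T^k \overline{Q} \leq \overline{Q} + \sigma \sum_{i=0}^{k-1} \gamma^i$; letting $k \to \infty$ recovers $\overline{Q}^\mu \leq \overline{Q} + \sigma/(1-\gamma)$.
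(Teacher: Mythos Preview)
Your proposal is correct and follows essentially the same approach as the paper: both exploit the admissibility inequality as an approximate Bellman recursion for $\mu$ in $\overline{\MM}$ and iterate/contract it to obtain the $\sigma/(1-\gamma)$ bound. The paper presents this by explicitly unrolling the inequality $\overline{Q}(s,a) \geq c(s,a) + \gamma \E_{s'|s,a}[\overline{Q}(s',\mu)] - \sigma$ repeatedly (accumulating $\sigma(1+\gamma+\gamma^2+\cdots)$), whereas you equivalently pass to the supremum $M$ of $\Delta = \overline{Q}^\mu - \overline{Q}$ and solve $M \leq \sigma + \gamma M$; you yourself note at the end that the two are the same contraction argument. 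Your treatment is slightly more careful about the extension of $\overline{Q}$ to $\unsafeset$ (the paper uses the identical extension $\overline{Q}(\violation,\cdot)=1$, $\overline{Q}(\absorbing,\cdot)=0$ elsewhere, in the proof of \cref{th:shielded policy is safe}, but leaves it implicit here).
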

The condition in \eqref{eq:admissible intervention (conservative)} is also closely related to the concept and theory of improvable heuristics in \citep{cheng2021heuristicguided} (i.e., we can view the $\overline{Q}(s,\mu)$ as a heurisitic for safety), where the authors show such $\overline{Q}$ can be constructed by pessimistic offline RL methods.

\vspace{-1mm}
\paragraph{Examples}
We discuss several ways to construct admissible intervention rules.
From \cref{def:admissible intervention}, it is clear that if $\GG=(\overline{Q},\mu,\eta)$ is $\sigma$-admissible,  then $\GG$ is also $\sigma'$-admissible for any $\sigma'\geq\sigma$ (in particular, $(\overline{Q},\mu,\eta)$ is $\gamma$-admissible if $\overline{Q}(s,a) \in [0,\gamma]$).
So we only discuss the minimal $\sigma$.

\begin{restatable}[Intervention Rules]{proposition}{ExampleInterventionRule} \label{th:example admissible intervention rules}
    The following are true.
    \begin{enumerate}[leftmargin=.3cm,labelsep=0cm]
        \vspace{-1.5mm}
        \item \textbf{Baseline policy}:
            Given a baseline policy $\mu$ of $\MM$, $\GG = (\overline{Q}^\mu,\mu, \eta)$ or $\GG = (\overline{Q}^\mu,\mu^+, \eta)$ is admissible, where $\mu^+$ is the greedy policy that treats $\overline{Q}^\mu$ as a cost.
        
        \item \textbf{Composite intervention}:
            Given $K$ intervention rules $\{\GG_k\}_{k=1}^K$, where each $\GG_k = (\overline{Q}_k, \mu_k, \eta)$ is $\sigma_k$-admissible.
            Define $\overline{Q}_{\min}(s,a) = \min_k \overline{Q}_k(s,a)$ and let $\mu_{\min}$ be the greedy policy w.r.t. $\overline{Q}_{\min}$, and $\sigma_{\max} = \max_k \sigma_k$.
            Then, $\GG = (\overline{Q}_{\min}, \mu_{\min}, \eta)$ is $\sigma_{\max}$-admissible.
        
        \item \textbf{Value iteration}:
            Define $\overline\TT$ as $\overline\TT Q(s, a) \coloneqq c(s,a) + \gamma \E_{s' \sim P | s,a} [\min_{a'} Q(s', a')]$.
            If $\GG=(\overline{Q},\mu,\eta)$ is $\sigma$-admissible, then $\GG^k = (\overline{\TT}^k \overline{Q}, \mu^k, \eta)$ is $\gamma^k \sigma$-admissible, where $ \mu^k$ is the greedy policy that treats $\overline{\TT}^k \overline{Q}$ as a cost.
        
        \item \textbf{Optimal intervention}:
            Let $\overline\pi^*$ be an optimal policy for $\overline\MM$, and let $\overline{Q}^*$ be the corresponding state-action value function.
        Then $\GG^*=(\overline{Q}^*, \overline{\pi}^*, \eta)$ is admissible.
        
        \item \textbf{Approximation}:
            For $\sigma$-admissible $\GG = (\overline{Q}, \mu, \eta)$, consider $\hat{Q}$ such that $\hat{Q}(s,a) \in [0,\gamma]$ for all $s\in\safeset$ and $a\in\AA$.
            If $\|\hat{Q}- \overline{Q}\|_\infty \leq \delta$,
            then $\hat{\GG} = (\hat{Q}, \mu, \eta)$ is $(\sigma+(1+\gamma)\delta)$-admissible. 
    \end{enumerate}
\end{restatable}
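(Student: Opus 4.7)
The plan is to verify the two admissibility inequalities in \cref{def:admissible intervention} for each of the five constructions separately. The range condition $\overline{Q}(s,a) \in [0,\gamma]$ on $\safeset$ will follow in every case from the fact that $c(s,a) = 0$ on $\safeset$, so the Bellman-style expression $c(s,a) + \gamma \, \E[\cdot]$ lies in $[0,\gamma]$ whenever the inner quantity is in $[0,1]$; the substantive work is therefore to establish the conservativeness condition \eqref{eq:admissible intervention (conservative)}.

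Parts 1, 4, and 5 are essentially direct. For the baseline rule $\GG = (\overline{Q}^\mu, \mu, \eta)$, the Bellman equation $\overline{Q}^\mu(s,a) = c(s,a) + \gamma \E_{s'|s,a}[\overline{Q}^\mu(s', \mu)]$ gives \eqref{eq:admissible intervention (conservative)} with equality and $\sigma = 0$; swapping $\mu$ for the greedy $\mu^+$ only decreases the right-hand side, since $\overline{Q}^\mu(s', \mu^+) \leq \overline{Q}^\mu(s', \mu)$. Part 4 (optimal intervention) is the same argument applied to the Bellman optimality equation $\overline{Q}^*(s,a) = c(s,a) + \gamma \E_{s'|s,a}[\min_{a'} \overline{Q}^*(s', a')]$. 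For part 5, I would apply the uniform bound $\|\hat{Q} - \overline{Q}\|_\infty \leq \delta$ to both sides of the admissibility condition for $\GG$: a slack of $\delta$ appears from $\overline{Q}(s,a) \leq \hat{Q}(s,a) + \delta$ on the left, and a further $\gamma\delta$ appears from $\overline{Q}(s', \mu) \geq \hat{Q}(s', \mu) - \delta$ inside the expectation, combining with $\sigma$ to yield the stated $\sigma + (1+\gamma)\delta$.

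For part 2 (composite intervention), I would select $k^*(s,a) \in \argmin_k \overline{Q}_k(s,a)$ pointwise and chain
\[
    \overline{Q}_{\min}(s,a) + \sigma_{\max} \;\geq\; \overline{Q}_{k^*}(s,a) + \sigma_{k^*} \;\geq\; c(s,a) + \gamma \, \E_{s'|s,a}\!\left[\overline{Q}_{k^*}(s', \mu_{k^*})\right],
\]
then close the loop via $\overline{Q}_{k^*}(s', \mu_{k^*}) \geq \min_{a'} \overline{Q}_{k^*}(s', a') \geq \min_{a'} \overline{Q}_{\min}(s', a') = \overline{Q}_{\min}(s', \mu_{\min})$, where the pointwise inequality $\overline{Q}_{k^*} \geq \overline{Q}_{\min}$ survives under the inner $\min_{a'}$ and the final equality uses that $\mu_{\min}$ is greedy w.r.t.\ $\overline{Q}_{\min}$.

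Part 3 (value iteration) is the main technical obstacle, and I would handle it by induction on $k$. Writing $Q_k \coloneqq \overline{\TT}^k \overline{Q}$, the inductive hypothesis is that $\GG^{k-1}$ is $\gamma^{k-1}\sigma$-admissible; since $\mu^{k-1}$ is greedy w.r.t.\ $Q_{k-1}$, this rearranges (using $Q_{k-1}(s,\mu^{k-1}) = \min_a Q_{k-1}(s,a)$) to the pointwise contraction bound $Q_k(s,a) - Q_{k-1}(s,a) \leq \gamma^{k-1}\sigma$. Substituting the identity $Q_k(s,a) = c(s,a) + \gamma \E_{s'|s,a}[Q_{k-1}(s', \mu^{k-1})]$ into \eqref{eq:admissible intervention (conservative)} for $\GG^k$, and using greediness of $\mu^k$ w.r.t.\ $Q_k$ (which yields $Q_k(s', \mu^k) \leq Q_k(s', \mu^{k-1})$), reduces the required inequality to
\[
    \gamma^k \sigma \;\geq\; \gamma \, \E_{s'|s,a} \E_{a' \sim \mu^{k-1}(\cdot|s')}\!\left[Q_k(s', a') - Q_{k-1}(s', a')\right],
\]
which follows directly from the contraction bound. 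A bookkeeping point I will make explicit is that $s'$ may lie in $\unsafeset$, where I adopt the extension $\overline{Q}(\violation, \cdot) = 1$ and $\overline{Q}(\absorbing, \cdot) = 0$; both are fixed points of $\overline{\TT}$, so the contraction bound holds trivially there. The $k=0$ base case uses the small observation that replacing the given $\mu$ by the greedy $\mu^0$ in $(\overline{Q}, \mu, \eta)$ preserves $\sigma$-admissibility, since $\overline{Q}(s', \mu^0) \leq \overline{Q}(s', \mu)$.
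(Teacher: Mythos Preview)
Your proposal is correct and follows essentially the same approach as the paper's proof: both verify the two admissibility conditions case by case, with Parts 1, 4, and 5 handled via the Bellman equation and $\ell_\infty$ perturbation, Part 2 via the pointwise-minimum chain of inequalities, and Part 3 via the contraction bound $Q_k \leq Q_{k-1} + \gamma^{k-1}\sigma$. The only cosmetic difference is that in Part 3 the paper first proves the contraction bound by a separate induction and then deduces admissibility, whereas you observe that admissibility of $\GG^{k-1}$ (with greedy $\mu^{k-1}$) is \emph{equivalent} to that contraction bound and induct directly on admissibility; your explicit handling of $s' \in \unsafeset$ via the fixed-point extension is a detail the paper leaves implicit.
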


\cref{th:example admissible intervention rules} provides recipes for constructing $\sigma$-admissible intervention rules for safe RL, such as leveraging existing baseline policies in a system (Examples 1 and 2) and performing short-horizon planning (Example 3; namely model-predictive control \citep{bertsekas2017dynamic}).
Moreover, \cref{th:example admissible intervention rules} hints that we can treat designing intervention rules as finding the optimal state-action value function $\overline{Q}^*$ in the cost-based MDP $\overline{\MM}$ (Example 4). 
Later in \cref{sec:analysis of intervention rules}, we prove that this intuition is indeed correct: among all intervention rules that provide optimal safety, the rule $\GG^*=(\overline{Q}^*, \overline{\pi}^*, 0)$ provides the largest free space for data collection (i.e., small $P_\GG(\pi^*)$ in \cref{th:informal theorem}) among the safest intervention rules.
Finally, \cref{th:example admissible intervention rules} shows that an approximation of any $\sigma$-admissible intervention rule (such as one learned from data or inferred from an inaccurate model, see \citep{cheng2021heuristicguided}) is also a reasonable intervention rule (Example 5). 
As learning continues in \alg, we can use the newly collected data from $\MM$ to refine our estimate of the ideal $\overline{Q}$, such as by performing additional policy evaluation for $\mu$ or policy optimization to find $\overline{Q}^*$ of the cost-based MDP $\overline{\MM}$.

\vspace{-1mm}
\paragraph{General Backup Policies}
To conclude this section, we briefly discuss how to extend the above results to work with general backup policies that may take actions outside $\AA$ (i.e., the actions the learner policy aims to use), as in \cite{turchetta2020safe}. For example, such a backup policy can be implemented through an external kill switch in a robotics system.
For \alg's theoretical guarantees to hold in this case, we require one extra assumption: 
for all $(s,a)\in\II$ that can be reached from $d_0$ with some policy, there must be some $a'\in\AA$ such that $\overline{A}(s,a') = \overline{Q}(s,a')-\overline{Q}(s,\mu) \leq \eta$. 
In other words, for every state-action we can reach from $d_0$ that will be overridden, there must an alternative action \emph{in the agent's action space} $\AA$ that keeps the agent's policy from being intervened.
This condition is a generalization of \cref{as:viable action} introduced later for our analysis (a condition we call \emph{partial}), which is essential to the unconstrained policy optimization reduction in \alg (\cref{sec:analysis of absorbing MDPs}).
Note that while this condition holds trivially when backup policy $\mu$ takes only actions in $\AA$, generally the validity of this condition depends on the details of $\mu$ and transition dynamics $P$.

\vspace{-1mm}
\subsection{Absorbing MDP} \label{sec:absorbing mdp}
\vspace{-1mm}

\alg performs policy optimization by running a base RL algorithm $\FF$ to solve a new unconstrained MDP $\widetilde{\MM}$. In this section, we define $\widetilde{\MM}$ and discuss how to simulate experiences of $\pi$ in $\widetilde{\MM}$ by running the shielded policy $\pi'= \GG(\pi)$ in the original MDP $\MM$.

Given the MDP $\MM=(\SS,\AA,P,r,\gamma)$ and the intervention set $\II$ in~\eqref{eq:intervention set}, we define $\widetilde{\MM}=(\widetilde{\SS},\AA,\widetilde{P},\widetilde{r},\gamma)$ as follows:
Let $\intervened$ denote an absorbing state and $\widetilde{R}\leq0$ be some problem-independent constant. The new MDP $\widetilde{\MM}$ has the state space $\widetilde{\SS} = \SS\cup\{\intervened\}$ and modified dynamics and reward,
\begin{align}
    \widetilde{r}(s,a) &=
    \begin{cases}
        \widetilde R, & (s, a) \in \II \\
        0, & s = \intervened \\
        r(s,a), & \text{otherwise}
    \end{cases} \label{eq:modified reward} \\
    \widetilde{\PP}(s'|s,a) &=
    \begin{cases}
        \one\{s' = \intervened\}, & (s,a) \in \II ~\text{or}~ s = \intervened \\
        \PP(s'|s,a), &\text{otherwise.}
    \end{cases} \label{eq:modified dynamics}
\end{align}
Since $\intervened$ is absorbing, given a policy $\pi$ defined on $\MM$, without loss of generality we extend its definition on $\widetilde{\MM}$ by setting $\pi(a | \intervened)$ to be the uniform distribution over $\AA$.
A simple example of this construction is shown in~\cref{fig:toy example}.

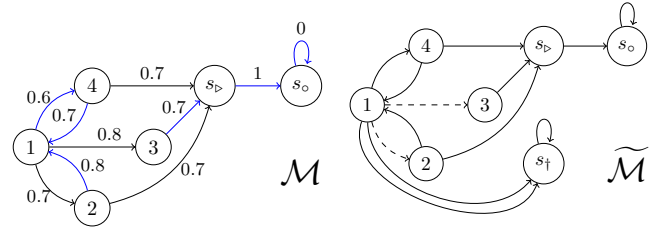
\begin{figure}[t]
    \centering
    \begin{subfigure}{0.23\textwidth}
        \centering
        \resizebox{1.1\textwidth}{!}{%
        \begin{tikzpicture}[->, node distance=1.5cm]
    \node[draw, circle] (1) {$1$};
    \node[draw, circle, below right of=1] (2) {$2$};
    \node[draw, circle, above right of=2] (3) {$3$};
    \node[draw, circle, above right of=1] (4) {$4$};
    \node[draw, circle, above right of=3] (violation) {\violation};
    \node[draw, circle, right of=violation] (absorbing) {\absorbing};
    \node[below of=absorbing] (M) {{\LARGE $\MM$}};

    \path[every path/.style={blue}]
    (1) edge[bend left] node [left, pos=0.7] {{\footnotesize {\color{black} $0.6$}}} (4)
    (2) edge [bend right] node [right] {{\footnotesize {\color{black} $0.8$}}} (1)
    (3) edge node [left, pos=0.8] {{\footnotesize {\color{black} $0.7$}}} (violation)
    (4) edge [bend left] node [left, pos=0.2] {{\footnotesize {\color{black} $0.7$}}} (1)
    (violation) edge node [above] {{\footnotesize {\color{black} $1$}}} (absorbing)
    (absorbing) edge [loop above] node {{\footnotesize {\color{black} $0$}}} (absorbing);

    \path
    (1) edge node [above, pos=0.7] {{\footnotesize $0.8$}} (3)
    (1) edge[bend right] node [left, pos=0.7] {{\footnotesize $0.7$}} (2)
    (2) edge [bend right] node [right] {{\footnotesize $0.7$}} (violation)
    (4) edge node [above] {{\footnotesize $0.7$}} (violation);
\end{tikzpicture}
        }
    \end{subfigure}
    \hfill
    \begin{subfigure}{0.23\textwidth}
        \centering
        \resizebox{1.1\textwidth}{!}{%
        \begin{tikzpicture}[->, node distance=1.5cm]
    \node[draw, circle] (1) {$1$};
    \node[draw, circle, below right of=1] (2) {$2$};
    \node[draw, circle, above right of=2] (3) {$3$};
    \node[draw, circle, above right of=1] (4) {$4$};
    \node[draw, circle, above right of=3] (violation) {\violation};
    \node[draw, circle, right of=violation] (absorbing) {\absorbing};
    \node[draw, circle, below right of=3] (intervened) {\intervened};
    \node[right of=intervened] (M) {{\LARGE $\widetilde\MM$}};

    \path[every path/.style={dashed}]
        (1) edge[bend right] node [right] {} (2)
        (1) edge node [right] {} (3);

    \path
        (1) edge[bend right=70] node [right] {} (intervened)
        (1) edge[bend right=90] node [right] {} (intervened);

    \path
        (1) edge[bend left] node [right] {} (4)
        (2) edge [bend right] node [left] {} (1)
        (3) edge node [right] {} (violation)
        (4) edge [bend left] node [left] {} (1)
        (violation) edge node [right] {} (absorbing)
        (absorbing) edge [loop above] node {} (absorbing)
        (intervened) edge [loop above] node {} (intervened)
        (2) edge [bend right] node [right] {} (violation)
        (4) edge node [right] {} (violation);
\end{tikzpicture}
        }
    \end{subfigure}
    \caption{A simple example of the construction of $\widetilde\MM$ from $\MM$ using advantage-based intervention given by some $\GG = (\overline{Q}, \mu, \eta)$. In $\MM$, the transitions are deterministic, and the blue arrows correspond to actions given by $\mu$. The edge weights correspond to $\overline Q$, and $\GG$ can be verified to be $0.25$-admissible when $\gamma = 0.9$. The surrogate MDP $\widetilde\MM$ is formed upon intervention with $\eta = 0.05$. The transitions $1 \rightarrow 2$ and $1 \rightarrow 3$ are replaced with transitions to the absorbing state $\intervened$.}
    \label{fig:toy example}
\vspace{-2mm}
\end{figure}

Compared with the original $\MM$, the new MDP $\widetilde{\MM}$ has more absorbing state-action pairs and assigns lower rewards to them.
When the agent takes some $(s,a)\in\II$ in $\widetilde{\MM}$, it goes to an absorbing state $\intervened$ and receives a \emph{non-positive} reward.
Thus, the new MDP $\widetilde\MM$ gives larger penalties for taking intervened state-actions than for going into $\unsafeset$, where we only receive zero reward.
This design ensures that any nearly-optimal policy of $\widetilde\MM$ will (when run in $\MM$) have high reward and low probability of visiting intervened state-actions.
As we will see, as long as $\GG$ provides \emph{safe} shielded polices, solving $\widetilde{\MM}$ will lead to a safe policy with potentially good performance in the original MDP $\MM$ even after we lift the intervention.

To simulate experiences of a policy $\pi$ in $\widetilde\MM$, we simply run $\pi' = \GG(\pi)$ in the original MDP $\MM$ and collect samples until the intervention triggers (if at all).
Specifically, suppose running $\pi'$ in $\MM$ generates a trajectory $\xi = (s_0, a_0, \dots, s_T, a_T', \dots)$, where $T$ is the time step of intervention and $a_T'$ is the first action given by the backup policy $\mu$.
Let $a_T$ be the corresponding action from $\pi$ that was overridden.
We construct the trajectory $\widetilde\xi$ that would be generated by running ${\color{blue} \pi}$ in ${\color{blue} \widetilde\MM}$ by setting $\widetilde\xi = (s_0, a_0, \dots, s_T, a_T, \widetilde{s}_{T+1}, \widetilde{a}_{T+1}, \dots)$, where $\widetilde{s}_\tau = \intervened$ and $\widetilde{a}_\tau$ is arbitrary for any $\tau \geq t+1$.
This is valid since the two MDPs $\MM$ and $\widetilde\MM$ share the same dynamics until the intervention happens at time step $T$.

\vspace{-1mm}
\subsection{Theoretical Analysis} \label{sec:main result}
\vspace{-1mm}

We state the main theoretical result of \alg, which includes the informal \cref{th:informal theorem} as a special case.
\begin{restatable}[Performance and Safety Guarantee at Deployment]{theorem}{SuboptimalityBound} \label{th:main theorem}
    Let $\widetilde{R} = -1$ and $\GG$ be $\sigma$-admissible.
    If $\hat\pi$ is an $\varepsilon$-suboptimal policy for $\widetilde{\MM}$, then, for any comparator policy $\pi^*$, the following performance and safety guarantees hold for $\hat\pi$ in $\MM$:
\begin{align*}
    V^{\pi^*}(d_0) - V^{\hat\pi}(d_0)
    &\leq \frac{2}{1-\gamma} P_\GG(\pi^*) + \varepsilon \\
    \overline{V}^{\hat\pi}(d_0)
    &\leq \overline{Q}(d_0, \mu) + \frac{\min\{\sigma + \eta, 2\gamma\}}{1-\gamma} + \varepsilon,
\end{align*}
    where $P_{\GG}(\pi^*) \coloneqq (1-\gamma) \sum_{h=0}^\infty \gamma^h  \prob(\xi^h \cap \II \neq \varnothing \mid \pi^*, \MM)$ is the probability that $\pi^*$ visits $\II$ in $\MM$.
\end{restatable}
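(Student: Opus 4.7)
The plan is to prove both bounds by pivoting through the surrogate MDP $\widetilde{\MM}$, in which $\hat\pi$ is $\varepsilon$-optimal by assumption. For the performance bound I would split
\[
V^{\pi^*}(d_0) - V^{\hat\pi}(d_0) = \bigl[V^{\pi^*}(d_0) - \widetilde{V}^{\pi^*}(d_0)\bigr] + \bigl[\widetilde{V}^{\pi^*}(d_0) - \widetilde{V}^{\hat\pi}(d_0)\bigr] + \bigl[\widetilde{V}^{\hat\pi}(d_0) - V^{\hat\pi}(d_0)\bigr].
\]
The middle term is at most $\varepsilon$ by $\varepsilon$-suboptimality of $\hat\pi$ in $\widetilde{\MM}$. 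For the third, I would couple $\hat\pi$'s trajectory in $\MM$ with its trajectory in $\widetilde{\MM}$: they agree until the first intervention time $\tau$, and afterwards the rewards in $\widetilde{\MM}$ ($\widetilde{R}=-1$ at $\tau$ and $0$ at $\intervened$) are dominated pointwise by the $[0,1]$-rewards in $\MM$, so this term is $\leq 0$. The first term uses the same coupling under $\pi^*$: the per-step reward gap is at most $|r-\widetilde{R}|\leq 2$ from step $\tau$ onward, and discounted summation is controlled by $\E[\gamma^\tau\one\{\tau<\infty\}]$; recognizing $P_\GG(\pi^*)$ as (a rescaling of) this discounted visit probability yields the claimed $\tfrac{2}{1-\gamma}P_\GG(\pi^*)$ term.

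For the safety bound my plan is first to control the shielded policy $\hat\pi'=\GG(\hat\pi)$ and then to relate it back to $\hat\pi$. Setting $f(s)\coloneqq\overline{V}^{\hat\pi'}(s)-\overline{Q}(s,\mu)$ and unfolding one step of the cost Bellman operator for $\hat\pi'$, I would apply $\sigma$-admissibility (which replaces $c(s,a)+\gamma\,\E_{s'\mid s,a}[\overline{Q}(s',\mu)]$ by $\overline{Q}(s,a)+\sigma$) and use the observation that $\E_{a\sim\hat\pi'\mid s}[\overline{A}(s,a)]\leq\eta$---the $\mu$-component contributes zero since $\overline{A}(s,\mu)=0$, while the non-intervened $\pi$-component satisfies $\overline{A}(s,a)\leq\eta$ by the definition of $\II$ in~\eqref{eq:intervention set}. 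This gives the recursion $f(s)\leq\sigma+\eta+\gamma\,\E_{s'}[f(s')]$, which iterates to $\overline{V}^{\hat\pi'}(d_0)\leq\overline{Q}(d_0,\mu)+(\sigma+\eta)/(1-\gamma)$. The alternative $2\gamma/(1-\gamma)$ term in the $\min$ follows by combining the trivial bound $\overline{V}^{\pi}(s)\leq\gamma$ for $s\in\safeset$ (cost $c$ is charged at most once and no earlier than step $1$) with $\overline{Q}(d_0,\mu)\geq 0$ and a factor-of-two slack from the $\hat\pi$-vs-$\hat\pi'$ comparison below.

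Finally I would relate $\overline{V}^{\hat\pi}(d_0)$ back to $\overline{V}^{\hat\pi'}(d_0)$ by coupling the two policies in $\MM$ via rejection sampling (they agree in law until the first time $\pi$ proposes an action in $\II$), so that the cost difference telescopes into an expectation of $\gamma^\tau$ times a bounded residual. The discounted intervention probability of $\hat\pi$ in $\widetilde{\MM}$ is in turn controlled by $\varepsilon$: each intervention costs $\widetilde{R}=-1$, so a near-optimal $\hat\pi$ cannot trigger interventions often without losing a matching amount of $\widetilde{V}$; comparing $\widetilde{V}^{\hat\pi}$ against a no-intervention comparator such as the greedy $\mu^+$ (whose value in $\widetilde{\MM}$ coincides with its value in $\MM$) converts this loss into the additive $\varepsilon$ slack. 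I expect this final step---translating reward suboptimality in $\widetilde{\MM}$ into a cost bound in $\overline{\MM}$---to be the main technical obstacle, since it requires carefully reconciling the two MDP formulations and leveraging the specific choice $\widetilde{R}=-1$.
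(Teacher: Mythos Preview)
Your plan follows the same architecture as the paper's proof: the three-term split for the performance bound, and the two-stage safety argument (first bound $\overline{V}^{\hat\pi'}$ via admissibility and $\overline{A}(s,\hat\pi')\le\eta$, then transfer to $\overline{V}^{\hat\pi}$ by controlling the discounted intervention probability). Your coupling arguments are equivalent to the paper's value-offset lemma (\cref{th:value offset}), and your one-step recursion on $f(s)=\overline{V}^{\hat\pi'}(s)-\overline{Q}(s,\mu)$ is the performance-difference lemma unwound; just remember to extend $\overline{Q}$ to $\unsafeset$ via $\overline{Q}(\violation,\cdot)=1$ and $\overline{Q}(\absorbing,\cdot)=0$ so the recursion closes there. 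The $2\gamma/(1-\gamma)$ branch is indeed trivial once you note $\overline{V}^{\hat\pi}(d_0)\le\gamma\le 2\gamma/(1-\gamma)$ for $d_0$ supported on $\safeset$; the paper instead sharpens the same recursion by observing that $\GG$ is always $\gamma$-admissible and $\overline{A}\le\gamma$ on $\safeset$, but either route works.

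The one genuine gap is in your last step. You correctly identify that you need $P_\GG(\hat\pi)\le\varepsilon$, and the natural decomposition is $\widetilde{V}^{\hat\pi}(d_0)=\widetilde{J}_+^{\hat\pi}-P_\GG(\hat\pi)$, where $\widetilde{J}_+^{\hat\pi}$ is the expected discounted reward collected \emph{before} the first intervention. Combined with $\varepsilon$-suboptimality this gives $P_\GG(\hat\pi)\le \widetilde{J}_+^{\hat\pi}-\widetilde{V}^*(d_0)+\varepsilon$, so you need $\widetilde{J}_+^{\hat\pi}\le\widetilde{V}^*(d_0)$. Comparing to $\mu^+$ does \emph{not} give this: $\mu^+$ avoids intervention, but its reward value $\widetilde{V}^{\mu^+}(d_0)=V^{\mu^+}(d_0)$ need not dominate the pre-intervention return of $\hat\pi$. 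The paper's fix uses the \emph{partial} property of $\II$ (\cref{th:partial}): since every intervened state has a non-intervened action available, one can redirect $\hat\pi$'s mass on $\II$ to obtain a policy $\hat\pi_f$ that never enters $\II$ and whose occupancy on $(\safeset\times\AA)\setminus\II$ pointwise dominates that of $\hat\pi$ (\cref{lm:safer policy}); hence $\widetilde{V}^{\hat\pi_f}(d_0)\ge\widetilde{J}_+^{\hat\pi}$ and therefore $\widetilde{V}^*(d_0)\ge\widetilde{J}_+^{\hat\pi}$. This is precisely the obstacle you flagged, and partialness of $\II$ is the missing ingredient.
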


\cref{th:main theorem} shows that, when the base RL algorithm $\FF$ finds an $\varepsilon$-suboptimal policy $\hat{\pi}$ in $\widetilde{\MM}$, this policy $\hat{\pi}$ is also close to $\varepsilon$-suboptimal in the CMDP in~\eqref{eq:CMDP formulation}, as long as running the comparator policy $\pi^*$ in $\MM$ will result in low probability of visiting state-actions that would be intervened by $\GG$  (i.e., $P_\GG(\pi^*)$ is small).
In addition, the policy $\hat{\pi}$ is almost as safe as the backup policy $\mu$, since $\overline{Q}(d_0,\mu)$ can be viewed as an upper bound of $\overline{Q}^\mu(d_0,\mu)$.
The safety deterioration can be made small when the suboptimality $\varepsilon$, intervention threshold $\eta$, and imperfect admissibility $\sigma$ of $\GG$ are small.
The proof of \cref{th:main theorem} follows directly from \cref{th:shielded policy is safe} and \cref{th:performance and safety} below, which are main properties of the advantage-based intervention rules and the absorbing MDPs in \alg.
We now discuss these properties in more detail.

\vspace{-1mm}
\subsubsection{Intervention Rules} \label{sec:analysis of intervention rules}
\vspace{-1mm}

First, we show that  the shielded policy $\pi'$ produced by a $\sigma$-admissible intervention rule $\GG=(\overline{Q},\mu,\eta)$ has a small unsafe cost if backup policy $\mu$ has a small cost.
\begin{restatable}[Safety of Shielded Policy]{theorem}{IntervenedPolicyIsSafe}
    \label{th:shielded policy is safe}
    Let $\GG=(\overline{Q}, \mu, \eta)$ be $\sigma$-admissible as per \cref{def:admissible intervention}.
    For any policy $\pi$, let $\pi' = \GG(\pi)$. Then,
    \begin{align} \label{eq:max unsafety}
        \overline{V}^{\pi'}(d_0) \leq  \overline{Q}(d_0,\mu) + \frac{\min\{\sigma+\eta,2\gamma\} }{1-\gamma} .
    \end{align}
\end{restatable}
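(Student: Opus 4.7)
The plan is to relate $\overline{V}^{\pi'}(d_0)$ to $\overline{Q}(d_0,\mu)$ via a standard telescoping identity, then bound each per-step error using (i) $\sigma$-admissibility of $\overline{Q}$ and (ii) the operational definition of $\pi'$. A preliminary bookkeeping step is to extend $\overline{Q}$ from $\safeset\times\AA$ to all of $\SS\times\AA$ by setting $\overline{Q}(\violation,\cdot)\coloneqq 1$ and $\overline{Q}(\absorbing,\cdot)\coloneqq 0$; using $c(\violation,\cdot)=1$, $c(\absorbing,\cdot)=0$, and the absorbing structure of $\unsafeset$, one checks directly that \eqref{eq:admissible intervention (conservative)} continues to hold on $\unsafeset$ with the same $\sigma$.

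With this extension, for any trajectory $\xi=(s_0,a_0,s_1,a_1,\dots)$ drawn from $\rho^{\pi'}$ I telescope the discounted cost as
\[\textstyle \sum_{t=0}^\infty \gamma^t c(s_t,a_t) \;=\; \overline{Q}(s_0,\mu) \;+\; \sum_{t=0}^\infty \gamma^t \delta_t, \quad \delta_t \coloneqq c(s_t,a_t) + \gamma\,\overline{Q}(s_{t+1},\mu) - \overline{Q}(s_t,\mu),\]
where the tail $\gamma^T \overline{Q}(s_T,\mu)\to 0$ by boundedness of the extended $\overline{Q}$. Taking $\rho^{\pi'}$-expectation and averaging over $d_0$ reduces the claim to bounding $\E[\delta_t\mid s_t]$, since $\E_{\rho^{\pi'}}[\sum_t \gamma^t \delta_t]=\overline{V}^{\pi'}(d_0)-\overline{Q}(d_0,\mu)$.

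By $\sigma$-admissibility applied pointwise, $\E[\delta_t\mid s_t,a_t]\leq \overline{A}(s_t,a_t)+\sigma$, so the task reduces to bounding $\E_{a\sim\pi'(\cdot|s_t)}[\overline{A}(s_t,a)]$. The key step is to decompose $\pi'$ into its pass-through branch (actions $a\sim\pi$ conditioned on $(s_t,a)\notin\II$) and its override branch (actions $a\sim\mu$): on the pass-through branch each action satisfies $\overline{A}(s_t,a)\leq\eta$ by the very definition of $\II$, while the override branch contributes $\E_{a\sim\mu}[\overline{A}(s_t,a)]=\overline{Q}(s_t,\mu)-\overline{Q}(s_t,\mu)=0$. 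Mixing the two with their respective probabilities yields $\E_{a\sim\pi'(\cdot|s_t)}[\overline{A}(s_t,\cdot)]\leq\eta$ for $s_t\in\safeset$; for $s_t\in\unsafeset$ the extended $\overline{Q}(s_t,\cdot)$ is constant in $a$ so the advantage vanishes. Hence $\E[\delta_t\mid s_t]\leq \sigma+\eta$, and geometric summation delivers the bound $\overline{V}^{\pi'}(d_0)\leq \overline{Q}(d_0,\mu) + \tfrac{\sigma+\eta}{1-\gamma}$.

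For the alternative $\tfrac{2\gamma}{1-\gamma}$ term, I use the uniform estimate $\overline{A}(s_t,a)\leq\gamma$ coming from $\overline{Q}\in[0,\gamma]$; moreover any such $\overline{Q}$ is automatically $\gamma$-admissible, so we may assume $\sigma\leq\gamma$ without loss of generality. This gives $\E[\delta_t\mid s_t]\leq\gamma+\sigma\leq 2\gamma$, and geometric summation gives the second bound. Taking the minimum of the two per-step estimates before summing delivers exactly $\tfrac{\min\{\sigma+\eta,\,2\gamma\}}{1-\gamma}$. The main obstacle is the advantage decomposition in the third paragraph: one must correctly translate the operational definition of $\pi'$ into a single clean inequality that surfaces both the threshold $\eta$ (from the pass-through branch) and the exact cancellation in the $\mu$-branch, without an extra triangle-inequality slack that would inflate the $\eta$ coefficient.
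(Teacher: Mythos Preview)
Your proof is correct and follows essentially the same route as the paper: extend $\overline{Q}$ to $\unsafeset$, apply a performance-difference/telescoping identity with $f(s)=\overline{Q}(s,\mu)$, bound the per-step Bellman residual by $\overline{A}(s,a)+\sigma$ via $\sigma$-admissibility, and then use the shielding construction to get $\E_{a\sim\pi'}[\overline{A}(s,a)]\le\eta$ (this is exactly the content of the paper's \cref{lm:shielded policy advantage inequality}, which you re-derive inline). The only cosmetic difference is that the paper keeps track of the indicator $\one\{s\notin\unsafeset\}$ and uses \cref{lm:value function and expected occupancy of unsafe set} to obtain the slightly sharper intermediate bound $\overline{V}^{\pi'}(d_0)\le\frac{(1-\gamma)\overline{Q}(d_0,\mu)+\min\{\sigma,\gamma\}+\min\{\eta,\gamma\}}{1-\gamma+\min\{\sigma,\gamma\}+\min\{\eta,\gamma\}}$, before relaxing it to the stated $\overline{Q}(d_0,\mu)+\frac{\min\{\sigma+\eta,2\gamma\}}{1-\gamma}$; your direct summation skips this detour and lands on the same final bound.
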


Next we provide a formal statement that
$\GG^*=(\overline{Q}^*,\overline{\pi}^*,0)$ is the optimal intervention rule that gives the largest free space for policy optimization, among the safest intervention rules. The size of the free space provided $\GG^*$ is captured as $\supp_{\SS\times\AA}(\widetilde{d}^{*,\pi})$, which can be interpreted as the state-actions that $\GG^*(\pi) $ can explore before any intervention is triggered.

\begin{restatable}{proposition}{OptimalIntervention} \label{th:optimal intervention}
    Let $\overline\pi^*$ be an optimal policy for $\overline\MM$, $\overline{Q}^*$ be its state-action value function, and $\overline{V}^*$ be its state value function.
    Let $G_0 = \{ (\overline{Q}, \mu, 0) : (\overline{Q}, \mu, 0)~\text{is admissible}, \; \overline{Q}(d_0, \mu) = \overline{V}^*(d_0)\}$.
    Let $\GG^* = (\overline{Q}^*, \overline\pi^*, 0) \in G_0$.
    Consider arbitrary $\GG \in G_0$ and policy $\pi$.
    Let $\widetilde{\MM}$ and $\widetilde{\MM}^*$ be the absorbing MDPs induced by $\GG$ and $\GG^*$, respectively, and let $\widetilde{d}^\pi$ and $\widetilde{d}^{*,\pi}$ be their state-action distributions of $\pi$. Then,
    \begin{align*}
      \supp_{\SS\times\AA}(\widetilde{d}^{\pi}) \subseteq \supp_{\SS\times\AA}(\widetilde{d}^{*,\pi}),
    \end{align*}
    where $\supp_{\SS\times\AA}(d)$ denotes the support of a distribution $d$ when restricted on $\SS\times\AA$.
\end{restatable}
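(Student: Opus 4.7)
The plan is to show that on the set of states $s$ reachable by $\pi$ in $\widetilde{\MM}$, the intervention sets satisfy $\II^* \subseteq \II$. Since $\widetilde{\MM}$ and $\widetilde{\MM}^*$ share identical underlying dynamics on the complements of their intervention sets, this slice-wise inclusion will imply, by a straightforward induction on the time step, that every $\pi$-trajectory that survives intervention in $\widetilde{\MM}$ is also realizable in $\widetilde{\MM}^*$, delivering the claimed support inclusion.

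The crux is to establish $\overline{Q}(s,\mu) = \overline{V}^*(s)$ at every reachable $s$. I will collect three ingredients. (i) Admissibility with $\sigma=0$ yields $\overline{Q}(s,a) \geq c(s,a) + \gamma\,\E_{s'|s,a}[\min_{a'}\overline{Q}(s',a')]$, i.e., $\overline{Q} \geq \overline{T}\overline{Q}$ for the cost-based Bellman optimality operator $\overline{T}$; monotonicity and $\gamma$-contraction of $\overline{T}$ toward $\overline{Q}^*$ then give the pointwise bound $\overline{Q} \geq \overline{Q}^*$. (ii) By \cref{th:pessimistic estimate} with $\sigma = 0$, $\overline{Q}^\mu \leq \overline{Q}$, and combined with $\overline{V}^* \leq \overline{V}^\mu = \overline{Q}^\mu(\cdot,\mu)$ this yields $\overline{V}^*(s) \leq \overline{Q}(s,\mu)$ pointwise. (iii) Since $\GG \in G_0$, we have $\overline{Q}(d_0,\mu) = \overline{V}^*(d_0)$, which together with (ii) forces $\overline{Q}(s_0,\mu) = \overline{V}^*(s_0)$ for every $s_0 \in \supp(d_0)$ -- this is my base case.

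For the inductive step, suppose $\overline{Q}(s_t,\mu) = \overline{V}^*(s_t)$ at a reachable $s_t$, and take any $a_t$ with $(s_t,a_t) \notin \II$, so that $\overline{Q}(s_t,a_t) \leq \overline{Q}(s_t,\mu) = \overline{V}^*(s_t)$. Combining the admissibility lower bound on $\overline{Q}(s_t,a_t)$ with the Bellman equation $\overline{Q}^*(s_t,a_t) = c(s_t,a_t) + \gamma\,\E_{s'|s_t,a_t}[\overline{V}^*(s')]$ and the inequality $\overline{V}^*(s_t) \leq \overline{Q}^*(s_t,a_t)$, I obtain
\[
  \E_{s'|s_t,a_t}[\overline{Q}(s',\mu) - \overline{V}^*(s')] \leq 0.
\]
Since the integrand is non-negative by (ii), the equality $\overline{Q}(s',\mu) = \overline{V}^*(s')$ propagates to every $s' \in \supp(\PP(\cdot \mid s_t,a_t))$, completing the induction.

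Finally, given the inductive claim, for any reachable $s$ and any $a$ with $(s,a) \notin \II$, the chain $\overline{V}^*(s) \leq \overline{Q}^*(s,a) \leq \overline{Q}(s,a) \leq \overline{Q}(s,\mu) = \overline{V}^*(s)$ must collapse to equality, so $\overline{Q}^*(s,a) = \overline{V}^*(s)$ and $(s,a) \notin \II^*$. This proves $\II^* \subseteq \II$ on the reachable set, and the trajectory-level induction sketched at the outset then yields $\supp_{\SS\times\AA}(\widetilde{d}^\pi) \subseteq \supp_{\SS\times\AA}(\widetilde{d}^{*,\pi})$. I expect the main obstacle to be the inductive propagation of the equality $\overline{Q}(\cdot,\mu) = \overline{V}^*$: admissibility is only a one-step bound and \cref{th:pessimistic estimate} gives only a global inequality, so the equality at the initial distribution has to be ``transported'' through the transition kernel along non-intervened actions -- which is precisely what the combination of admissibility and the Bellman equation for $\overline{Q}^*$ makes possible.
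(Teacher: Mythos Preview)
Your proposal is correct and follows essentially the same approach as the paper: both establish the invariant $\overline{Q}(s,\mu) = \overline{V}^*(s)$ along reachable states by induction, using admissibility together with the Bellman equation for $\overline{Q}^*$, and then deduce $\II^* \subseteq \II$ (equivalently $\overline{A}^* \leq \overline{A}$) on those states. Your version carries out the induction action-by-action along non-intervened transitions of $\pi$ in $\widetilde{\MM}$, whereas the paper routes through the shielded policy $\pi'$ in $\MM$ (its \cref{lm:optimal intervention lemma}) and averages over actions; the action-wise argument is arguably the more direct path to this particular proposition.
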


Finally, we highlight a property of the intervention set $\II$ of our advantage-based rules, which is crucial for the unconstrained MDP reduction described in the next section.
\begin{definition} \label{as:viable action}
    A set $\XX \subset \safeset \times \AA$ is called \emph{partial} if for every $(s, a) \in \XX$, there is some $a' \in \AA$ such that ${(s, a') \notin \XX}$.
\end{definition}
\begin{proposition} \label{th:partial}
If $\eta \geq 0$, then $\II$ in \eqref{eq:intervention set} is partial.
\end{proposition}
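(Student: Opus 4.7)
The plan is to exhibit, for any $(s,a)\in\II$, a witness action $a'$ in the support of $\mu(\cdot\mid s)$ satisfying $\overline{A}(s,a')\le 0 \le \eta$, which immediately gives $(s,a')\notin\II$. Since $\eta\ge 0$, this suffices.

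The key observation is that $\overline{Q}(s,\mu)$ is by definition the expectation $\mathbb{E}_{a'\sim\mu(\cdot\mid s)}[\overline{Q}(s,a')]$, so $\overline{Q}(s,a')$ cannot strictly exceed $\overline{Q}(s,\mu)$ on the entire support of $\mu(\cdot\mid s)$. I would formalize this in one line: the set
\[
B_s \coloneqq \{a'\in\AA : \overline{Q}(s,a') \le \overline{Q}(s,\mu)\}
\]
must have strictly positive $\mu(\cdot\mid s)$-measure, for otherwise $\overline{Q}(s,a')>\overline{Q}(s,\mu)$ would hold $\mu(\cdot\mid s)$-almost surely, making its $\mu$-expectation strictly larger than $\overline{Q}(s,\mu)$, a contradiction. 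In particular $B_s\neq\varnothing$, so I can pick any $a'\in B_s$, for which $\overline{A}(s,a')=\overline{Q}(s,a')-\overline{Q}(s,\mu)\le 0 \le \eta$, and hence $(s,a')\notin\II$ by the definition in \eqref{eq:intervention set}.

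Since $(s,a)\in\II$ was arbitrary, this shows $\II$ is partial in the sense of \cref{as:viable action}. There is really no obstacle here: the argument hinges only on the elementary fact that a random variable cannot everywhere (on its support) strictly exceed its own mean, together with the assumption $\eta\ge 0$. The one point that merits explicit mention in the write-up is that the argument works for both discrete and continuous $\AA$, because the averaging fact holds for any probability distribution $\mu(\cdot\mid s)$ on $\AA$ (and no regularity on $\overline{Q}(s,\cdot)$ beyond $\mu$-integrability is needed, which is already implied by $\overline{Q}\in[0,\gamma]$ in \cref{def:admissible intervention}).
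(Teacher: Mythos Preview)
Your proof is correct and follows essentially the same route as the paper's: both exhibit an action $a'$ with $\overline{A}(s,a')\le 0\le\eta$, hence $(s,a')\notin\II$. The only cosmetic difference is that the paper takes $a'=\argmin_{a''\in\AA}\overline{Q}(s,a'')$ directly, whereas you argue via the support of $\mu$; your version has the minor advantage of not needing the minimum to be attained when $\AA$ is continuous.
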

\begin{proof}
    For $(s, a) \in \II$,  define $a' = \argmin_{a'' \in \AA} \overline{Q}(s, a'')$.
    Because
    $
        \overline{A}(s, a') = \overline{Q}(s, a') - \overline{Q}(s, \mu) \leq 0 \leq \eta,
    $
    we conclude that $(s, a') \notin \II$.
\end{proof}

\vspace{-2mm}
\subsubsection{Abosrbing MDP}  \label{sec:analysis of absorbing MDPs}
\vspace{-1mm}

As discussed in \cref{sec:absorbing mdp}, the new MDP $\widetilde{\MM}$ provides a pessimistic value estimate of $\MM$ by penalizing trajectories that trigger the intervention rule $\GG$. 
Precisely, we can show that the amount of pessimism introduced on a policy $\pi$ is proportional to $P_{\GG}(\pi)$ (the probability of triggering the intervention rule $\GG$ when running $\pi$ in $\MM$).
\begin{restatable}{lemma}{ValueOffset} \label{th:value offset}
    For every policy $\pi$, it holds that
     \begin{align*}
         |\widetilde{R}| \; P_\GG(\pi) \leq V^\pi(d_0) - \widetilde{V}^\pi(d_0) \leq \left( |\widetilde{R}| + \frac{1}{1-\gamma} \right) P_\GG(\pi) .
     \end{align*}
\end{restatable}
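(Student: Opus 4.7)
The plan is a coupling argument between the two MDPs, driving everything off of the first intervention time. Since $\widetilde{\MM}$ agrees with $\MM$ in both dynamics and reward on all state-actions outside of $\II$ (and outside of the newly introduced absorbing state $\intervened$), I would simulate $\pi$ in $\MM$ and in $\widetilde{\MM}$ with a shared source of randomness so that, prior to the first intervention, the two trajectories coincide exactly. Let $\tau$ denote this first time, with the convention $\tau = \infty$ if no intervention ever occurs; on the event $\{\tau<\infty\}$, the state-action $(s_\tau,a_\tau)\in\II$ is visited identically in both MDPs.

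Because the two runs agree for all $t<\tau$, the value difference comes entirely from what happens at and after $\tau$. In $\MM$, the policy keeps collecting rewards $r(s_t,a_t)\in[0,1]$; in $\widetilde{\MM}$, at time $\tau$ it receives reward $\widetilde{R}$ and then enters $\intervened$ and collects zero reward forever. Carrying this through, I expect to obtain
\begin{align*}
V^\pi(d_0) - \widetilde{V}^\pi(d_0)
= \E\left[ \gamma^\tau \one\{\tau<\infty\} \left( \sum_{k=0}^\infty \gamma^k r(s_{\tau+k},a_{\tau+k}) - \widetilde{R} \right) \right].
\end{align*}
Since $r\in[0,1]$ and $\widetilde{R}\leq 0$, the inner bracket lies in $[|\widetilde{R}|,\; |\widetilde{R}| + 1/(1-\gamma)]$, and both bounds of the lemma reduce to controlling the scalar $\E[\gamma^\tau \one\{\tau<\infty\}]$ by the respective constants.

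The last step is to identify $\E[\gamma^\tau \one\{\tau<\infty\}]$ with $P_\GG(\pi)$. This is a routine Fubini/geometric-series calculation: writing $\prob(\xi^h \cap \II \neq \varnothing \mid \pi, \MM) = \prob(\tau \leq h)$, swapping the sum and expectation in the definition of $P_\GG(\pi)$, and collapsing the inner geometric tail $\sum_{h\geq \tau}\gamma^h = \gamma^\tau/(1-\gamma)$ on $\{\tau<\infty\}$ cancels the $(1-\gamma)$ prefactor and leaves exactly $\E[\gamma^\tau \one\{\tau<\infty\}]$. The main (and only mild) obstacle is book-keeping: carefully handling the event $\{\tau = \infty\}$ throughout the coupling and keeping the trajectory-length index consistent with the convention used to define $P_\GG$. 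Once these are settled, both inequalities of the lemma drop out of the two extremes of the bracketed term above.
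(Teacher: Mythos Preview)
Your coupling argument is correct and complete: the shared-randomness construction is valid because $\MM$ and $\widetilde{\MM}$ coincide outside $\II$, the decomposition at the first hitting time $\tau$ is exactly right, the bracket bound $[|\widetilde R|,\,|\widetilde R|+1/(1-\gamma)]$ follows from $r\in[0,1]$ and $\widetilde R\le 0$, and the identification $P_\GG(\pi)=\E[\gamma^\tau\one\{\tau<\infty\}]$ is precisely the Fubini computation you sketch (your caveat about the indexing of $\xi^h$ is well placed; the paper itself is slightly loose there).

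The paper takes a different, more algebraic route. It invokes a simulation-lemma variant of the performance difference lemma (\cref{lm:simulation lemma}) to write
\[
\widetilde V^\pi(d_0)-V^\pi(d_0)=\tfrac{1}{1-\gamma}\,\E_{(s,a)\sim\widetilde d^\pi}\bigl[(\widetilde\DD^\pi Q^\pi)(s,a)\bigr],
\]
then bounds the temporal-difference residual pointwise via \cref{lm:Bellman operators}, obtaining $(\widetilde R-\tfrac{1}{1-\gamma})\one\{(s,a)\in\II\}\le(\widetilde\DD^\pi Q^\pi)(s,a)\le\widetilde R\,\one\{(s,a)\in\II\}$, and finally appeals to \cref{lm:intervened probability} to rewrite $\tfrac{1}{1-\gamma}\E_{\widetilde d^\pi}[\one\{(s,a)\in\II\}]$ as $P_\GG(\pi)$. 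Your trajectory-level stopping-time argument is more elementary and arguably more transparent about \emph{why} the gap is controlled by the first intervention; the paper's occupancy-measure argument is more modular, reusing Bellman-operator machinery that is already set up for other results in the section. Both land on the same place, and in fact your identity $\E[\gamma^\tau\one\{\tau<\infty\}]=\tfrac{1}{1-\gamma}\E_{\widetilde d^\pi}[\one_\II]$ is exactly the content of the paper's \cref{lm:intervened probability}, viewed through the hitting-time lens.
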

\vspace{-2mm}

As a result, one would intuitively imagine that an optimal policy of $\widetilde{\MM}$ would never visit the intervention set $\II$ at all. Below we show that this intuition is correct. 
Importantly, we highlight that this property holds \emph{only} because the intervention set $\II$ used here is \emph{partial}~(\cref{th:partial}).
If we were to construct an absorbing MDP $\widetilde{\MM}'$ described in \cref{sec:absorbing mdp} using an arbitrary non-partial subset $\II' \subseteq \safeset\times\AA$, then the optimal policy of  $\widetilde{\MM}'$ can still enter $\II'$ for any $\widetilde{R}>-\infty$, because an optimal policy of $\widetilde{\MM}'$ can use earlier rewards to mitigate penalties incurred in $\II'$~(\cref{sec:non-partial}).
\begin{restatable}{proposition}{OptimalPolicyIsNotIntervened} \label{th:optimal policy is not intervened}
    If $\widetilde R$ is negative and $\GG$ induces a partial $\II$, then every optimal policy $\widetilde{\pi}^*$ of $\widetilde{\MM}$ satisfies $P_{\GG}(\widetilde{\pi}^*) = 0$.
\end{restatable}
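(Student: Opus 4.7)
The plan is to argue by contradiction, exploiting the absorbing-penalty structure of $\widetilde{\MM}$ together with the escape action supplied by partialness. Suppose $\widetilde{\pi}^*$ is optimal in $\widetilde{\MM}$ but $P_\GG(\widetilde{\pi}^*) > 0$. The intuition is that entering $\II$ is a strict trap: the agent takes reward $\widetilde{R} < 0$, is deterministically absorbed at $\intervened$, and collects zero reward thereafter. So if avoiding $\II$ yields non-negative value, any positive-probability visit to $\II$ at a reachable state is strictly dominated by an alternative action, contradicting optimality.

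The first step is to establish the uniform lower bound $\widetilde{V}^{\widetilde{\pi}^*}(s) \geq 0$ for every $s$. Here partialness is essential: for each $s \in \safeset$, set $\AA_s^- := \{ a \in \AA : (s,a) \notin \II \}$, which is non-empty by \cref{th:partial} (and trivially so when no action at $s$ is intervened). Define a ``shunning'' policy $\pi^-$ picking any action in $\AA_s^-$ at $s \in \safeset$ and arbitrary actions elsewhere. Running $\pi^-$ in $\widetilde{\MM}$ from any $s \in \SS$ never transitions to $\intervened$, so its dynamics and rewards coincide with those of $\MM$; since $r \in [0,1]$ in $\MM$ (zero on $\unsafeset$), this yields $\widetilde{V}^{\pi^-}(s) \geq 0$ on $\SS$ (and trivially on $\{\intervened\}$). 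Optimality of $\widetilde{\pi}^*$ then gives $\widetilde{V}^{\widetilde{\pi}^*}(s) \geq \widetilde{V}^{\pi^-}(s) \geq 0$ for all $s$.

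The second step closes the contradiction via a one-step Bellman comparison. Because $P_\GG(\widetilde{\pi}^*) > 0$, there exists a state $s$ reached with positive probability from $d_0$ under $\widetilde{\pi}^*$ at which $\widetilde{\pi}^*(\cdot | s)$ assigns positive mass to some $a$ with $(s,a) \in \II$. Using the dynamics in \eqref{eq:modified reward}--\eqref{eq:modified dynamics} and that $\intervened$ is zero-reward absorbing, $\widetilde{Q}^{\widetilde{\pi}^*}(s,a) = \widetilde{R} + \gamma \widetilde{V}^{\widetilde{\pi}^*}(\intervened) = \widetilde{R} < 0$. By partialness, some $a' \in \AA_s^-$ exists, and $\widetilde{Q}^{\widetilde{\pi}^*}(s,a') = r(s,a') + \gamma \E_{s'|s,a'} \widetilde{V}^{\widetilde{\pi}^*}(s') \geq 0$ by step one. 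Hence the locally modified policy that deterministically plays $a'$ at $s$ and follows $\widetilde{\pi}^*$ elsewhere attains strictly higher $\widetilde{V}^\pi(d_0)$ (since $s$ has positive occupancy), contradicting optimality.

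The main obstacle is the first step: without partialness, the uniform bound $\widetilde{V}^{\widetilde{\pi}^*} \geq 0$ can fail, because at some state every available action might be in $\II$ and the optimal policy is forced to pick a ``least bad'' intervened action, which could then be justifiable even from other upstream states. Partialness removes precisely this possibility and makes the local improvement argument clean; everything after that is standard policy-improvement bookkeeping.
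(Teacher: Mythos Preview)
Your overall strategy---show $\widetilde V^*(s)\ge 0$ everywhere via a never-intervened policy, then argue that any intervened action is strictly suboptimal---is sound and genuinely different from the paper's route. The paper instead constructs a \emph{global} redirected policy $\pi_f$ (\cref{lm:safer policy}) that moves all probability mass off $\II$ at every state, proves an occupancy-measure monotonicity $\widetilde d^{\pi_f}(s,a)\ge\widetilde d^{\pi}(s,a)$ on the unintervened set, decomposes $\widetilde V^\pi(d_0)=\widetilde J^\pi_+ +\widetilde J^\pi_-$, and concludes strict improvement when $\widetilde R<0$ and $\pi$ places mass on $\II$. Your Bellman-optimality argument is more elementary and self-contained; the paper's extra machinery is not wasted, though, since the decomposition and $\pi_f$ are reused in \cref{th:performance and safety}.

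There is, however, a concrete gap in your step~2. Replacing $\widetilde\pi^*$ at $s$ by the deterministic action $a'$ need not improve $\widetilde V^\pi(d_0)$: by the performance difference lemma the relevant sign is that of $\widetilde Q^{\widetilde\pi^*}(s,a')-\widetilde V^{\widetilde\pi^*}(s)$, and you have only shown $\widetilde Q^{\widetilde\pi^*}(s,a')\ge 0$, not that it dominates $\widetilde V^{\widetilde\pi^*}(s)$ (which can be large if $\widetilde\pi^*$ already concentrates most mass on a high-value unintervened action and only a sliver on the intervened $a$). A related subtlety: your bound $\widetilde Q^{\widetilde\pi^*}(s,a')\ge 0$ relies on $\widetilde V^{\widetilde\pi^*}(s')\ge 0$ at successors $s'$ that may lie \emph{off} the support of $\widetilde\pi^*$; for a merely $d_0$-optimal (not pointwise-optimal) $\widetilde\pi^*$ this need not hold. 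The clean repair is to run step~2 through $\widetilde Q^*$ rather than $\widetilde Q^{\widetilde\pi^*}$: your step~1 already gives $\widetilde V^*(s)\ge 0$ for all $s$, whence $\widetilde Q^*(s,a)=\widetilde R<0\le\widetilde V^*(s)$ for every $(s,a)\in\II$; since any $d_0$-optimal policy must put mass only on actions achieving $\widetilde Q^*(s,\cdot)=\widetilde V^*(s)$ at every state it reaches, the contradiction is immediate.
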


The partial property of $\II$ enables our unconstrained MDP reduction, which relates the performance and safety of a policy $\pi$ in the original MDP $\MM$ to the suboptimality in the new MDP $\widetilde{\MM}$ and the safety of $\pi'=\GG(\pi)$.
\begin{restatable}[Suboptimality in $\widetilde\MM$ to Suboptimality and Safety in $\MM$]{proposition}{PerformanceAndSafety}\label{th:performance and safety}
    Let $\widetilde R$ be negative.
    For some policy $\pi$, let $\pi'$ be the shielded policy defined in \eqref{eq:shielded policy}.
    Suppose $\pi$ is $\varepsilon$-suboptimal for $\widetilde\MM$.
    Then, for any comparator policy $\pi^*$, the following performance and safety guarantees hold for $\pi$ in $\MM$:
    \begin{align*}
        V^{\pi^*}(d_0)  -  V^\pi(d_0)
        &\leq \left(  |\widetilde{R}| + \frac{1}{1-\gamma}  \right) P_\GG(\pi^*) + \varepsilon \\
        \overline{V}^\pi(d_0)
        &\leq \overline{V}^{\pi'}(d_0) + \frac{\varepsilon}{|\widetilde{R}|} .
    \end{align*}
\end{restatable}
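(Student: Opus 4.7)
The proposition has two inequalities; both follow by inserting $\widetilde V^\pi$ and using \cref{th:value offset} together with the $\varepsilon$-suboptimality hypothesis. The performance bound is standard telescoping, while the safety bound requires an additional coupling argument built around the partial property of $\II$ (\cref{th:partial}).

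\textbf{Performance bound.} I would write
\begin{align*}
V^{\pi^*}(d_0) - V^\pi(d_0)
&= \bigl(V^{\pi^*}(d_0) - \widetilde V^{\pi^*}(d_0)\bigr)
+ \bigl(\widetilde V^{\pi^*}(d_0) - \widetilde V^\pi(d_0)\bigr)
+ \bigl(\widetilde V^\pi(d_0) - V^\pi(d_0)\bigr).
\end{align*}
The upper half of \cref{th:value offset} bounds the first term by $(|\widetilde R|+1/(1-\gamma))P_\GG(\pi^*)$; the lower half makes the third term $\le -|\widetilde R|P_\GG(\pi)\le 0$. For the middle term I bound $\widetilde V^{\pi^*}\le \widetilde V^{\widetilde\pi^*}$ (optimality of $\widetilde\pi^*$ for $\widetilde\MM$) and apply the $\varepsilon$-suboptimality assumption to get $\widetilde V^{\widetilde\pi^*}-\widetilde V^\pi\le\varepsilon$. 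Summing gives the claimed performance bound.

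\textbf{Safety bound: step 1 (coupling $\pi$ and $\pi'$).} Couple $\pi$ and $\pi'$ by sampling each action $a_t\sim\pi(\cdot\,|\,s_t)$ jointly and letting $\pi'$ either keep $a_t$ (if $(s_t,a_t)\notin\II$) or draw a fresh action from $\mu(\cdot\,|\,s_t)$. Let $T$ be the first time $\pi$ proposes an action in $\II$; the two trajectories agree for $t\le T-1$, and on $\{T<\infty\}$ the state $s_T\in\safeset$ (since $\II\subset\safeset\times\AA$) and $\unsafeset$ is absorbing, so $T_\violation^\pi\ge T+1$. Because $c(s,a)=\one\{s=\violation\}$ charges exactly one discounted unit at the first visit to $\violation$, this gives
\begin{align*}
\overline V^\pi(d_0)-\overline V^{\pi'}(d_0)
\le \E\bigl[\gamma^{T_\violation^\pi}\one\{T_\violation^\pi<\infty,\,T<\infty\}\bigr]
\le \E\bigl[\gamma^{T+1}\one\{T<\infty\}\bigr]
= P_\GG(\pi),
\end{align*}
after dropping the non-negative $\overline V^{\pi'}$ contribution from $\{T<\infty\}$.

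\textbf{Safety bound: step 2 (bounding $P_\GG(\pi)$).} The remaining task, and what I expect to be the main obstacle, is showing $|\widetilde R|\,P_\GG(\pi)\le\varepsilon$. The value-offset lemma alone does not suffice, because $V^\pi-\widetilde V^\pi$ need not be small. The trick is to exhibit a concrete comparator that forces the inequality via $\varepsilon$-suboptimality. Invoking \cref{th:partial}, for every state $s$ with $(s,a)\in\II$ for some $a$, pick an action $a_s^\*\in\AA$ with $(s,a_s^\*)\notin\II$, and define an intervention-avoiding policy
\begin{align*}
\pi_0(a\,|\,s) \coloneqq \pi(a\,|\,s)\,\one\{(s,a)\notin\II\} + \one\{a=a_s^\*\}\,w(s),
\end{align*}
with $w(s)=\sum_{a':(s,a')\in\II}\pi(a'\,|\,s)$, so that $P_\GG(\pi_0)=0$ and hence $\widetilde V^{\pi_0}=V^{\pi_0}\ge 0$ pointwise. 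Couple $\pi$ with $\pi_0$ analogously: the trajectories agree up to time $T$, and on $\{T<\infty\}$ the contribution of $\pi$ to $\widetilde V^\pi$ from time $T$ onward is exactly $\gamma^T\widetilde R$, while $\pi_0$ contributes $\gamma^T r(s_T,a_{s_T}^\*)+\sum_{t>T}\gamma^t r(s_t^{\pi_0},a_t^{\pi_0})\ge 0$. Subtracting and using $\widetilde R=-|\widetilde R|$ gives
\begin{align*}
\widetilde V^{\pi_0}(d_0)-\widetilde V^\pi(d_0)
\ge |\widetilde R|\,\E[\gamma^T\one\{T<\infty\}]
\ge |\widetilde R|\,P_\GG(\pi).
\end{align*}
Finally, $\widetilde V^{\pi_0}\le\widetilde V^{\widetilde\pi^*}\le\widetilde V^\pi+\varepsilon$ by optimality and $\varepsilon$-suboptimality, so $|\widetilde R|\,P_\GG(\pi)\le\varepsilon$, and combining with step~1 yields $\overline V^\pi(d_0)\le \overline V^{\pi'}(d_0)+\varepsilon/|\widetilde R|$.
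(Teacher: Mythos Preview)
Your proof is correct and tracks the paper's argument closely: the performance bound is identical, and for the safety bound your Step~1 coupling is the content of \cref{lm:policy safety bound,lm:decomposition of policy safety}, while your Step~2 comparator $\pi_0$ is exactly the paper's $\pi_f$ from \cref{lm:safer policy}, used there to show $\widetilde V^*(d_0)\ge\widetilde J^\pi_+$ and hence $-\widetilde J^\pi_-/|\widetilde R|\le\varepsilon/|\widetilde R|$. Your trajectory-coupling presentation is a bit more direct and self-contained; the paper instead works through the occupancy-measure decomposition $\widetilde V^\pi=\widetilde J^\pi_++\widetilde J^\pi_-$, which is more modular (it also drives \cref{th:optimal policy is not intervened}) but mathematically equivalent.
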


\vspace{-4mm}
\section{Related Work}
\vspace{-1mm}

CMDPs~\citep{altman1999constrained} have been a popular framework for safe RL as it side-steps the reward design problem for ensuring safety in a standard MDP~\citep{geibel2005risk,shalev2016safe}. 
Most existing CMDP-based safe RL algorithms closely follow algorithms in the constrained optimization literature~\citep{bertsekas2014constrained}. 
They can be classified into either online or offline schemes. 
Online schemes learn by coupling the iteration of a numerical optimization algorithm (notably primal-dual gradient updates) with data collection~\citep{borkar2005actor,chow2017risk,tessler2018reward,bohez2019value}, and these algorithms have also been studied in the exploration context~\citep{ding2020provably,qiu2020upper,efroni2020exploration}.
However, they have no guarantees on policy safety during training.
Offline schemes \citep{achiam2017constrained,bharadhwaj2021conservative,le2019batch,efroni2020exploration}, on the other hand, separate optimization and data collection. They conservatively enforce safety constraints on every policy iterate but are more difficult to scale up.
Many of these constrained algorithms for CMDPs, however, have worse numerical stability compared with typical RL algorithms for MDPs, because of the nonconvex saddle-point of the CMDP~\citep{lee2017first,chow2018lyapunov}.

Another line of safe RL research uses control-theoretic techniques to enforce safe exploration, though only few provide guarantees with respect to the CMDP in \eqref{eq:CMDP formulation}.
These methods include restricting the agent to take actions that lead to next-state safety~\citep{dalal2018safe, wabersich2018safe} or states where a safe backup exists~\citep{hans2008safe,polo2011safe,li2020robust}.
Other works consider more structured shielding approaches, including those with temporal logic safety rules and backup policies~\citep{alshiekh2018safe} and neurosymbolic policies~\citep{anderson2020neurosymbolic} whose safety can be checked easily.
Many of these approaches require strong assumptions on the MDP (e.g., taking an action to ensure the next state's safety being sufficient to imply all future states will continue to have such safe actions available).
Algorithms based on Lyapunov functions and reachability~\citep{perkins2002lyapunov,chow2018lyapunov,chow2019lyapunov,berkenkamp2017safe,fisac2018general} address the long-term feasibility issue, but they are more complicated than common RL algorithms.
We note that our admissible intervention rules in \eqref{eq:admissible intervention (conservative)} can be viewed as a state-action Lyapunov function.

To the best of our knowledge, \alg is the first unconstrained method that provides formal guarantees with respect to the CMDP objective.
The closest work to ours is~\citep{turchetta2020safe}, which also uses the idea of intervention for training safety and trains the agent in a new MDP that discourages visiting intervened state-actions. However, their algorithm, \textsc{CISR}, is still based on calling CMDP subroutines~\citep{le2019batch}. They neither specify how the intervention rules can be constructed nor provide performance guarantees.
By comparsion, we provide a general recipe of intervention rules and obtain the properties desired in \citep{turchetta2020safe} by simply unconstrained RL.

\begin{figure*}[!ht]
  \centering
  {\small {\bf Episode return without intervention \quad Episode length without intervention \qquad Safety violations during training}} \\
  \begin{subfigure}{0.27\textwidth}
    \includegraphics[width=\textwidth]{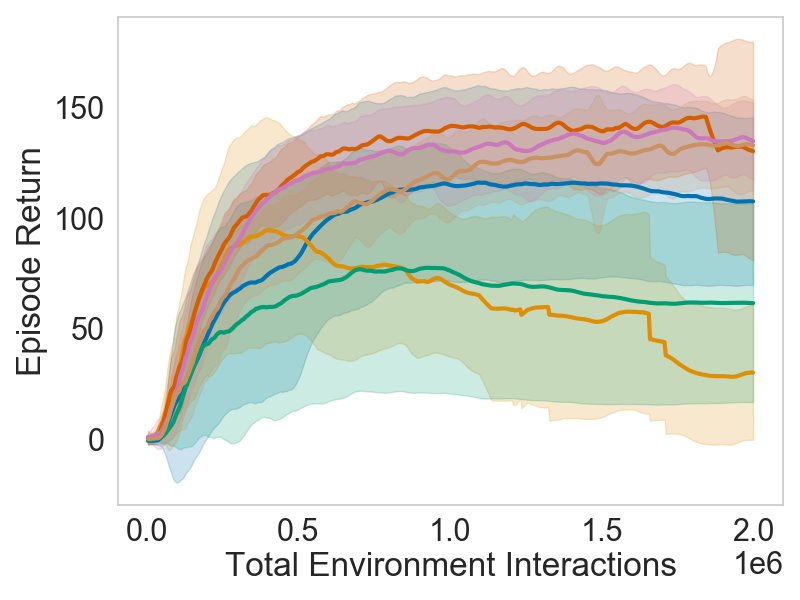}
  \end{subfigure}
  \begin{subfigure}{0.27\textwidth}
    \includegraphics[width=\textwidth]{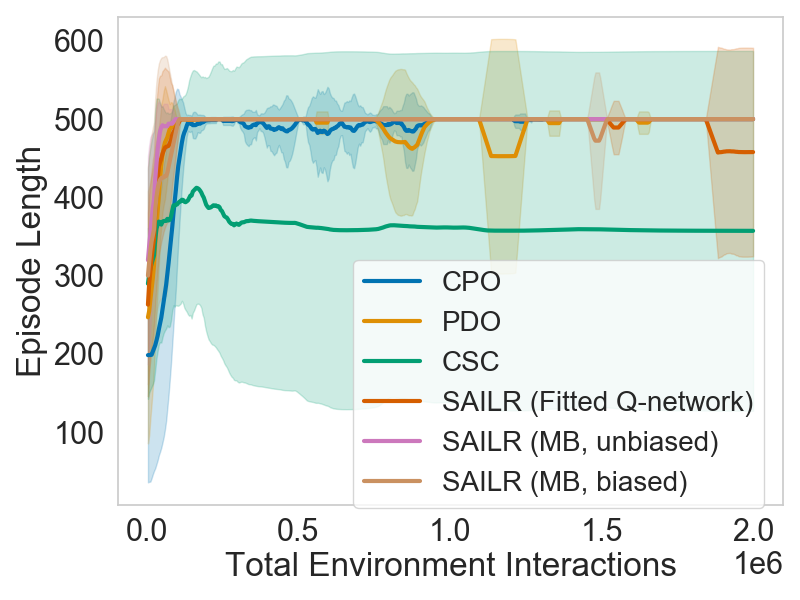}
    \caption{Results for point} 
    \label{fig:point}
  \end{subfigure}
  \begin{subfigure}{0.27\textwidth}
    \includegraphics[width=\textwidth]{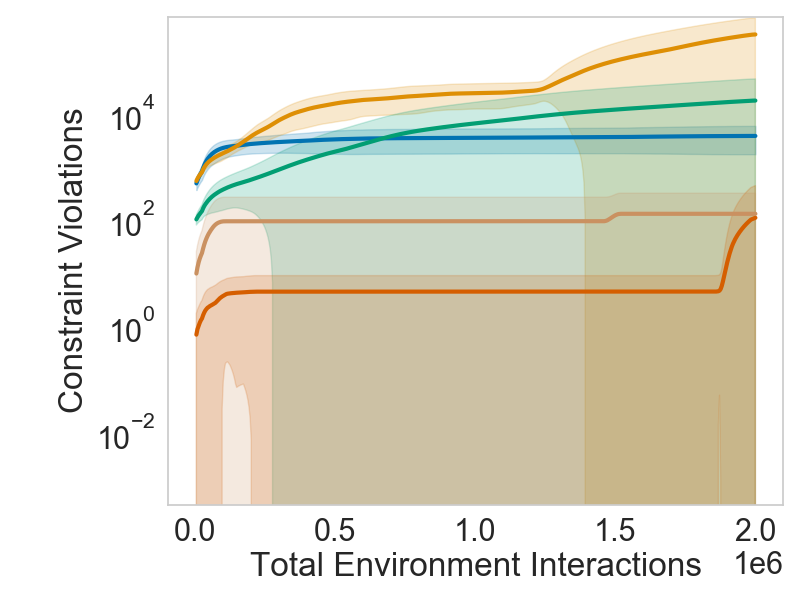}
  \end{subfigure} \\
  \begin{subfigure}{0.27\textwidth}
    \includegraphics[width=\textwidth]{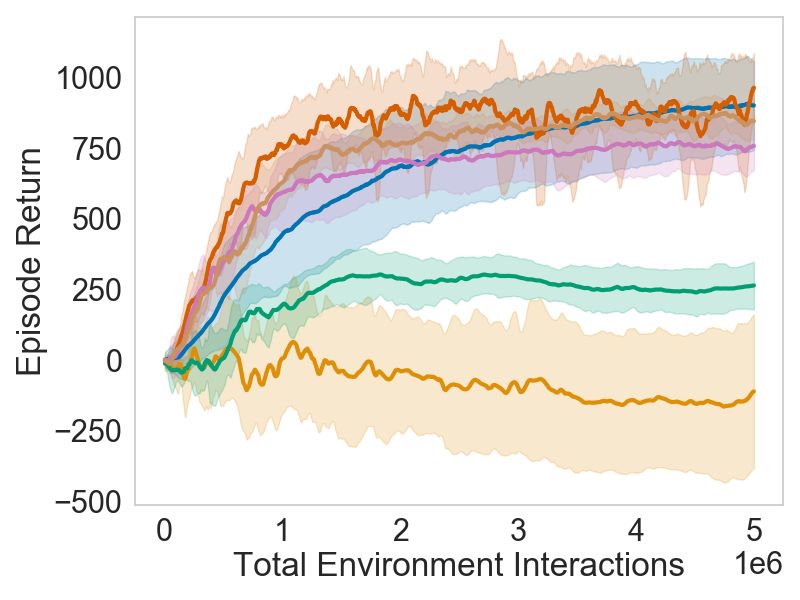}
  \end{subfigure}
  \begin{subfigure}{0.27\textwidth}
    \includegraphics[width=\textwidth]{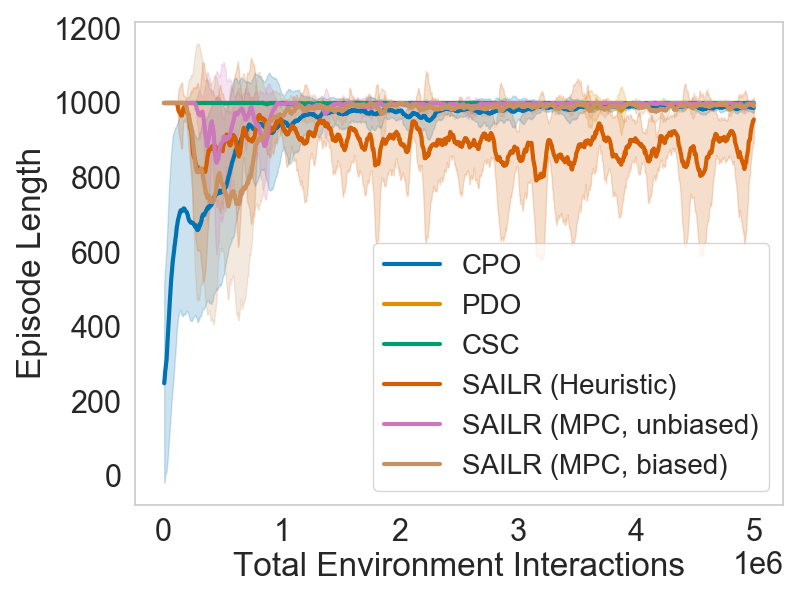}
    \caption{Results for half-cheetah}
    \label{fig:cheetah}
  \end{subfigure}
  \begin{subfigure}{0.27\textwidth}
    \includegraphics[width=\textwidth]{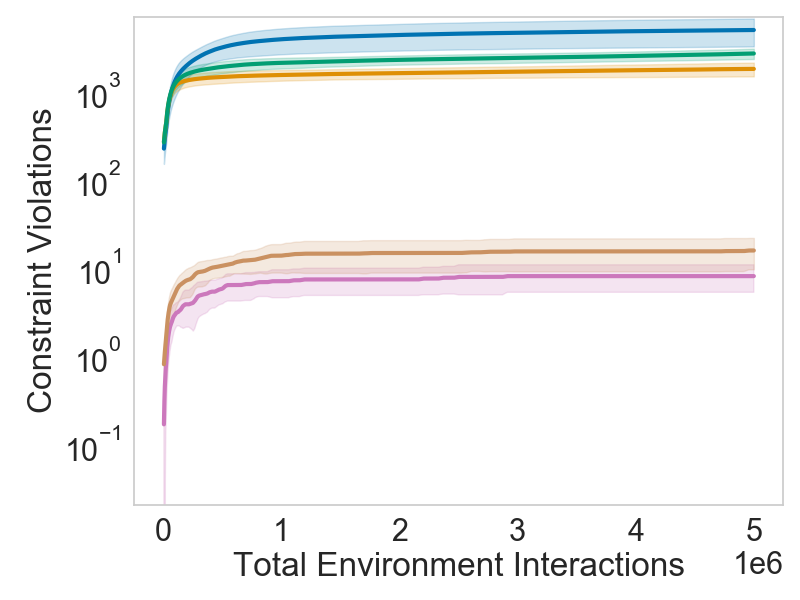}
  \end{subfigure}
  \caption{Results of \alg and baseline CMDP-based methods.
  Overall \alg dramatically reduces the amount of safety constraint violations while still having large returns at deployment.
Plots in a row share the same legend.
All error bars are $\pm 1$ standard deviation over 10 (point robot) or 8 (half-cheetah) random seeds. Any curve not plotted in the third column corresponds to zero safety violations.}
  \vspace{-5mm}
\end{figure*}

\vspace{-1mm}
\section{Experiments} \label{sec:exps}
\vspace{-1mm}

We conduct experiments to corroborate our theoretical analysis of \alg.
We aim to verify whether a properly designed intervention mechanism can drastically reduce the amount of unsafe trajectories generated in training while still resulting in good safety and performance in deployment.

Our experiments consider two different tasks:
\begin{enumerate*}[label=\textit{\arabic*)}]
\item A toy point robot based on~\citep{achiam2017constrained} that gets reward for following a circular path at high speed, but is constrained to stay in a region smaller than the target circle; and
\item a half-cheetah that gets reward equal to its forward velocity, with one of its links constrained to remain in a given height range, outside of which the robot is deemed to be unsafe.
\end{enumerate*}
In all experiments, when computing $\overline{Q}$, we opt to use a \emph{shaped} cost function in place of the original sparse indicator cost function to make our intervention mechanism more conservative (and hence the training process safer).
In particular, this shaped cost function is a function of the distance to the unsafe set and is an upper bound of the original sparse cost.
The appendix includes some additional experiments where the original sparse cost is used.

We implement \alg by using PPO~\citep{schulman2017proximal} as the RL subroutine. 
We also compare our approach to two CMDP-based approaches: CPO~\citep{achiam2017constrained} and a primal-dual optimization (PDO) algorithm~\citep{chow2017risk}.
For the PDO algorithm, we use PPO as the policy optimization subroutine and dual gradient ascent as the Lagrange multiplier update.
We also consider a variant of PDO, called CSC, where a learned conservative critic is used to filter unsafe actions~\citep{bharadhwaj2021conservative}.

\vspace{-2mm}
\subsection{Point Robot}
\vspace{-2mm}

Here \alg uses the intervention rule $\GG=(\mu, \overline{Q}, \eta)$: the baseline policy $\mu$ aims to stop the robot by deceleration.
The function $\overline{Q}$ is estimated by either querying a fitted Q-network or by rolling out $\mu$ on a dynamical model (denoted ``MB'' in \cref{fig:point}) of the point robot and querying a shaped cost function.
We consider both biased and unbiased models (details in \cref{app:point robot}).
\cref{fig:point} show the main experimental results, with all three instances of \alg outperforming the baselines on all three  metrics.
For \alg, the shielding prevents many safety violations, and the unconstrained approach allows for reliable convergence as opposed to the baselines which rely on elaborate constrained approaches.

\vspace{-3mm}
\subsection{Half-Cheetah}
\vspace{-2mm}

We consider two intervention rules in \alg:
a reset backup policy $\mu$ with a simple heuristic $\overline{Q}$ based on the predicted height of the link after taking a proposed action,
and a reset backup policy $\mu$ based on a sampling-based model predictive control (MPC) algorithm \citep{williams2017information,bhardwaj2021blending} with a model-based value estimate (i.e., $\overline{Q}\approx \overline{Q}^\mu$).
The simple heuristic uses a slightly smaller height range for intervention to attempt to construct a \emph{partial} intervention set (\cref{sec:main result}).
The MPC algorithm optimizes a control sequence over the same cost function.
The function $\overline{Q}$ is computed by rolling out this control sequence on the dynamical model and querying the cost function.
We also consider model bias in the MPC experiments (details in~\cref{app:cheetah}).

As with the point environment, \alg incurs orders of magnitude fewer safety violations than the baselines (right plot of \cref{fig:cheetah}), with all three instances having comparable deployment performance to that of CPO.
Though the heuristic intervention violates no constraints in training, it is consistently unsafe in deployment (middle plot), likely because the resulting intervention set is not partial.
On the other hand, MPC-based approaches are consistently safe in deployment, owing to its multi-step lookahead yielding an intervention rule that is likely to be $\sigma$-admissible (and therefore give an intervention set that is partial).

\vspace{-3mm}
\section{Conclusion}
\vspace{-2mm}

We presented an intervention-based method for safe reinforcement learning.
By utilizing advantage functions for intervention and penalizing an agent for taking intervened actions, we can use unconstrained RL algorithms in the safe learning domain.
Our analysis shows that using advantage functions for the intervention decision gives strong guarantees for safety during training and deployment, with the performance only limited by how often the true optimal policy would be intervened.
We also discussed ways of synthesizing good intervention rules, such as using value iteration techniques.
Finally, our experiments showed that the shielded policy violates few if any constraints during training while the corresponding deployed policy enjoys convergence to a large return.
\vspace{-3mm}

\vspace{-1mm}
\section*{Acknowledgements}
\vspace{-1mm}
This work was supported in part by ARL SARA CRA W911NF-20-2-0095.
We thank Mohak Bhardwaj for providing MPC code used in the half-cheetah experiment.
We thank Anqi Li for insightful discussions and Panagiotis Tsiotras for helpful comments on the paper.

\bibliography{refs}
\bibliographystyle{icml2021}

\clearpage
\onecolumn
\appendix

\section{Missing Proofs} \label{app:missing proofs}

\subsection{Useful Lemmas}

\begin{lemma} \label{lm:discounted value as average value}
    For any $\gamma$-discounted MDP with reward function $r$,
    the identity $V^\pi(d_0) = (1-\gamma) \sum_{h=0}^\infty \gamma^h U_h^\pi(d_0)$ holds, where
    $ U_h^\pi(d_0) = \E_{\rho^\pi}[  \sum_{t=0}^h  r(s_t, a_t) ] ]$ is the undiscounted $h$-step return.
\end{lemma}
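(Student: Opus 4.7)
The plan is a direct Fubini-style swap of summation indices. The right-hand side, by definition of $U_h^\pi(d_0)$, equals
\[
(1-\gamma) \sum_{h=0}^\infty \gamma^h \sum_{t=0}^h \E_{\rho^\pi}[r(s_t, a_t)].
\]
I would then exchange the order of the two sums, converting $\sum_{h=0}^\infty \sum_{t=0}^h$ into $\sum_{t=0}^\infty \sum_{h=t}^\infty$, pull the expected reward (which depends only on $t$) out of the inner sum, and evaluate the resulting geometric tail $\sum_{h=t}^\infty \gamma^h = \gamma^t/(1-\gamma)$. The factor $1-\gamma$ in front cancels, leaving $\sum_{t=0}^\infty \gamma^t \E_{\rho^\pi}[r(s_t, a_t)]$, which is $V^\pi(d_0)$ after interchanging sum and expectation one more time.

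To justify the two interchanges (sums with sums, and sum with expectation), I would invoke Tonelli's theorem using the nonnegativity of rewards ($r \in [0,1]$ as stated in the preliminaries); alternatively, absolute convergence holds since each $|r(s_t,a_t)| \le 1$ and $\sum \gamma^h \cdot (h+1) < \infty$. This is the only place where any care is needed, but it is entirely routine given the bounded reward assumption.

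There is no main obstacle here. The calculation is a standard identity relating a discounted infinite-horizon return to a $\gamma$-weighted average of undiscounted finite-horizon returns, and the entire proof is essentially one display of algebra followed by a geometric-series evaluation.
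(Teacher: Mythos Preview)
Your proposal is correct and matches the paper's own proof essentially line for line: expand $U_h^\pi$, swap $\sum_{h=0}^\infty\sum_{t=0}^h$ to $\sum_{t=0}^\infty\sum_{h=t}^\infty$, evaluate the geometric tail, and recognize $V^\pi(d_0)$. Your explicit appeal to Tonelli/absolute convergence is a nice addition that the paper leaves implicit.
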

\begin{proof}
    The proof follows from exchanging the order of summations:
    \begin{align*}
        (1-\gamma) \sum_{h=0}^\infty \gamma^h U_h^\pi(d_0)
        &= (1-\gamma) \sum_{h=0}^\infty \gamma^h \E_{\rho^\pi} \left[  \sum_{t=0}^h  r(s_t, a_t)  \right]\\
        &= (1-\gamma) \E_{\rho^\pi} \left[ \sum_{t=0}^\infty  r(s_t, a_t) \sum_{h=t}^\infty \gamma^h  \right]\\
        &= \E_{\rho^\pi} \left[ \sum_{t=0}^\infty  \gamma^t  r(s_t, a_t) \right] \\
        &= V^\pi(d_0)
    \end{align*}
\end{proof}

\begin{lemma}[Performance Difference Lemma~\citep{kakade2002approximately,cheng2020policy}] \label{lm:pdl}
    Let $\MM$ be an MDP and $\pi$ be a policy.
    For any function $f:\SS\to\R$ and any initial state distribution $d_0$, it holds that
    \begin{align*}
        V^\pi(d_0) - f(d_0) = \frac{1}{1-\gamma} \E_{(s, a) \sim d^\pi}[r(s,a) + \gamma \E_{s'|s,a}[f(s')] - f(s)]
    \end{align*}
\end{lemma}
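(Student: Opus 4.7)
The plan is to prove the Performance Difference Lemma by starting from the right-hand side, unrolling the definition of the state-action distribution $d^\pi$, and then telescoping. By definition, $d^\pi(s,a) = (1-\gamma)\sum_{t=0}^\infty \gamma^t d_t^\pi(s)\pi(a|s)$, so substituting this into the RHS lets the prefactor $1/(1-\gamma)$ cancel, yielding $\sum_{t=0}^\infty \gamma^t \E_{\rho^\pi}\bigl[r(s_t,a_t) + \gamma \E_{s'|s_t,a_t}[f(s')] - f(s_t)\bigr]$.

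Next, I would split this into three separate sums (justified by boundedness of $r$ and absolute convergence given the discount $\gamma<1$, assuming $f$ is bounded, which is the regime of interest). The reward sum is exactly $\E_{\rho^\pi}\bigl[\sum_{t=0}^\infty \gamma^t r(s_t,a_t)\bigr] = V^\pi(d_0)$ by definition. For the second sum, I would use the tower property: $\E_{(s_t,a_t)\sim d_t^\pi\pi}\bigl[\E_{s'|s_t,a_t}[f(s')]\bigr] = \E_{s_{t+1}\sim d_{t+1}^\pi}[f(s_{t+1})]$, turning it into $\sum_{t=0}^\infty \gamma^{t+1}\E_{d_{t+1}^\pi}[f]$. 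Re-indexing $t+1 \mapsto t$ gives $\sum_{t=1}^\infty \gamma^t\E_{d_t^\pi}[f]$, which cancels against all but the $t=0$ term of the third sum $\sum_{t=0}^\infty \gamma^t \E_{d_t^\pi}[f]$. The leftover $t=0$ term contributes $-\E_{d_0^\pi}[f(s_0)] = -f(d_0)$ because $d_0^\pi = d_0$. Combining gives $V^\pi(d_0) - f(d_0)$, which matches the LHS.

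The only real subtlety is the interchange of summation and expectation in the telescoping step and the re-indexing bookkeeping; these are routine given boundedness and discounting, so this step is not a genuine obstacle. An equivalent alternative would be to write $V^\pi(d_0) - f(d_0)$ directly via the telescoping identity $\sum_{t=0}^\infty \gamma^t\bigl[\gamma\E[f(s_{t+1})] - \E[f(s_t)]\bigr] = -f(d_0)$ and add $\sum_{t=0}^\infty \gamma^t \E[r(s_t,a_t)] = V^\pi(d_0)$, then repackage the joint expectation using $d^\pi$. Both routes are essentially one-line calculations; I would present the first since it mirrors \cref{lm:discounted value as average value} in style and reuses the same exchange-of-summation technique.
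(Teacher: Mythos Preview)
Your proof is correct and is the standard telescoping argument for the performance difference lemma. Note, however, that the paper does not actually supply its own proof of this statement: \cref{lm:pdl} is stated with citations to \citet{kakade2002approximately} and \citet{cheng2020policy} and is then used as a black box (e.g., in the proof of \cref{lm:simulation lemma} and \cref{th:shielded policy is safe}). So there is no paper-side argument to compare against; your derivation would serve perfectly well as the omitted proof.
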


\begin{corollary} \label{lm:simulation lemma}
    Let $\MM$ and $\widehat{\MM}$ be MDPs with common state and action spaces.
    For any policy $\pi$, the difference in value functions in $\MM$ and $\widehat{\MM}$ satisfies
    \[
        V^\pi(d_0) - \widehat{V}^\pi(d_0) = \frac{1}{1-\gamma} \E_{(s,a) \sim d^\pi}[(\DD^\pi \widehat{Q}^\pi)(s,a) ]
    \]
    where $\DD^\pi$ is the temporal-difference operator of $\MM$:
    \[
        (\DD^\pi Q)(s,a) \coloneqq (\BB^\pi Q)(s,a) - Q(s,a),
    \]
    and $\BB^\pi$ is the Bellman operator of $\MM$:
    \[
        (\BB^\pi Q)(s,a) \coloneqq r(s,a) + \gamma \E_{s' |s,a} [Q(s', \pi)].
    \]
\end{corollary}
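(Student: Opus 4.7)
The plan is to derive the simulation lemma as a direct one-line consequence of the Performance Difference Lemma (\cref{lm:pdl}) by plugging in the value function of the second MDP as the auxiliary function $f$. Concretely, I would apply \cref{lm:pdl} in $\MM$ with the choice $f(s) = \widehat{V}^\pi(s) = \widehat{Q}^\pi(s,\pi)$, which is a well-defined function on the shared state space because $\MM$ and $\widehat{\MM}$ have the same $\SS$ and $\AA$. This gives immediately that
\[
V^\pi(d_0) - \widehat{V}^\pi(d_0) = \frac{1}{1-\gamma}\,\E_{(s,a)\sim d^\pi}\!\left[r(s,a) + \gamma\,\E_{s'|s,a}[\widehat{V}^\pi(s')] - \widehat{V}^\pi(s)\right],
\]
where $r$, the transition kernel, and the occupancy $d^\pi$ are all those of $\MM$.

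Next I would massage the integrand into $(\DD^\pi \widehat{Q}^\pi)(s,a)$. The first two terms are, by the definition of $\BB^\pi$ in the statement, exactly $(\BB^\pi \widehat{Q}^\pi)(s,a)$, since $\widehat{V}^\pi(s') = \widehat{Q}^\pi(s',\pi)$ and $\BB^\pi$ uses the transition dynamics of $\MM$. For the subtracted $\widehat{V}^\pi(s)$ term, I would use the tower property of expectation: under $(s,a)\sim d^\pi$ we have $a\mid s\sim\pi(\cdot|s)$, so
\[
\E_{(s,a)\sim d^\pi}[\widehat{V}^\pi(s)] = \E_{s\sim d^\pi}\!\left[\E_{a\sim\pi|s}[\widehat{Q}^\pi(s,a)]\right] = \E_{(s,a)\sim d^\pi}[\widehat{Q}^\pi(s,a)].
\]
Combining these two observations inside the expectation yields $(\BB^\pi\widehat{Q}^\pi)(s,a) - \widehat{Q}^\pi(s,a) = (\DD^\pi\widehat{Q}^\pi)(s,a)$, which is the claimed identity.

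There is no substantive obstacle here; the only thing that requires care is bookkeeping about which MDP each object belongs to. In particular, $d^\pi$, $r$, $\BB^\pi$, and $\DD^\pi$ are all objects of $\MM$, whereas $\widehat{Q}^\pi$ is the state-action value of $\pi$ in $\widehat{\MM}$, so the ``error'' $\DD^\pi\widehat{Q}^\pi$ is precisely the Bellman residual incurred when one tries to evaluate $\widehat{Q}^\pi$ against $\MM$'s dynamics and reward. Noting that $\widehat{V}^\pi$ is a scalar function of states and may legitimately play the role of $f$ in \cref{lm:pdl} is the key (and essentially only) observation driving the proof.
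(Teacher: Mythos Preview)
Your proposal is correct and is exactly the paper's approach: the paper's proof is the one-liner ``Set $f = \widehat{V}^\pi$ and observe that $\widehat{V}^\pi(s) = \widehat{Q}^\pi(s, \pi)$,'' and you have simply spelled out the remaining bookkeeping (including the tower-property step to pass from $\widehat{V}^\pi(s)$ to $\widehat{Q}^\pi(s,a)$ under the expectation) that the paper leaves implicit.
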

\begin{proof}
    Set $f = \widehat{V}^\pi$ and observe that $\widehat{V}^\pi(s) = \widehat{Q}^\pi(s, \pi)$.
\end{proof}

\subsection{Proof of Equivalent CMDP Formulation in \cref{sec:problem formulation}} \label{app:proof of CMDP version}

Here we show that \eqref{eq:basic formulation} and \eqref{eq:CMDP formulation} are the same by proving  the equivalence
\begin{align} \label{eq:equivalence between constraints}
    (1-\gamma) \sum_{h=0}^\infty \gamma^h \prob(\xi_h\subset\safeset |\pi) \geq 1-\delta
     \quad  \Longleftrightarrow \quad
    \overline{V}^\pi(d_0) \leq \delta
\end{align}
By the definition of the cost function $c(s,a) = \one\{ s=\violation \}$ and absorbing property of $\unsafeset=\{\violation, \absorbing\}$, we can write
\begin{equation} \label{eq:probability of violation as sum of costs}
    1 - \prob(\xi_h\subset\safeset |\pi)
    = \prob( \violation \in \xi_h |\pi)
    = \E_{\rho^\pi} \left[ \sum_{t=0}^h c(s_t, a_t) \right]
\end{equation}
since $\violation$ can only appear at most once within $\xi_h$.
Substituting this equality into the negation of the chance constraint, 
\begin{align*}
    1 - (1-\gamma) \sum_{h=0}^\infty \gamma^h \prob(\xi_h\subset\safeset |\pi)
    &= (1-\gamma) \sum_{h=0}^\infty \gamma^h \E_{\rho^\pi} \left[ \sum_{t=0}^h c(s_t, a_t) \right]\\
    &= \E_{\rho^\pi} \left[ \sum_{t=0}^\infty \gamma^t c(s_t, a_t) \right] \\
    &= \overline{V}^\pi(d_0)
\end{align*}
where the second equality follows from \cref{lm:discounted value as average value}. Therefore, \eqref{eq:equivalence between constraints} holds.

\subsection{Proof for Intervention Rules in \cref{sec:method}}

\subsubsection{Admissible Rules and Pessimism}

\PessimsticEstimate*
\begin{proof}[Proof of \cref{th:pessimistic estimate}]
The proof follows by repeating the inequality of $\overline{Q}$.
    \begin{align*}
     \overline{Q}(s,a)  &
     \geq c(s,a) + \gamma \E_{s'|s,a}[\overline{Q}(s',\mu)] -\sigma \\
    &\geq c(s,a) + \gamma  \E_{s'|s,a} \left[ c(s',\mu),  + \gamma \E_{s''|s',\mu}[\overline{Q}^\mu(s'',\mu)]  \right] - (1+\gamma) \sigma   \\
    &\quad \vdots \\
    &\geq \overline{Q}^\mu(s',\mu) - \frac{\sigma}{1-\gamma} .
\end{align*}
\end{proof}

\subsubsection{Example Intervention Rules}
\ExampleInterventionRule*
\begin{proof}[Proof of \cref{th:example admissible intervention rules}]

    We show each intervention rule $\GG=(\overline{Q},\mu,\eta)$ below satisfies the admissibility condition
    \begin{align*}
        \overline{Q}(s,a) + \sigma \geq c(s,a) + \gamma \E_{s'\sim P |s,a}[ \overline{Q}(s',\mu) ] .
    \end{align*}
    For convenience, we define the Bellman operator $\overline\BB^\mu$ as $\overline\BB^\mu Q(s,a) \coloneqq c(s,a) + \gamma \E_{s' | s, a}[Q(s, \mu)]$.
    Then the admissibility condition can be written as $\overline{Q}(s,a) + \sigma \geq (\overline\BB^\mu \overline{Q})(s,a)$ for any $s \in \safeset$ and $a \in \AA$.
    Also, we write $\overline{Q} \in [0,\gamma]$ on $\safeset$ if $\overline{Q}(s,a)\in[0,\gamma]$ for all $s\in\safeset$ and $a\in\AA$.

    \begin{enumerate}
        \item \textbf{Baseline policy}:
            We know $\GG = (\overline{Q}^\mu, \mu, \eta)$ is admissible since $\overline{Q}^\mu  = \overline\BB^\mu \overline{Q}^\mu$.
            For $\GG = (\overline{Q}^\mu, \mu^+, \eta)$, we have $\overline{Q}^\mu \geq \overline\BB^{\mu^+} \overline{Q}^\mu$ since $\mu^+$ is greedy with respect to $\overline{Q}^\mu$.
            Also, by the definition of the cost $c$ and transition dynamics $P$, we know that $\overline{Q}^\mu(s,a) \in [0, 1]$ for all $s \in \SS$ and $a \in \AA$.
            Furthermore, when $s \in \safeset$, we have $c(s,a)$ and therefore $\overline{Q}^\mu(s,a) = \gamma \E_{s' | s, a}[\overline{Q}^\mu (s', \mu)] \in [0, \gamma]$.

        \item \textbf{Composite intervention}:
            For any $k \in \{1, \dots, K\}$, the following bound holds:
            \begin{align*}
            (\overline\BB^{\mu_{\min}} \overline{Q}_{\min})(s,a)
            &= c(s,a) + \gamma \E_{s' | s, a}[\overline{Q}_{\min}(s', \mu_{\min})] \\
            &\leq c(s,a) + \gamma \E_{s' | s, a} [\overline{Q}_{\min}(s', {\color{blue} \mu_k})] \\
            &\leq c(s,a) + \gamma \E_{s' | s, a} [{\color{blue} \overline{Q}_k}(s', \mu_k)] \\
            &\leq \overline{Q}_k(s,a) + \sigma_k \\
            &\leq \overline{Q}_k(s,a) + \sigma_{\max},
            \end{align*}
            where the first inequality comes from $\mu_{\min}$ being a minimizer of $\overline{Q}_{\min}$, and the second inequality from $\overline{Q}_{\min}$ being a pointwise minimum of $\{\overline{Q}_k\}_{k=1}^K$.
            Since this holds for every $k$, we conclude:
            \begin{align*}
                (\overline\BB^{\mu_{\min}} \overline{Q}_{\min})(s,a)
                &\leq \min_k \left[ \overline{Q}_k (s,a) + \sigma_{\max} \right] \\
                &= \min_k \overline{Q}_k(s,a) + \sigma_{\max} \\
                &= \overline{Q}_{\min}(s,a) + \sigma_{\max},
            \end{align*}
            which establishes the Bellman bound holds.
            Finally, since each $\overline{Q}_k$ satisfies $\overline{Q}_k \in [0, \gamma]$ on $\safeset$, we conclude that $\overline{Q}_{\min}$ has the same range.
            Therefore, $\GG$ is $\sigma_{\max}$-admissible.

        \item \textbf{Value iteration}:
            Define shortcuts $\overline{Q}_k \coloneqq \overline\TT^k \overline{Q}$, where $\overline{Q}_0 = \overline{Q}$.

            We first show that, by policy improvement, we have $\overline{Q}_k (s,a) \leq \overline{Q}_{k-1} (s,a) + \gamma^{k-1} \sigma$ on $\safeset \times \AA$.
            We do this by induction.
            First, we see that:
            \begin{align*}
                \overline{Q}_1 (s,a)
                &= \overline{\TT}\, \overline{Q}_0(s,a) \\
                &= c(s,a) + \gamma \E_{s' | s, a} \left[ \min_{a'} \overline{Q}_0(s', a') \right] \\
                &= c(s,a) + \gamma \E_{s' | s, a} \left[ \min_{a'} {\color{blue} \overline{Q}}(s', a') \right] \\
                &\leq c(s,a) + \gamma \E_{s' | s, a} \left[ \overline{Q}(s', {\color{blue} \mu}) \right] \\
                &\leq \overline{Q}(s,a) + \sigma \\
                &= \overline{Q}_0(s,a) + \sigma .
            \end{align*}
            Now suppose $\overline{Q}_\kappa (s,a) \leq \overline{Q}_{\kappa-1} (s,a) + \gamma^{\kappa - 1} \sigma$ holds on $\safeset \times \AA$ for some $\kappa$.
            Therefore,
            \begin{align*}
                \overline{Q}_{\kappa+1}(s,a)
                &= \overline{\TT}\, \overline{Q}_\kappa (s,a) \\
                &= c(s,a) + \gamma \E_{s' | s, a} \left[ \min_{a'} \overline{Q}_\kappa (s', a') \right] \\
                &\leq c(s,a) + \gamma \E_{s' | s, a} \left[ \min_{a'} \overline{Q}_{\kappa-1} (s', a')  \right] + \gamma^\kappa \sigma \\
                &= \overline{\TT}\, \overline{Q}_{\kappa-1}(s,a) + \gamma^\kappa \sigma \\
                &= \overline{Q}_\kappa (s,a) + \gamma^\kappa \sigma .
            \end{align*}

        Using this inequality, we now show that $\GG^k = (\overline{Q}_k, \mu^k, \eta)$ is indeed $\gamma^k \sigma$-admissible:
        \begin{align*}
            \overline{Q}_k(s,a)
            &= \overline{\TT}\, \overline{Q}_{k-1} (s,a) \\
            &= c(s,a) + \gamma \E_{s' | s, a} \left[ \min_{a'} \overline{Q}_{k-1} (s', a') \right] \\
            &\geq c(s,a) + \gamma \E_{s' | s, a} \left[ \min_{a'} \overline{Q}_k (s', a') \right] - \gamma^k \sigma \\
            &= \overline{\TT}\, \overline{Q}_k (s,a) - \gamma^k \sigma \\
            &= \overline{\BB}^{\mu^k} \overline{Q}_k (s,a) - \gamma^k \sigma,
        \end{align*}
        where the inequality was used in the third line.
        This establishes the Bellman bound holds.

        We prove that $\overline{Q}_k \in [0, \gamma]$ on $\safeset$ by induction.
        Clearly, $\overline{Q}_0 = \overline{Q} \in [0, \gamma]$ on $\safeset$ since $\GG$ is $\sigma$-admissible.
        Now suppose $\overline{Q}_\kappa \in [0, \gamma]$ on $\safeset$ for some $\kappa$.
        Then, for any $s \in \safeset$ and $a \in \AA$, we have $\overline{Q}_{\kappa+1}(s,a) = \gamma \E_{s' | s, a} [\min_{a'} \overline{Q}_\kappa (s,a)] \in [0, \gamma]$.
        Therefore, $\GG^k$ is $\gamma^k \sigma$-admissible.

    \item \textbf{Optimal intervention}:
        This is a special case of case 1.

    \item \textbf{Approximation}:
        The following holds on $\safeset \times \AA$:
        \begin{align*}
            \hat{Q}(s,a)
            &= \hat{Q}(s,a) - \overline{Q}(s,a) + \overline{Q}(s,a) \\
            &\geq -\delta + (\overline\BB^\mu \overline{Q})(s,a) - \sigma \\
            &= -\delta - \sigma + c(s,a) + \gamma \E_{s' | s, a} [\overline{Q}(s', \mu)] \\
            &\geq -\delta - \sigma + c(s,a) + \gamma \E_{s' | s, a} [\hat{Q}(s', \mu) - \delta] \\
            &= -\delta - \sigma - \gamma\delta + \overline\BB^\mu \hat{Q}(s,a) .
        \end{align*}
        That is, $\overline\BB^\mu \hat{Q}(s,a) \leq \hat{Q}(s,a) + \sigma + (1+\gamma)\delta$.
        Therefore, $\hat\GG = (\hat{Q}, \mu, \eta)$ is $(\sigma + (1+\gamma)\delta)$-admissible.
    \end{enumerate}
\end{proof}

\subsubsection{Safety Guarantee of Shielded Policy}
Before proving \cref{th:shielded policy is safe}, we prove two lemmas,
one showing that the average advantage of a shielded policy satisfies the intervention threshold (\cref{lm:shielded policy advantage inequality})
and the other stating that the cost-value function is equal to the expected occupancy of the unsafe set (\cref{lm:value function and expected occupancy of unsafe set}).

\begin{lemma} \label{lm:shielded policy advantage inequality}
    For some policy $\pi$ and intervention rule $\GG = (\overline{Q}, \mu, \eta)$, let $\pi' \coloneqq \GG(\pi)$ and $\overline{A}(s, a) \coloneqq \overline{Q}(s,a) - \overline{Q}(s, \mu)$.
    Then, $\overline{A}(s, \pi') \leq \eta$ for any $s \in \safeset$.
\end{lemma}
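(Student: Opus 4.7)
The plan is to expand $\overline{A}(s, \pi')$ by linearity of expectation using the explicit form of $\pi'$ given in \eqref{eq:shielded policy}, and then bound each of the two resulting pieces separately. Specifically, I would write
\begin{align*}
\overline{A}(s, \pi')
&= \sum_{a \in \AA} \pi'(a|s)\, \overline{A}(s,a) \\
&= \sum_{a : (s,a)\notin\II} \pi(a|s)\, \overline{A}(s,a) \;+\; w(s) \sum_{a \in \AA} \mu(a|s)\, \overline{A}(s,a).
\end{align*}
(For continuous $\AA$ the sums are replaced by integrals; nothing changes.)

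The second term collapses immediately: by the definition $\overline{A}(s,a) = \overline{Q}(s,a) - \overline{Q}(s,\mu)$, averaging over $a\sim \mu(\cdot|s)$ gives $\overline{A}(s,\mu) = \overline{Q}(s,\mu) - \overline{Q}(s,\mu) = 0$. So the backup component contributes nothing, regardless of the weight $w(s)$.

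For the first term, every $a$ in the sum satisfies $(s,a)\notin \II$, and by the definition of $\II$ in \eqref{eq:intervention set} this exactly means $\overline{A}(s,a) \leq \eta$. Since $\pi(a|s) \geq 0$, pulling the bound out yields
\begin{align*}
\sum_{a : (s,a)\notin\II} \pi(a|s)\, \overline{A}(s,a) \;\leq\; \eta \sum_{a : (s,a)\notin\II} \pi(a|s) \;=\; \eta\, w(s).
\end{align*}
Since $\eta \geq 0$ and $w(s) \in [0,1]$, we have $\eta\, w(s) \leq \eta$, which combined with the vanishing of the backup term gives the claim.

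There is no real obstacle here; the only subtlety is recognizing that the backup contribution is identically zero because $\overline{A}$ is defined relative to $\mu$ itself, which is precisely why the shielding can be analyzed without further assumptions on $\mu$. The argument does implicitly use $\eta \geq 0$ to pass from $\eta w(s)$ to $\eta$; this is consistent with the assumption $\eta \in [0,1]$ stated in \cref{sec:intervention}.
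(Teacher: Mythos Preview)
Your proof is correct and follows essentially the same route as the paper's: expand $\overline{A}(s,\pi')$ via the definition of $\pi'$, observe that the $\mu$-term vanishes because $\overline{A}(s,\mu)=0$, and bound the remaining sum using $\overline{A}(s,a)\leq\eta$ on the complement of $\II$. The only cosmetic difference is that you identify the partial sum $\sum_{a:(s,a)\notin\II}\pi(a|s)$ with $w(s)$ before bounding it by $1$, whereas the paper bounds the sum by $1$ directly; both invoke $\eta\geq 0$ at that step.
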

\begin{proof}
    We use the definition of $\pi'$ (in \eqref{eq:shielded policy}), the facts that $\overline{A}(s, \mu) = 0$, and that $(s,a)\notin \II$ if and only if $\overline{A}(s,a) \leq \eta$.
    The following then holds:
    \begin{align*}
        \overline{A}(s, \pi')
        &= \sum_{a \in \AA} \pi'(a|s) \overline{A}(s,a) \\
        &= \smashoperator{\sum_{a : (s,a) \notin \II}} \pi(a|s) \overline{A}(s,a) + w(s) \sum_{a \in \AA} \mu(a|s) \overline{A}(s,a) \\
        &\leq \eta \smashoperator{\sum_{a : (s,a) \notin \II}} \pi(a|s) + w(s) \overline{A}(s, \mu) \\
        &\leq \eta \cdot 1 + w(s) \cdot 0 \\
        &= \eta .
    \end{align*}
\end{proof}

\begin{lemma} \label{lm:value function and expected occupancy of unsafe set}
    For any policy $\pi$,
    \[
        \E_{s \sim d^\pi} [\one\{s \in \{\violation, \absorbing\}\}] = \overline{V}^\pi(d_0) .
    \]
\end{lemma}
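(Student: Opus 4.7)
The plan is to reduce both sides to sums over per-time-step probabilities and match them using the absorbing structure of $\unsafeset = \{\violation, \absorbing\}$. On the left, I would expand with the definition of the discounted state distribution,
$$\E_{s \sim d^\pi}[\one\{s \in \unsafeset\}] = (1-\gamma) \sum_{t=0}^\infty \gamma^t \prob(s_t \in \unsafeset \mid \pi),$$
and on the right, I would unroll $\overline{V}^\pi$ using $c(s,a) = \one\{s = \violation\}$,
$$\overline{V}^\pi(d_0) = \E_{\rho^\pi}\left[\sum_{t=0}^\infty \gamma^t \one\{s_t = \violation\}\right] = \sum_{t=0}^\infty \gamma^t \prob(s_t = \violation \mid \pi).$$

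Next, I would exploit the absorbing property of $\unsafeset$: because $d_0$ is supported on $\safeset$ and every entry into $\unsafeset$ goes through $\violation$ exactly once before being trapped at $\absorbing$, the event $\{s_t \in \unsafeset\}$ coincides with $\{\exists s \leq t : s_s = \violation\}$. Since these latter events are disjoint across $s$, I get the key identity
$$\prob(s_t \in \unsafeset \mid \pi) = \sum_{s=0}^t \prob(s_s = \violation \mid \pi),$$
which is essentially \eqref{eq:probability of violation as sum of costs} applied at horizon $t+1$ (with the single-visit fact that $\sum_{s=0}^t \prob(s_s = \violation) = \E_{\rho^\pi}[\sum_{s=0}^t c(s_s,a_s)]$).

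Substituting this into the LHS and swapping the order of summation yields
$$(1-\gamma) \sum_{t=0}^\infty \gamma^t \sum_{s=0}^t \prob(s_s = \violation \mid \pi) = (1-\gamma) \sum_{s=0}^\infty \prob(s_s = \violation \mid \pi) \sum_{t=s}^\infty \gamma^t = \sum_{s=0}^\infty \gamma^s \prob(s_s = \violation \mid \pi),$$
which exactly matches the RHS expansion of $\overline{V}^\pi(d_0)$, completing the proof.

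There is no genuine obstacle; the argument is bookkeeping driven by the absorbing structure. The one subtle point worth flagging explicitly is the implicit assumption that $d_0$ is supported on $\safeset$, so the agent cannot start in $\absorbing$ (else an extra boundary term would appear, since starting at $\absorbing$ contributes to the LHS indicator but not to the RHS cost). This is a standing assumption of the safe RL setup in \cref{sec:problem formulation}. As a shorter alternative, one could instead invoke \cref{lm:discounted value as average value} on $\overline{V}^\pi(d_0)$ to rewrite it as $(1-\gamma)\sum_h \gamma^h \E_{\rho^\pi}[\sum_{t=0}^h c(s_t,a_t)]$ and then identify each inner expectation with $\prob(s_{h+1} \in \unsafeset \mid \pi)$ via the absorbing structure, finishing in essentially one line.
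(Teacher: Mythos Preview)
Your proof is correct. Both your argument and the paper's hinge on the same two ingredients: the absorbing structure of $\unsafeset$ (so that $\violation$ is visited exactly once, with $d_0$ supported on $\safeset$) and a swap of a double geometric sum. The only difference is organizational: the paper splits the indicator as $\one\{s=\violation\}+\one\{s=\absorbing\}$ and asserts the occupancy ratio $\E_{d^\pi}[\one\{s=\absorbing\}]=\tfrac{\gamma}{1-\gamma}\E_{d^\pi}[\one\{s=\violation\}]$ (which, if unpacked, is exactly your per-time-step swap), whereas you keep $\unsafeset$ whole and use the cumulative identity $\prob(s_t\in\unsafeset)=\sum_{s\le t}\prob(s_s=\violation)$ directly. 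Your version is arguably more transparent because it makes the summation swap explicit rather than hiding it in the stated ratio; the paper's version is more compact. Your remark that $d_0$ must be supported on $\safeset$ is a valid caveat that the paper leaves implicit.
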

\begin{proof}
    We know from the definition of the cost function that $\overline{V}^\pi(d_0) = \frac{1}{1-\gamma} \E_{s \sim d^\pi}[\one\{s = \violation\}]$.
    From the absorbing property of $\unsafeset$, we have $\E_{s \sim d^\pi}[\one\{s = \absorbing\}] = \frac{\gamma}{1-\gamma} \E_{s \sim d^\pi}[\one\{s = \violation\}]$.
    We can then derive
    \begin{align*}
        \E_{s \sim d^\pi} [\one\{s \in \{\violation, \absorbing\}\}]
        &= \E_{s \sim d^\pi} [\one\{s = \violation\}] + \E_{s \sim d^\pi} [\one\{s = \absorbing\}] \\
        &= \frac{1}{1-\gamma} \E_{s \sim d^\pi} [\one\{s = \violation\}] \\
        &= \overline{V}^\pi(d_0) .
    \end{align*}
\end{proof}

We now prove the safety guarantee of the shieled policy $\pi'$.

\IntervenedPolicyIsSafe*
\begin{proof}

    \begin{align*}
        \overline{Q}(\violation,a)=1 \qquad \text{and} \qquad \overline{Q}(\absorbing,a)=0 \qquad \text{for all}~ a \in \AA .
    \end{align*}

    Define $\overline{V}(s) \coloneqq \overline{Q}(s,\mu)$.
    Since
    $ c(s,a) + \gamma \E_{s'|s,a}[ \overline{V}(s')] = \overline{V}(s)$ when $s \in \{\violation, \absorbing\}$, we can use the performance difference lemma (\cref{lm:pdl}) to derive
    \begin{align*}
        \overline{V}^{\pi'}(d_0) - \overline{Q}(d_0,\mu)
        &= \frac{1}{1-\gamma} \E_{(s, a) \sim d^{\pi'}} [ c(s,a) + \gamma \E_{s'|s,a}[ \overline{V}(s')] - \overline{V}(s) ] \\
        &= \frac{1}{1-\gamma} \E_{(s, a) \sim d^{\pi'}} [ \left(  c(s,a) + \gamma \E_{s'|s,a}[ \overline{V}(s')] - \overline{V}(s) \right)  \one\{s  \not\in \{\violation, \absorbing\} \} ] \\
        &\leq \frac{1}{1-\gamma} \E_{(s, a) \sim d^{\pi'}} [ \left(  \min\{\sigma,\gamma\} + \overline{Q}(s,a) - \overline{V}(s) \right) \one\{s  \not\in \{\violation, \absorbing\} \} ] \\
        &\leq  \frac{ \min\{\sigma,\gamma\}+\min\{\eta,\gamma\}  }{1-\gamma}  \E_{s \sim d^{\pi'}} [ \one\{s  \not\in \{\violation, \absorbing\} \} ] \\
        &= \frac{ \min\{\sigma,\gamma\} + \min\{\eta,\gamma\} }{1-\gamma} \overline{V}^{\pi'}(d_0),
    \end{align*}
    where the first inequality comes from $\overline{Q}$ being $\sigma$-admissible and $\gamma$-admissible,
    the second inequality from $\overline{A}(s, \pi') \leq \eta$~(\cref{lm:shielded policy advantage inequality}) and $\overline{A}(s, \pi') \leq \gamma$~(\cref{def:admissible intervention}) for $s \notin \{\violation, \absorbing\}$,
    and the last equality from~\cref{lm:value function and expected occupancy of unsafe set}.

    Therefore, after some algebraic rearrangement,
    \begin{align*}
        \overline{V}^{\pi'}(d_0)
        &\leq \frac{ (1-\gamma)\overline{Q}(d_0,\mu) + \min\{\sigma,\gamma\} +\min\{\eta,\gamma\}  }{1-\gamma + \min\{\sigma,\gamma\} + \min\{\eta,\gamma\}}\\
        &\leq \overline{Q}(d_0,\mu) + \frac{\min\{\sigma,\gamma\}+\min\{\eta,\gamma\} }{1-\gamma}\\
        &\leq \overline{Q}(d_0,\mu) + \frac{\min\{\sigma+\eta,2\gamma\} }{1-\gamma} .
    \end{align*}

\end{proof}

\subsubsection{An Optimal Intervention Rule}
First, we show that every state-action pair visited by $\pi'$ will not have an advantage function lower than that of the optimal policy for $\overline{\MM}$.

\begin{restatable}{lemma}{OptimalInterventionLemma} \label{lm:optimal intervention lemma}
    Let $\overline\pi^*$ be an optimal policy for $\overline{\MM}$, $\overline{Q}^*$ be its state-action value function, and $\overline{V}^*$ be its state value function.
    Let $G_0 = \{(\overline{Q}, \mu, 0) : (\overline{Q}, \mu, 0)~\text{is admissible}, \; \overline{Q}(d_0, \mu) = \overline{V}^*(d_0)\}$ be a subset of admissible intervention rules with a threshold of zero and average $\overline{Q}$ that matches $\overline{V}^*$.
    Define $\overline{A}^*(s,a) = \overline{Q}^*(s,a) - \overline{Q}^*(s, \overline{\pi}^*)$ as the advantage function of the optimal policy.
    For some intervention rule $\GG \in G_0$ and policy $\pi$, let $\pi' = \GG(\pi)$.

    Then, the inequality $\overline{A}(s,a) \geq \overline{A}^*(s,a)$ holds for all $a \in \AA$ almost surely over the distribution $d^{\pi'}(s)$.
\end{restatable}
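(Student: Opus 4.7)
The plan is to establish that the pessimism gap $\Delta(s) \coloneqq \overline{Q}(s,\mu) - \overline{V}^*(s)$ vanishes on the support of $d^{\pi'}$; once this is known, the inequality $\overline{A}(s,a) \geq \overline{A}^*(s,a)$ follows in one line from a global bound $\overline{Q}(s,a) \geq \overline{Q}^*(s,a)$ on $\safeset \times \AA$. First I would collect two pointwise facts. Since $\GG$ is admissible (that is, $\sigma = 0$), Proposition~\ref{th:pessimistic estimate} gives $\overline{Q}^\mu(s,a) \leq \overline{Q}(s,a)$ for every $(s,a) \in \safeset \times \AA$; combining this with the optimality of $\overline\pi^*$ in $\overline\MM$, which gives $\overline{Q}^*(s,a) \leq \overline{Q}^\mu(s,a)$, yields $\overline{Q}^*(s,a) \leq \overline{Q}(s,a)$. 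Averaging over $\mu$ and using $\overline{V}^*(s) \leq \overline{Q}^*(s,\mu)$ then shows $\Delta(s) \geq 0$ for every $s$. On the boundary states $\violation$ and $\absorbing$, the conventions used in the safety proof of the shielded policy force $\overline{Q}(s,\mu) = \overline{V}^*(s)$, so $\Delta$ already vanishes there and the recursion below holds trivially.

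The heart of the argument is the contraction
\begin{equation*}
    \Delta(s) \;\geq\; \gamma\,\E_{a \sim \pi'(\cdot\mid s),\, s' \sim P(\cdot\mid s,a)}[\Delta(s')] \qquad \text{for all } s \in \safeset.
\end{equation*}
To derive it, I would average the admissibility inequality over $\pi'$ to obtain $\overline{Q}(s,\pi') \geq c(s,\pi') + \gamma\,\E_{s' \mid s,\pi'}[\overline{Q}(s',\mu)]$; apply Lemma~\ref{lm:shielded policy advantage inequality} with $\eta = 0$ to upper-bound the left-hand side by $\overline{Q}(s,\mu)$; and use the Bellman optimality of $\overline{V}^*$ to get the companion bound $\overline{V}^*(s) \leq \overline{Q}^*(s,\pi') = c(s,\pi') + \gamma\,\E_{s' \mid s,\pi'}[\overline{V}^*(s')]$. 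Subtracting the two gives the desired recursion on $\Delta$.

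Propagating this under the occupancies $d_t^{\pi'}$ yields $\E_{s_t \sim d_t^{\pi'}}[\Delta(s_t)] \geq \gamma\,\E_{s_{t+1} \sim d_{t+1}^{\pi'}}[\Delta(s_{t+1})]$, and hence by induction $\gamma^t\,\E_{s_t \sim d_t^{\pi'}}[\Delta(s_t)] \leq \E_{s_0 \sim d_0}[\Delta(s_0)]$. The right-hand side is exactly zero by the hypothesis $\overline{Q}(d_0, \mu) = \overline{V}^*(d_0)$ together with $\Delta \geq 0$; since $\Delta \geq 0$ and $\gamma > 0$, this forces $\Delta = 0$ almost surely under every $d_t^{\pi'}$ and therefore under $d^{\pi'}$. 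Substituting $\overline{Q}(s,\mu) = \overline{V}^*(s)$ into the definition of $\overline{A}(s,a)$ and invoking $\overline{Q}(s,a) \geq \overline{Q}^*(s,a)$ closes the proof. I expect the main obstacle to be the sign bookkeeping in the contraction step: admissibility is a lower Bellman-type bound on $\overline{Q}$, Lemma~\ref{lm:shielded policy advantage inequality} is an upper bound on $\overline{Q}(s,\pi')$, and $\overline{V}^*$ satisfies yet another Bellman-type inequality, and all three have to chain in the right direction to produce $\Delta(s) \geq \gamma\,\E[\Delta(s')]$ rather than its reverse. Everything else --- the global pessimism bound, the iteration in $t$, and the final substitution --- is routine once the recursion is in hand.
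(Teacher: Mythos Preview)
Your proposal is correct and follows essentially the same route as the paper: both arguments establish the pessimism bound $\overline{Q}\ge\overline{Q}^*$ from Proposition~\ref{th:pessimistic estimate}, then use admissibility together with Lemma~\ref{lm:shielded policy advantage inequality} (at $\eta=0$) and the Bellman inequality for $\overline{V}^*$ to propagate the equality $\overline{Q}(s,\mu)=\overline{V}^*(s)$ forward in time from $d_0$, and finally read off $\overline{A}\ge\overline{A}^*$ on the support of $d^{\pi'}$. The only cosmetic difference is packaging: the paper does the induction pointwise (substituting $\overline{Q}(s_t,\mu)=\overline{V}^*(s_t)$ as the inductive hypothesis inside the chain), whereas you first derive the state-independent contraction $\Delta(s)\ge\gamma\,\E_{s'\mid s,\pi'}[\Delta(s')]$ and then average---a slightly cleaner formulation of the same idea.
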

\begin{proof}
    First, we show by induction that running $\pi'$ starting from $d_0$ results in the agent staying in the subset $\SS_\GG = \{s \in \SS : \overline{Q}(s, \mu) = \overline{V}^*(s)\}$.

    For $t = 0$, consider some $s_0 \sim d_0$.
    We observe from admissibility of $\GG$ and~\cref{th:pessimistic estimate} that $\overline{Q}(s,a) \geq \overline{Q}^\mu(s,a) \geq \overline{V}^*(s)$ on $\SS \times \AA$.
    Since $\overline{Q}(d_0, \mu) = \overline{V}^*(d_0)$, we conclude that $\overline{Q}(s_0, \mu) = \overline{V}^*(s_0)$.
    Therefore, $s_0 \in \SS_\GG$ almost surely over $d_0$.

    Now suppose the agent is in $\SS_\GG$ with probability one at some time step $t$.
    Consider some $s_t \sim d_t^{\pi'}$ (observing that $s_t \in \SS_\GG$).
    We assume that $s_t \in \safeset$ (otherwise, the below is trivially true as there is no intervention outside $\safeset$).
    By~\cref{lm:shielded policy advantage inequality} and admissibility, we can derive:
    \begin{align*}
        0 = \eta
        &\geq \overline{A}(s_t, \pi') \\
        &= \overline{Q}(s_t, \pi') - \overline{Q}(s_t, \mu) \\
        &\geq c(s_t, \pi') + \gamma \E_{s_{t+1} \sim \PP | s_t, \pi'} [\overline{Q}(s_{t+1}, \mu)] - \overline{Q}(s_t, \mu) \\
        &= \gamma \E_{s_{t+1} | s_t, \pi'} [\overline{Q}(s_{t+1}, \mu)] - \overline{Q}(s_t, \mu) \\
        &= \gamma \E_{s_{t+1} | s_t, \pi'} [\overline{Q}(s_{t+1}, \mu)] - \overline{V}^*(s_t) \\
        &= \gamma \E_{s_{t+1} | s_t, \pi'} [\overline{Q}(s_{t+1}, \mu)] - \gamma \E_{s_{t+1} | s_t, \overline{\pi}^*} [\overline{V}^*(s_{t+1})],
    \end{align*}
    where the second and fourth equalities are due to $s_t \in \safeset$, and the third equality is due to $s_t \in \SS_\GG$.
    Notice also, since $s_t \in \safeset$, we have
    \begin{align*}
    \gamma \E_{s_{t+1}|s_t,\pi'}[\overline{V}^*(s_{t+1})]  = \overline{Q}^*(s_t, \pi') \geq \overline{Q}^*(s_t, \overline{\pi}^*) = \gamma \E_{s_{t+1}|s_t,\overline{\pi}^*}[\overline{V}^*(s_{t+1})] .
    \end{align*}
    Therefore, combining the two inequalities above, we have
    \begin{align*}
         \E_{s_{t+1}|s_t, \pi'}[\overline{V}^*(s_{t+1})]
         \geq \E_{s_{t+1}|s_t,\pi'}[\overline{Q}(s_{t+1},\mu)]  .
    \end{align*}
    Since $\overline{Q}(s,a) \geq \overline{V}^*(s)$ on $\SS \times \AA$, by the same argument we made for $s_0$, we conclude $\overline{Q}(s_{t+1},\mu)=\overline{V}^*(s_{t+1})$ with probability one. 
    Therefore, the agent stays in the subset $\SS_\GG$.
    
    With this property in mind, let $s \sim d^{\pi'}$.
    Then the following holds for all $a \in \AA$:
    \begin{align*}
        \overline{A}(s,a) 
        &= \overline{Q}(s,a) - \overline{Q}(s,\mu)\\
        &= \overline{Q}(s,a) -  \overline{Q}^*(s,\overline{\pi}^*) \\
        &\geq \overline{Q}^*(s,a) - \overline{Q}^*(s,\overline{\pi}^*) = \overline{A}^*(s,a),
    \end{align*}
    where the second equality is due to $\overline{Q}(s,\mu) = \overline{V}^*(s) = \overline{Q}^*(s,\overline{\pi}^*)$ on $\SS_\GG$.
\end{proof}

\OptimalIntervention*
\begin{proof}
    Let $\xi = (s_0, a_0, s_1, a_1, \dots)$ be any trajectory that has non-zero probabilty in the trajectory distribution $\widetilde{\rho}^\pi$ of $\pi$ on $\widetilde{\MM}$. 
    Let $\II$ and $\II^*$ be the intervention sets of $\GG$ and $\GG^*$, respectively.
    Suppose for some $t$ that $(s_t, a_t) \in \II$.
    We know for $\tau\geq t+1$ that $s_{\tau} = \intervened$.
    In addition, by \cref{lm:optimal intervention lemma}, we have $\overline{A}^*(s_\tau, a_\tau) \leq \overline{A}(s_\tau, a_\tau) \le 0$ for any $\tau \in [0, t-1]$, so $(s_\tau, a_\tau) \notin \II^*$.
    Therefore, the sub-trajectory $(s_\tau, a_\tau)$ with $\tau \in \{0, 1, \dots, t\}$ also has a non-zero probability in $\widetilde{\MM}^*$.
    By this argument, every sub-trajectory in $\SS\times\AA$ with non-zero probability in $\widetilde{\MM}$ also has non-zero probability in $\widetilde{\MM}^*$.
    The final thesis follows from defining the state-action distributions through averaging the trajectory distributions.
\end{proof}

\subsection{Proof for Absorbing MDP in \cref{sec:analysis of absorbing MDPs}}

We derive some properties of the Bellman operator of the absorbing MDP.
\begin{lemma} \label{lm:Bellman operators}
    For a policy $\pi$, let $(\BB^\pi Q)(s,a) \coloneqq r(s,a)+ \gamma \E_{s' | s, a}[Q(s',\pi)]$ denote the Bellman operator of $\pi$ in $\MM$;
    similarly define $\widetilde{\BB}^\pi$ for $\widetilde{\MM}$.
    Let $Q:\widetilde{\SS}\times\AA\to\R$ be some function satisfying $Q(\intervened, a) = 0$ for all $a \in \AA$.

    \begin{enumerate}
        \item The Bellman operator in $\widetilde{\MM}$ can be written as
            \begin{equation} \label{eq:bellman operator in surrogate MDP}
                (\widetilde{\BB}^\pi Q) (s,a) =
                \begin{cases}
                    (\BB^\pi Q)(s,a) \cdot \one\{(s,a) \notin \II\} + \widetilde{R} \cdot \one\{(s,a) \in \II\}, & (s, a) \in \SS \times \AA \\
                    0, & s = \intervened .
                \end{cases}
            \end{equation}
        \item The following holds when the temporal-difference operator $\widetilde{\DD}^\pi$ for $\widetilde\MM$ is applied to the policy's state-action value function $Q^\pi$ for $\MM$:
            \begin{align}
                \left(\widetilde{R} - \frac{1}{1-\gamma}\right) \one\{(s,a) \in \II\}
                \leq (\widetilde\DD^\pi Q^\pi)(s,a)
                &\leq \widetilde{R} \, \one\{(s,a) \in \II\}
                \quad \text{for all}~ (s,a) \in \SS \times \AA \label{eq:range of TD operator} \\
                (\widetilde{\DD}^\pi Q^\pi) (\intervened, a) &= 0, \label{eq:equality at absorbing state}
            \end{align}
            where the definition of $Q^\pi$ is extended to $\intervened$ as $Q^\pi(\intervened, a) = 0$.
    \end{enumerate}
\end{lemma}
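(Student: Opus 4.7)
The plan is a direct case analysis from the definitions of $\widetilde{r}$, $\widetilde{P}$ in \eqref{eq:modified reward} and \eqref{eq:modified dynamics}, with no deep argument required. Both parts just unfold the Bellman operator $\widetilde{\BB}^\pi Q(s,a) = \widetilde{r}(s,a) + \gamma \E_{s' \sim \widetilde{P}|s,a}[Q(s',\pi)]$ on three disjoint regions of $\widetilde{\SS}\times\AA$, namely (i) $(s,a)\in\II$, (ii) $s\in\SS$ with $(s,a)\notin\II$, and (iii) $s=\intervened$, and reassembles.

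For part (1), I would handle the three regions in turn. In region (i), $\widetilde{r}(s,a)=\widetilde{R}$ and $\widetilde{P}(\cdot|s,a)$ is a point mass at $\intervened$, so the expectation collapses to $\gamma Q(\intervened,\pi)=0$ by the assumption $Q(\intervened,\cdot)=0$; thus $\widetilde{\BB}^\pi Q(s,a)=\widetilde{R}$, matching the indicator form. In region (ii), $\widetilde{r}$ and $\widetilde{P}$ agree with $r$ and $P$, and the reachable next states lie in $\SS$ so $Q$ is evaluated only where it equals the original definition; thus $\widetilde{\BB}^\pi Q(s,a)=\BB^\pi Q(s,a)$. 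In region (iii), both terms vanish ($\widetilde{r}(\intervened,a)=0$ and $Q(\intervened,\pi)=0$). Packaging the three cases gives \eqref{eq:bellman operator in surrogate MDP}.

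For part (2), I would substitute $Q=Q^\pi$ (extended by $Q^\pi(\intervened,\cdot)=0$) into the formula from part (1). For $s=\intervened$, $\widetilde{\BB}^\pi Q^\pi(\intervened,a)=0=Q^\pi(\intervened,a)$, so $(\widetilde{\DD}^\pi Q^\pi)(\intervened,a)=0$, yielding \eqref{eq:equality at absorbing state}. For $(s,a)\in\SS\times\AA$ with $(s,a)\notin\II$, part (1) gives $\widetilde{\BB}^\pi Q^\pi(s,a)=\BB^\pi Q^\pi(s,a)=Q^\pi(s,a)$ by the Bellman fixed-point property, so $(\widetilde{\DD}^\pi Q^\pi)(s,a)=0$ and both the lower and upper bounds in \eqref{eq:range of TD operator} hold trivially (the indicators are zero). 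For $(s,a)\in\II$, $(\widetilde{\DD}^\pi Q^\pi)(s,a)=\widetilde{R}-Q^\pi(s,a)$; combined with the uniform bound $Q^\pi(s,a)\in[0,\tfrac{1}{1-\gamma}]$ (from $r\in[0,1]$ and the geometric series), this gives $\widetilde{R}-\tfrac{1}{1-\gamma}\leq (\widetilde{\DD}^\pi Q^\pi)(s,a)\leq \widetilde{R}$, matching the indicator bounds.

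There is no real obstacle here; the only thing to be careful about is making the extension $Q^\pi(\intervened,\cdot)=0$ explicit before invoking part (1), and noting that when $(s,a)\notin\II$ the transition kernel $P(\cdot|s,a)$ is supported in $\SS$ so the original $Q^\pi$ is evaluated on its native domain and the Bellman identity $\BB^\pi Q^\pi=Q^\pi$ applies without modification.
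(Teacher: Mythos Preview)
Your proposal is correct and matches the paper's proof essentially step for step: the paper also does the case split on $(s,a)\in\II$, $(s,a)\in\SS\times\AA\setminus\II$, and $s=\intervened$ for part (1), and for part (2) uses the Bellman fixed-point identity $\BB^\pi Q^\pi = Q^\pi$ to obtain $(\widetilde{\DD}^\pi Q^\pi)(s,a)=\one\{(s,a)\in\II\}(\widetilde{R}-Q^\pi(s,a))$ before invoking $Q^\pi\in[0,\tfrac{1}{1-\gamma}]$. Your extra remark about $P(\cdot|s,a)$ being supported in $\SS$ is a welcome bit of care that the paper leaves implicit.
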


\begin{proof}
    For brevity, let $\Omega(s,a) = \one\{(s,a) \in \II\}$.
    \begin{enumerate}
        \item Since $Q(\intervened, \pi) = 0$, the following holds for any $(s, a) \in \SS \times \AA$:
            \begin{align*}
                (\widetilde{\BB}^\pi Q) (s,a)
                &= \widetilde{r}(s,a) + \gamma \E_{s'\sim\widetilde{P}|s,a}[ Q(s',\pi)] \\
                &= (1-\Omega(s,a)) \left(r(s,a) + \gamma \E_{s' \sim P|s,a}[Q(s', \pi)]\right) + \Omega(s,a) \cdot \widetilde{R} \\
                &= (1 - \Omega(s,a)) \cdot (\BB^\pi Q)(s,a) + \Omega(s,a) \cdot \widetilde{R}
            \end{align*}
            and
            \begin{align*}
                (\widetilde{\BB}^\pi Q) (\intervened,a) = 0 + \gamma Q(\intervened, \pi) = 0 .
            \end{align*}

        \item For \eqref{eq:range of TD operator}, using the fact that $(\BB^\pi Q)^\pi = Q^\pi$, the following applies on $\SS \times \AA$:
            \[
                (\widetilde{\DD}^\pi Q^\pi) (s,a) = (\widetilde{\BB}^\pi Q^\pi) (s,a) - Q^\pi(s,a) = \Omega(s,a) \cdot \left( \widetilde{R} - Q^\pi (s,a) \right) .
            \]
            Since $0 \leq Q^\pi(s,a) \leq \frac{1}{1-\gamma}$, we have
            \[
                \left(\widetilde{R} - \frac{1}{1-\gamma}\right) \Omega(s,a) \leq (\widetilde\DD^\pi Q^\pi)(s,a) \leq \widetilde{R} \, \Omega(s,a) .
            \]

            For the absorbing state in \eqref{eq:equality at absorbing state}, by the extended definition and the equality $(\widetilde{\BB}^\pi Q) (\intervened,a) = 0$, we have
            \[
                (\widetilde{\DD}^\pi Q^\pi) (\intervened, a) = (\widetilde{\BB}^\pi Q^\pi) (\intervened, a) - Q^\pi(\intervened, a) = 0 .
            \]
    \end{enumerate}
\end{proof}

\begin{restatable}{lemma}{IntervenedProbability}\label{lm:intervened probability}
For any policy $\pi$, $ P_\GG(\pi) =  \frac{1}{1-\gamma}\E_{(s,a) \sim \widetilde{d}^ \pi}[ \one\{ (s,a)\in\II \} ]$ .
\end{restatable}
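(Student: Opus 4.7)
I would prove the identity by expanding both sides in terms of the distribution of the first hit time of the intervention set $\II$. Let $\tau \coloneqq \inf\{ t \geq 0 : (s_t, a_t) \in \II \}$ (with $\tau = \infty$ if no such $t$ exists). Because $\widetilde{\MM}$ and $\MM$ share their dynamics up to the time of intervention, the law of $\tau$ under $\pi$ is the same in both MDPs, so I can compute expectations under either.

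First I would attack the right-hand side. Unrolling the definition
\[
\widetilde{d}^\pi(s,a) = (1-\gamma)\sum_{t=0}^\infty \gamma^t \widetilde{d}_t^\pi(s)\,\pi(a|s),
\]
gives
\[
\tfrac{1}{1-\gamma}\E_{(s,a)\sim\widetilde{d}^\pi}[\one\{(s,a)\in\II\}]
= \sum_{t=0}^\infty \gamma^t \widetilde{\prob}\bigl((s_t,a_t)\in\II \mid \pi\bigr).
\]
The key structural observation is that in $\widetilde{\MM}$ the set $\II$ is ``one-shot'': the moment $\pi$ proposes $(s_t,a_t)\in\II$, the agent jumps to the absorbing state $\intervened$, and $(\intervened, a)\notin\II$ for any $a$ (since $\II\subseteq\safeset\times\AA$ and $\intervened\notin\safeset$). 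Therefore $\{(s_t,a_t)\in\II\} = \{\tau=t\}$ and the sum collapses to $\sum_{t=0}^\infty \gamma^t \prob(\tau=t\mid\pi)$.

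Next I would rewrite the left-hand side using the same random variable. Since $\xi^h$ denotes the trajectory segment of state-action pairs at times $0,1,\dots,h$, the event $\xi^h\cap\II\neq\varnothing$ is exactly $\{\tau\leq h\}$. Swapping the order of summation,
\begin{align*}
P_\GG(\pi)
&= (1-\gamma)\sum_{h=0}^\infty \gamma^h \prob(\tau\leq h\mid \pi) \\
&= (1-\gamma)\sum_{t=0}^\infty \prob(\tau=t\mid\pi) \sum_{h=t}^\infty \gamma^h \\
&= \sum_{t=0}^\infty \gamma^t \prob(\tau=t\mid \pi),
\end{align*}
which matches the expression derived for the right-hand side, completing the proof.

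The only subtlety — and the ``main obstacle'' — is the bookkeeping with indexing conventions: one must confirm that $(s_t,a_t)\in\II$ in $\widetilde{\MM}$ happens \emph{exactly} at $t=\tau$ (neither before, by definition of $\tau$, nor after, because of the absorbing transition into $\intervened\notin\safeset$), and that the indexing of $\xi^h$ used in the definition of $P_\GG(\pi)$ is consistent with the weighted-average representation used elsewhere in the paper (the same one that underlies \cref{lm:discounted value as average value} and the equivalence~\eqref{eq:equivalence between constraints}). Once these conventions are pinned down, the two Fubini-type rearrangements above match up without any additional estimates.
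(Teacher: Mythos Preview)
Your proposal is correct and is essentially the same argument as the paper's: the paper also uses that $\II$ is visited at most once in $\widetilde{\MM}$ to write $\prob(\xi^h\cap\II\neq\varnothing\mid\pi,\MM)=\E_{\widetilde\rho^\pi}\bigl[\sum_{t}\one\{(s_t,a_t)\in\II\}\bigr]$ and then applies the Fubini swap of \cref{lm:discounted value as average value}; your hitting-time variable $\tau$ just makes these two steps explicit. One small bookkeeping slip to fix when you write it up: in the paper's convention $\xi^h=(s_0,a_0,\dots,s_{h-1},a_{h-1})$, so $\{\xi^h\cap\II\neq\varnothing\}=\{\tau\le h-1\}$ rather than $\{\tau\le h\}$, though this does not affect the final identity.
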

\begin{proof}
    Notice that for any $h$,
    \begin{align*}
     \prob(\xi^h \cap \II \neq \varnothing \mid \pi, \MM )
     &= \prob(\xi^h \cap \II \neq \varnothing \mid \pi, {\color{blue} \widetilde{\MM}} )= \E_{\widetilde{\rho}^\pi} \left[ \sum_{t=0}^{h-1}   \one\{ (s_t,a_t)\in\II \}  \right] .
    \end{align*}
    By \cref{lm:discounted value as average value},
    \begin{align*}
         \frac{1}{1-\gamma}\E_{(s,a) \sim \widetilde{d}^\pi} [ \one\{ (s,a)\in\II \} ]
         &= \E_{\widetilde{\rho}^\pi} \left[ \sum_{t=0}^{\infty} \gamma^t  \one\{ (s_t,a_t)\in\II \} \right] \\
         &= (1-\gamma) \sum_{h=0}^\infty \gamma^h  \E_{\widetilde{\rho}^\pi} \left[ \sum_{t=0}^{h-1}   \one\{ (s_t,a_t)\in\II \}  \right] \\
         &= (1-\gamma) \sum_{h=0}^\infty \gamma^h  \prob(\xi^h \cap \II \neq \varnothing \mid \pi, \MM ) \\
         &= P_\GG(\pi) .
    \end{align*}
\end{proof}

Using the above results, we can bound the difference between the values of  the original and the absorbing MDPs.

\ValueOffset*
\begin{proof}[Proof of \cref{th:value offset}]
    First, extend the definition of $Q^\pi$ to $\intervened$ as $Q^\pi(\intervened, a) = 0$ for any $a \in \AA$.
    By \cref{lm:simulation lemma}, we have
    \[
        \widetilde{V}^\pi(d_0) - V^\pi(d_0) = \frac{1}{1-\gamma}\E_{(s,a) \sim \widetilde{d}^\pi}[(\widetilde{\DD}^\pi Q^\pi)(s, a)]
    \]
   By \cref{lm:Bellman operators}, we can derive
   \[
       \left(\widetilde{R} - \frac{1}{1-\gamma}\right) \frac{\E_{(s,a) \sim \widetilde{d}^\pi} [\one\{(s,a) \in \II\}]}{1-\gamma}
       \leq \widetilde{V}^\pi (d_0) - V^\pi (d_0)
       \leq \widetilde{R} \, \frac{\E_{(s,a) \sim \widetilde{d}^\pi} [\one\{(s,a) \in \II\}]}{1-\gamma} .
   \]
   Finally, substituting the equality from \cref{lm:intervened probability} and negating the inequality concludes the proof.
\end{proof}

Next we derive some lemmas, which will be later to used to show that when the intervention set is partial, the unconstrained reduction is effective.

\begin{lemma} \label{lm:safer policy}
    Let $\II \subset \safeset \times\AA$ be partial, and let $\FF = (\safeset \times \AA) \setminus \II$ be the state-action pairs that are not intervened.
    For an arbitrary policy $\pi$, define
    \begin{equation} \label{eq:non-intervened policy}
        \pi_f(a|s) \coloneqq \pi(a|s) \one\{(s,a) \in \FF \} + f(s,a),
    \end{equation}
    where $f(s,a)$ is some arbitrary non-negative function which is zero on $\II$ and that ensures $\sum_{a\in\AA}\pi_f(a|s)=1$ for all $s \in \SS$.
    Define
    \begin{align*}
        \widetilde{J}^{\pi}_+ &\coloneqq \frac{1}{1-\gamma} \E_{(s,a) \sim \widetilde{d}^\pi}[ r(s,a) \cdot \one\{ (s,a) \in \FF \} ]
        \\
        \widetilde{J}^{\pi}_- &\coloneqq \frac{1}{1-\gamma} \E_{(s, a) \sim \widetilde{d}^\pi}[ \widetilde{R} \cdot \one\{ (s,a) \in \II \} ]
    \end{align*}
    as the expected returns in $\FF$ and $\II$, respectively.

    The following are true:
    \begin{enumerate}
        \item $\widetilde{V}^\pi(d_0) = \widetilde{J}^\pi_+ + \widetilde{J}^\pi_-$.
        \item $\widetilde{d}^{\pi_f}(s,a) \geq \widetilde{d}^\pi(s,a)$ for all $(s,a) \in \FF$.
        \item $\widetilde{J}^{\pi_f}_+ \geq \widetilde{J}^{\pi}_+$.
        \item $\E_{(s,a)\sim\widetilde{d}^{\pi_f}}[\one\{ (s,a)\in \II \}] = 0$, implying  $\widetilde{J}^{\pi_f}_- = 0$.
        \item $\widetilde{V}^{\pi_f}(d_0) \geq \widetilde{V}^\pi(d_0)$ whenever $\widetilde{R} \leq 0$.
            Furthermore, if $\widetilde{R} < 0$ and $\pi(a|s)>0$ for some $(s,a)\in\II$, then ${\widetilde{V}^{\pi_f}(d_0) > \widetilde{V}^\pi(d_0)}$.
    \end{enumerate}
\end{lemma}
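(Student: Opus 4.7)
The plan is to prove the five claims in the order listed, with claim (2) as the technical workhorse and the remaining claims falling out of it together with the structural definition of $\widetilde{\MM}$.

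For claim (1), I would expand $\widetilde{V}^\pi(d_0) = \frac{1}{1-\gamma}\E_{(s,a)\sim\widetilde{d}^\pi}[\widetilde{r}(s,a)]$ and partition $\widetilde{\SS}\times\AA$ into the three disjoint sets $\FF$, $\II$, and $\{\intervened\}\times\AA$. By the definition of $\widetilde{r}$ in \eqref{eq:modified reward}, the reward is $r(s,a)$ on $\FF$, $\widetilde{R}$ on $\II$, and $0$ on $\{\intervened\}\times\AA$, so the third piece vanishes and what remains is exactly $\widetilde{J}^\pi_+ + \widetilde{J}^\pi_-$. Claim (4) is almost a definitional observation: by construction, $\pi_f(a|s)$ equals $\pi(a|s)\one\{(s,a)\in\FF\}+f(s,a)$, and since $f$ vanishes on $\II$ and the indicator makes the other term vanish there too, we have $\pi_f(a|s)=0$ for every $(s,a)\in\II$, which zeros out the expectation and thus $\widetilde{J}^{\pi_f}_-$.

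The heart of the argument is claim (2). I would prove by induction on $t$ the stronger statement that $\widetilde{d}_t^{\pi_f}(s)\ge \widetilde{d}_t^\pi(s)$ for every $s\in\safeset$. The base case $t=0$ is trivial since both equal $d_0$. For the inductive step, I unfold the one-step recursion using $\widetilde{P}$ from \eqref{eq:modified dynamics}: transitions from $\unsafeset$ or from $\intervened$ cannot land in $\safeset$ because both are absorbing in $\widetilde{\MM}$, and from $s\in\safeset$ only actions in $\FF$ contribute to landing in $\safeset$ (actions in $\II$ route to $\intervened$). Hence for $s'\in\safeset$,
\begin{align*}
\widetilde{d}_{t+1}^{\pi_f}(s') - \widetilde{d}_{t+1}^{\pi}(s')
= \sum_{s\in\safeset}\sum_{a:(s,a)\in\FF} P(s'|s,a)\bigl[\widetilde{d}_t^{\pi_f}(s)\pi_f(a|s) - \widetilde{d}_t^\pi(s)\pi(a|s)\bigr].
\end{align*}
On $\FF$ we have $\pi_f(a|s)=\pi(a|s)+f(s,a)\ge \pi(a|s)$, so combining this with the inductive hypothesis makes every summand nonnegative. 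Averaging the time-step bound against $(1-\gamma)\gamma^t$ and multiplying by $\pi_f(a|s)\ge \pi(a|s)$ for $(s,a)\in\FF$ then yields $\widetilde{d}^{\pi_f}(s,a)\ge \widetilde{d}^\pi(s,a)$ on $\FF$, which is what (2) asserts.

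Claims (3) and (5) follow quickly. For (3), since $r\ge 0$ and the integrand in $\widetilde{J}^\pi_+$ is supported on $\FF$, the pointwise domination from (2) immediately yields $\widetilde{J}^{\pi_f}_+\ge \widetilde{J}^\pi_+$. For (5), combine (1), (3), (4) to write
\begin{align*}
\widetilde{V}^{\pi_f}(d_0) \;=\; \widetilde{J}^{\pi_f}_+ + 0 \;\ge\; \widetilde{J}^{\pi}_+ \;\ge\; \widetilde{J}^{\pi}_+ + \widetilde{J}^{\pi}_- \;=\; \widetilde{V}^{\pi}(d_0),
\end{align*}
where the second inequality uses $\widetilde{J}^\pi_- \le 0$ because $\widetilde{R}\le 0$. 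For strictness under $\widetilde{R}<0$ and $\pi(a|s)>0$ for some $(s,a)\in\II$, I would trace back where in the chain strictness enters: if that $(s,a)$ is reached with positive $\widetilde{d}^\pi$ mass then $\widetilde{J}^\pi_- < 0$ immediately. The main obstacle in writing the proof cleanly is the induction in (2)—in particular, making sure that mass moved from $\II$-actions to $\FF$-actions (via $f$) is correctly tracked under the modified dynamics $\widetilde{P}$, and that the absorbing behavior of $\intervened$ and $\unsafeset$ is invoked to rule out any "inflow" back into $\safeset$ from those regions.
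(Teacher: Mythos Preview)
Your proposal is correct and matches the paper's proof closely. The only minor difference is in claim~(2): you induct on the state marginals $\widetilde{d}_t(s)$ over $\safeset$ and then multiply by $\pi_f(a|s)\ge\pi(a|s)$ to obtain the state-action bound, whereas the paper inducts directly on $\widetilde{d}_t(s,a)$ over $\FF$; the two arguments are equivalent (and your partition in claim~(1) should technically also include $\unsafeset\times\AA$, but the reward vanishes there so the conclusion is unaffected).
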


\begin{proof}
    \begin{enumerate}
        \item This follows from the definition of $\widetilde r$ in \eqref{eq:modified reward}.
            
        \item Recall $\widetilde{d}^\pi(s,a) = (1-\gamma) \sum_{t=0}^\infty \gamma^t \widetilde{d}^\pi_t(s, a)$.
            To show the desired result, we show by induction that $\widetilde{d}^{\pi_f}_t(s,a) \ge \widetilde{d}^\pi_t(s, a)$ for all $(s,a) \in \FF$ and $t \geq 0$.
            For $t = 0$, by construction of $\pi_f$, we have $\pi_f(a|s) \geq \pi(a|s)$ for all $(s,a) \in \FF$ and therefore $\widetilde{d}^{\pi_f}_0 (s,a) \geq \widetilde{d}^\pi_0(s,a)$ for all $(s,a) \in \FF$.

            Now suppose that for some $t \geq 0$ the inequality $\widetilde{d}^{\pi_f}_t(s,a) \geq \widetilde{d}^\pi_t(s,a)$ holds for all $(s,a) \in \FF$.
            Then, for some $(s,a) \in \FF$, we can derive
            \begin{align*}
                \widetilde{d}^{\pi_f}_{t+1}(s,a)
                &= \pi_f(a|s) \sum_{(s_t, a_t) \in \SS \times \AA} \widetilde{P}(s|s_t, a_t) \widetilde{d}^{\pi_f}_t(s_t, a_t) \\
                &= \pi_f(a|s) \sum_{(s_t, a_t) \in \FF} P(s|s_t, a_t) \widetilde{d}^{\pi_f}_t(s_t, a_t) \\
                &\geq \pi(a|s) \sum_{(s_t, a_t) \in \FF} P(s|s_t,a_t) \widetilde{d}^\pi_t(s_t, a_t) \\
                &= \widetilde{d}^\pi_{t+1}(s,a),
            \end{align*}
            where we use the inductive hypothesis in the inequality.
            Thus, we have $\widetilde{d}^{\pi_f}(s,a) \geq \widetilde{d}^\pi(s,a)$ by summing over each time step.
            
        \item By statement 2, definition of $\widetilde{J}^\pi_+$, and non-negativity of the reward $r$, it follows that $\widetilde{J}^{\pi_f}_+ \geq \widetilde{J}^\pi_+$.

        \item This statement from the construction of $\pi_f$ and induction.
            First, we have $\widetilde{d}^{\pi_f}_0 (s,a) = 0$ for all $(s, a) \in \II$.
            Now suppose for some $t \geq 0$ that $\widetilde{d}^{\pi_f}_t(s,a) = 0$ for all $(s,a) \in \II$.
            We can see that $\widetilde{d}^{\pi_f}_{t+1}(s, a) = 0$ for all $(s,a) \in \II$ since $\pi_f$ never chooses actions such that $(s,a) \in \II$.
            
            Therefore, $\widetilde{d}^{\pi_f}(s,a) = 0$ for all $(s,a) \in \II$.
            By definition of $\widetilde{J}^\pi_-$, this allows us to conclude that $\widetilde{J}^\pi_- = 0$.

        \item Using statements 3 and 4, we conclude that
            \[
                \widetilde{V}^{\pi_f}(d_0)
                = \widetilde{J}^{\pi_f}_+ + \widetilde{J}^{\pi_f}_-
                \geq \widetilde{J}^\pi_+ + \widetilde{J}^\pi_-
                = \widetilde{V}^\pi(d_0) .
            \]
            The special case follows from observing that $\widetilde{J}^\pi_- < 0$ whenever $\pi(a|s) > 0$ for some $(s,a) \in \II$.
    \end{enumerate}
\end{proof}

\begin{restatable}{lemma}{OptimalPolicyNeverEntersI} \label{th:optimal policy never enters I}
    Let $\widetilde R$ be non-positive and $\widetilde{V}^*$ denote the optimal value function for $\widetilde{\MM}$.
    \begin{enumerate}
        \item For any policy $\pi$,
            \[
                \widetilde{V}^*(d_0) \geq \widetilde{J}^\pi_+ .
            \]
        \item There is an optimal policy $\widetilde\pi^*$ of $\widetilde\MM$ satisfying
            \begin{equation} \label{eq:optimal policy is not intervened}
                \E_{(s,a) \sim \widetilde{d}^{\widetilde\pi^*}}[\one\{(s,a) \in \II\}] = 0 .
            \end{equation}
        \item If $\widetilde R$ is negative, every optimal policy of $\widetilde\MM$ satisfies \eqref{eq:optimal policy is not intervened}.
    \end{enumerate}
\end{restatable}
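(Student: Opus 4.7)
The plan is to derive all three statements as direct corollaries of Lemma \ref{lm:safer policy}, which for partial $\II$ shows how to ``flush'' any intervention-triggering mass out of a policy without decreasing its value in $\widetilde\MM$ (strictly increasing it when $\widetilde R<0$ and the original policy does trigger). Since by Proposition \ref{th:partial} our $\II$ is indeed partial, Lemma \ref{lm:safer policy} is available throughout.

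For statement 1, I would fix an arbitrary policy $\pi$ and build $\pi_f$ via \eqref{eq:non-intervened policy}. Applying Lemma \ref{lm:safer policy} in sequence: statement 4 gives $\widetilde J^{\pi_f}_- = 0$, statement 1 then yields $\widetilde V^{\pi_f}(d_0) = \widetilde J^{\pi_f}_+$, and statement 3 gives $\widetilde J^{\pi_f}_+ \ge \widetilde J^{\pi}_+$. Chaining with optimality of $\widetilde V^*$ produces $\widetilde V^*(d_0) \ge \widetilde V^{\pi_f}(d_0) = \widetilde J^{\pi_f}_+ \ge \widetilde J^{\pi}_+$.

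For statement 2, I would start from any optimal policy $\pi^*$ of $\widetilde\MM$ and form $\pi^*_f$. Since $\widetilde R \le 0$, statement 5 of Lemma \ref{lm:safer policy} gives $\widetilde V^{\pi^*_f}(d_0) \ge \widetilde V^{\pi^*}(d_0) = \widetilde V^*(d_0)$, so $\pi^*_f$ is itself an optimal policy. Statement 4 of that lemma immediately delivers $\E_{(s,a)\sim \widetilde d^{\pi^*_f}}[\one\{(s,a)\in\II\}] = 0$, proving existence of such an optimal policy.

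For statement 3, I would argue by contradiction: assume $\widetilde R<0$ and some optimal $\pi^*$ satisfies $\E_{(s,a)\sim \widetilde d^{\pi^*}}[\one\{(s,a)\in\II\}]>0$. Then there is a reachable state $s$ (i.e., $\widetilde d^{\pi^*}(s)>0$) with $\pi^*(a|s)>0$ for some $a$ with $(s,a)\in\II$, which makes $\widetilde J^{\pi^*}_-<0$. The special case of statement 5 of Lemma \ref{lm:safer policy} then yields $\widetilde V^{\pi^*_f}(d_0) > \widetilde V^{\pi^*}(d_0) = \widetilde V^*(d_0)$, contradicting optimality. The main (minor) obstacle is just bookkeeping: in statement 3 one must be careful that the strict improvement in statement 5 of Lemma \ref{lm:safer policy} is actually triggered, which requires not merely that $\pi^*$ assigns positive probability to some action in $\II$ at some state, but that such a state is reached under $\widetilde d^{\pi^*}$; this is exactly the content of the assumed violation of \eqref{eq:optimal policy is not intervened}, so the reduction goes through cleanly.
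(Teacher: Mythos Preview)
Your proposal is correct and follows essentially the same argument as the paper's own proof: both derive all three parts by constructing $\pi_f$ from Lemma~\ref{lm:safer policy} and chaining its statements 1, 3, 4, and 5 in the same order. Your extra care in statement~3---making explicit that the strict-improvement clause of Lemma~\ref{lm:safer policy}(5) requires the intervened state to actually be reached under $\widetilde d^{\pi^*}$, which is guaranteed by the assumed violation of \eqref{eq:optimal policy is not intervened}---is a nice clarification that the paper leaves implicit.
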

\begin{proof}[Proof of \cref{th:optimal policy never enters I}]
    \begin{enumerate}
        \item Let the policy $\pi$ be arbitrary, and define $\pi_f$ using \eqref{eq:non-intervened policy}.
            The following then holds by \cref{lm:safer policy}:
            \[
                \widetilde{V}^*(d_0)
                \geq \widetilde{V}^{\pi_f}(d_0)
                = \widetilde{J}^{\pi_f}_+
                \geq \widetilde{J}^\pi_+ .
            \]

        \item Suppose that $\pi$ is an optimal policy of $\widetilde\MM$, and define $\pi_f$ using \eqref{eq:non-intervened policy}.
            Because $\widetilde{R}$ is non-positive, we know by \cref{lm:safer policy} and optimality of $\pi$ that $\widetilde{V}^{\pi_f}(d_0) = \widetilde{V}^\pi(d_0)$.
            Therefore, we can define an optimal policy as $\widetilde\pi^* = \pi_f$ and conclude by \cref{lm:safer policy} that $\E_{(s,a) \sim \widetilde{d}^{\widetilde\pi^*}}[\one\{(s,a) \in \II\}] = 0$.

        \item Suppose for the sake of contradiction there is an optimal policy $\widetilde\pi^*$ of $\widetilde\MM$ such that \eqref{eq:optimal policy is not intervened} does \emph{not} hold (i.e., it may take some $(s,a) \in \II$).
            By \cref{lm:safer policy}, we can construct some policy $\pi_f$ such that $\widetilde{V}^{\pi_f}(d_0) > \widetilde{V}^{\widetilde\pi^*}(d_0)$.
            This contradicts $\widetilde\pi^*$ being optimal, so every optimal policy of $\widetilde\MM$ must satisfy \eqref{eq:optimal policy is not intervened}.
    \end{enumerate}
\end{proof}

These results show that if the intervention set is partial and the penalty of being intervened is strict, then the optimal policy of the absorbing MDP would not be intervened.

\OptimalPolicyIsNotIntervened*
\begin{proof}[Proof of \cref{th:optimal policy is not intervened}]
    This directly follows from \cref{th:optimal policy never enters I,lm:intervened probability}.
\end{proof}

Below we derive some lemmas to show a near optimal policy of the absorbing MDP is safe. (We already proved above that the optimal policy of the absorbing MDP is safe).

\begin{lemma} \label{lm:policy safety bound}
    Let $\II \subset \SS\times\AA$ be partial (\cref{as:viable action}).
    Given some policy $\pi$, let $\pi'$ be the corresponding shielded policy defined in \eqref{eq:shielded policy}.
    Then, the following holds for any $h \ge 0$ in $\MM$:
    \begin{equation} \label{eq:policy safety bound}
        \prob(\violation \in \xi^h \mid \pi, \MM)
        \leq \prob(\violation \in \xi^h \mid \pi', \MM) + \prob(\xi^h \cap \II \ne \varnothing \mid \pi, \MM),
    \end{equation}
    where $\xi^h = (s_0, a_0, \dots, s_{h-1}, a_{h-1})$ is an $h$-step trajectory segment.
\end{lemma}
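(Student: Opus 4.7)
The plan is to decompose the event that $\pi$ visits $\violation$ within $h$ steps according to whether the prefix trajectory ever proposes an action in the intervention set $\II$. Writing
\[
\prob(\violation \in \xi^h \mid \pi, \MM)
= \prob(\violation \in \xi^h,\ \xi^h \cap \II = \varnothing \mid \pi, \MM)
+ \prob(\violation \in \xi^h,\ \xi^h \cap \II \neq \varnothing \mid \pi, \MM),
\]
I can bound the second summand trivially by $\prob(\xi^h \cap \II \neq \varnothing \mid \pi, \MM)$, which already matches the second term on the right-hand side of \eqref{eq:policy safety bound}.

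The heart of the argument is then a pathwise comparison between $\pi$ and $\pi'$ on trajectories that never enter $\II$. Concretely, fix a trajectory segment $\xi^h = (s_0, a_0, \ldots, s_{h-1}, a_{h-1})$ with $(s_t, a_t) \notin \II$ for every $t \le h-1$. Since $\MM$ and the shielded process share the transition kernel $P$, the only difference in the trajectory probability lies in the per-step action probabilities. From the definition \eqref{eq:shielded policy}, whenever $(s,a) \notin \II$ we have
\[
\pi'(a \mid s) = \pi(a \mid s) + \mu(a \mid s)\, w(s) \;\geq\; \pi(a \mid s),
\]
because $\mu(a \mid s) w(s) \geq 0$. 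Multiplying these per-step inequalities together with the shared dynamics yields $\prob(\xi^h \mid \pi, \MM) \leq \prob(\xi^h \mid \pi', \MM)$ for every such non-intervened $\xi^h$. Summing over all $\xi^h$ that (i) avoid $\II$ and (ii) contain $\violation$ gives
\[
\prob(\violation \in \xi^h,\ \xi^h \cap \II = \varnothing \mid \pi, \MM)
\;\leq\; \prob(\violation \in \xi^h,\ \xi^h \cap \II = \varnothing \mid \pi', \MM)
\;\leq\; \prob(\violation \in \xi^h \mid \pi', \MM),
\]
and combining with the trivial bound above finishes the proof.

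The only subtlety is getting the direction of the coupling inequality right: $\pi'$ redistributes the mass that $\pi$ puts on $\II$ onto $\mu(\cdot\mid s)$, which can only \emph{increase} the probability of any particular non-intervened action, so restricting attention to $\II$-avoiding trajectories makes $\pi'$ at least as likely as $\pi$ to produce each such trajectory. No assumption on $\mu$ is needed here beyond what \eqref{eq:shielded policy} already encodes, and the fact that $\unsafeset$ is absorbing is not invoked for this particular lemma, though it is implicitly consistent with $\violation$ appearing along $\xi^h$. I expect this pathwise comparison to be the only nontrivial step; the rest is a clean disjoint-event decomposition.
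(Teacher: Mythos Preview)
Your proposal is correct and matches the paper's proof essentially step for step: the same disjoint-event decomposition, the same trivial bound on the second term, and the same pathwise coupling via $\pi'(a\mid s)=\pi(a\mid s)+w(s)\mu(a\mid s)\ge \pi(a\mid s)$ for $(s,a)\notin\II$, summed over all $\II$-avoiding trajectories that contain $\violation$. The only cosmetic difference is that the paper writes out the trajectory likelihood explicitly up to the first time $T$ with $s_T=\violation$, whereas you phrase it as multiplying per-step inequalities over all of $\xi^h$; these are equivalent.
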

\begin{proof}
    First, we notice that $\pi'(a|s) \geq \pi(a|s)$ when $(s,a) \notin \II$, because $\pi'(a|s) = \pi(a|s) + w(s)\mu(a|s) \geq \pi(a|s)$.

    We bound the probability of $\pi$ violating a constraint in $\MM$ by introducing whether $\pi$ visits the intervention set:
    \begin{align*}
        \prob(\violation \in \xi^h \mid \pi, \MM)
        &= \prob(\violation \in \xi^h,\; \xi^h \cap \II = \varnothing \mid \pi, \MM) + \prob(\violation \in \xi^h, \; \xi^h \cap \II \neq \varnothing \mid \pi, \MM) \\
        &\leq \prob(\violation \in \xi^h, \; \xi^h \cap \II = \varnothing \mid \pi, \MM) + \prob(\xi^h \cap \II \neq \varnothing \mid \pi, \MM) .
    \end{align*}

    We now bound the first term.
    Let $\xi^h$ satisfy the event ``$\violation \in \xi^h, \; \xi^h \cap \II = \varnothing$'', and let $T$ be the time index such that $s_T = \violation$ in $\xi^h$.
    Then, the probability of this trajectory under $\pi$ and $\MM$ is
    \[
        d_0(s_0) \pi(a_0 | s_0) P(s_1 | s_0, a_0) \cdots \pi(a_{T-1} | s_{T-1}) P(s_T | s_{T-1}, a_{T-1}) .
    \]
    Since each $(s_t, a_t)$ is not in $\II$, we have $\pi(a_t|s_t) \leq \pi'(a_t|s_t)$ for each $(s_t, a_t)$ in $\xi^h$.
    Thus, the probability of this trajectory under $\pi$ and $\MM$ is upper bounded by its probability under ${\color{blue} \pi'}$ and $\MM$.
    Summing over each trajectory $\xi^h$ satisfying the event then yields:
    \[
        \prob(\violation \in \xi^h, \; \xi^h \cap \II = \varnothing \mid \pi, \MM)
        \leq \prob(\violation \in \xi^h, \; \xi^h \cap \II = \varnothing \mid {\color{blue} \pi'}, \MM) .
    \]

    We now complete the original bound:
    \begin{align*}
        \prob(\violation \in \xi^h \mid \pi, \MM)
        &\leq \prob(\violation \in \xi^h, \; \xi^h \cap \II = \varnothing \mid \pi, \MM) + \prob(\xi^h \cap \II \neq \varnothing \mid \pi, \MM) \\
        &\leq \prob(\violation \in \xi^h, \; \xi^h \cap \II = \varnothing \mid {\color{blue} \pi'}, \MM) + \prob(\xi^h \cap \II \neq \varnothing \mid \pi, \MM) \\
        &\leq \prob(\violation \in \xi^h \mid \pi', \MM) + \prob(\xi^h \cap \II \neq \varnothing \mid \pi, \MM) .
    \end{align*}
\end{proof}

\begin{restatable}{lemma}{DecompositionOfPolicySafety}\label{lm:decomposition of policy safety}
    For any policy $\pi$ and  $\II \subset \SS\times\AA$ that is partial, let $\pi'$ be the corresponding shielded policy.
    Then, the following safety bound holds:
    \[
        \overline{V}^\pi(d_0) \leq \overline{V}^{\pi'}(d_0) + \frac{1}{1-\gamma} \E_{(s,a) \sim \widetilde{d}^\pi} [\one\{(s,a) \in \II\}] .
    \]
\end{restatable}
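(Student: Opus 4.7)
The plan is to combine three ingredients already established in the excerpt: (i) the identity relating $\overline{V}^\pi(d_0)$ to a $\gamma$-weighted average of unsafe probabilities $\prob(\violation \in \xi^h \mid \pi,\MM)$ over horizons $h$, (ii) the per-horizon safety bound of \cref{lm:policy safety bound}, and (iii) the identity of \cref{lm:intervened probability} expressing $P_\GG(\pi)$ as a state-action expectation under $\widetilde{d}^\pi$.

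First I would rewrite $\overline{V}^\pi(d_0)$ as a $\gamma$-weighted sum of unsafe-visitation probabilities. From \eqref{eq:probability of violation as sum of costs} in the appendix we know $\prob(\violation \in \xi^h \mid \pi) = \E_{\rho^\pi}[\sum_{t=0}^{h-1} c(s_t,a_t)]$, and applying \cref{lm:discounted value as average value} to the cost-based MDP $\overline{\MM}$ gives
\begin{align*}
\overline{V}^\pi(d_0) \;=\; (1-\gamma)\sum_{h=0}^\infty \gamma^h\,\prob(\violation \in \xi^h \mid \pi,\MM).
\end{align*}
The same identity holds for $\pi'$.

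Next, I would invoke \cref{lm:policy safety bound}, which uses the partial assumption on $\II$ to give, for every $h\geq 0$,
\begin{align*}
\prob(\violation \in \xi^h \mid \pi,\MM) \;\leq\; \prob(\violation \in \xi^h \mid \pi',\MM) \,+\, \prob(\xi^h \cap \II \neq \varnothing \mid \pi,\MM).
\end{align*}
Multiplying by $(1-\gamma)\gamma^h$, summing over $h\geq 0$, and applying the identity from the previous step to both the left-hand side and the $\pi'$ term on the right, yields
\begin{align*}
\overline{V}^\pi(d_0) \;\leq\; \overline{V}^{\pi'}(d_0) \,+\, (1-\gamma)\sum_{h=0}^\infty \gamma^h\,\prob(\xi^h \cap \II \neq \varnothing \mid \pi,\MM).
\end{align*}
By definition, the remaining sum equals $P_\GG(\pi)$, and by \cref{lm:intervened probability} this equals $\frac{1}{1-\gamma}\E_{(s,a)\sim \widetilde{d}^\pi}[\one\{(s,a)\in\II\}]$, completing the bound.

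There is no real obstacle here since the partial property of $\II$ (the only subtle assumption) is already fully used inside \cref{lm:policy safety bound}; the present lemma is essentially a packaging step that aggregates the per-horizon bound against the discount weights and re-expresses the intervention probability as a state-action occupancy under $\widetilde{d}^\pi$. The only minor care needed is to make sure the horizon-weighted averaging is applied consistently on both sides so that $\overline{V}^{\pi'}(d_0)$ appears cleanly on the right-hand side, which follows immediately from the discounted-return-as-average identity.
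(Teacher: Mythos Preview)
Your proposal is correct and follows essentially the same route as the paper: it applies the per-horizon inequality from \cref{lm:policy safety bound}, aggregates with the $(1-\gamma)\gamma^h$ weights via \cref{lm:discounted value as average value}, and then rewrites the intervention-probability term through \cref{lm:intervened probability}. The paper's proof is slightly terser (it expresses all three probabilities directly as expected sums of indicators and applies \cref{lm:discounted value as average value} in one shot rather than routing through $P_\GG(\pi)$), but the logical content is identical.
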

\begin{proof}[Proof of \cref{lm:decomposition of policy safety}]
    Using \eqref{eq:policy safety bound} from \cref{lm:policy safety bound} and the fact that the probabilities can be expressed as expected sums of indicators:
    \begin{align*}
        \prob(\violation \in \xi^h \mid \pi, \MM) &= \E_{\rho^\pi} \left[ \sum_{t=0}^{h-1} \one\{s_t = \violation\} \right] \\
        \prob(\violation \in \xi^h \mid \pi', \MM) &= \E_{\rho^{\pi'}} \left[ \sum_{t=0}^{h-1} \one\{s_t = \violation\} \right] \\
        \prob(\xi^h \cap \II \neq \varnothing \mid \pi, \MM) &= \E_{\widetilde\rho^\pi} \left[ \sum_{t=0}^{h-1} \one\{(s_t, a_t) \in \II\} \right]
    \end{align*}

    Then, applying \cref{lm:discounted value as average value} results in the desired inequality.
\end{proof}

\PerformanceAndSafety*
\begin{proof}[Proof of \cref{th:performance and safety}]
    The performance bound follows from \cref{th:value offset}.
    \begin{align*}
        V^{\pi^*}(d_0) - V^\pi(d_0)
        &= V^{\pi^*}(d_0) - \widetilde{V}^{\pi^*}(d_0) + \widetilde{V}^{\pi^*}(d_0)- \widetilde{V}^{\pi}(d_0) + \widetilde{V}^\pi(d_0) - V^\pi(d_0) \\
        &\leq  \left(|\widetilde{R}| + \frac{1}{1-\gamma}\right)  P_\GG(\pi^*) + \widetilde{V}^{\pi^*}(d_0)- \widetilde{V}^{\pi}(d_0) - |\widetilde{R}| \, P_\GG(\pi) \\
        &\leq  \left(|\widetilde{R}| + \frac{1}{1-\gamma} \right)  P_\GG(\pi^*) + \widetilde{V}^{{\color{blue} *}}(d_0)- \widetilde{V}^{\pi}(d_0) \\
        &\leq  \left(|\widetilde{R}| + \frac{1}{1-\gamma} \right)  P_\GG(\pi^*) + \varepsilon .
    \end{align*}

    For the safety bound, we start with \cref{lm:decomposition of policy safety}:
    \[
        \overline{V}^\pi(d_0)
        \leq \overline{V}^{\pi'}(d_0) + \frac{1}{1-\gamma} \E_{(s,a) \sim \widetilde{d}^\pi} [\one\{(s,a) \in \II\}]
    \]
    We provide an upper bound on the second term on the right hand side above.
    Using the definition of $\widetilde{J}^\pi_-$ in \cref{lm:safer policy}, we derive that
    \begin{align*}
        \frac{\E_{\widetilde{d}^\pi} [\one\{(s,a) \in \II\}]}{1-\gamma}
        &= - \frac{\widetilde{J}^\pi_-}{|\widetilde R|} \\
        &= \frac{1}{|\widetilde R|} \left( -\widetilde{V}^\pi (d_0) + \widetilde{V}^* (d_0) + \widetilde{J}^\pi_+ - \widetilde{V}^* (d_0) \right) \\
        &\leq \frac{1}{|\widetilde R|} \left( \widetilde{V}^* (d_0) - \widetilde{V}^\pi (d_0) \right) = 
         \frac{\varepsilon}{|\widetilde R|},
    \end{align*}
    where the inequality is due to \cref{th:optimal policy never enters I}.

    Combine everything altogether:
    \begin{align*}
        \overline{V}^\pi(d_0)
        &\leq \overline{V}^{\pi'}(d_0) + \frac{\E_{\widetilde{d}^{\pi}}[\one\{ (s,a) \in \II \}] }{1-\gamma}\\
        &= \overline{V}^{\pi'}(d_0) + \frac{\varepsilon}{|\widetilde{R}|} .
    \end{align*}

\end{proof}

We now prove the main result of the paper.
\SuboptimalityBound*
\begin{proof}
    This is a direct result of \cref{th:performance and safety}.

    The performance suboptimality results from:
    \begin{align*}
        V^{\pi^*}(d_0)  -  V^{\hat\pi}(d_0)
        &\leq \left( |\widetilde{R}| + \frac{1}{1-\gamma} \right) P_\GG(\pi^*) + \varepsilon \\
        &\leq \left(  1 + \frac{1}{1-\gamma}  \right) P_\GG(\pi^*) + \varepsilon \\
        &= \frac{2-\gamma}{1-\gamma}  P_\GG(\pi^*) + \varepsilon \\
        &\leq \frac{2}{1-\gamma} P_\GG(\pi^*) + \varepsilon .
    \end{align*}

    For the safety bound,
    \begin{align*}
        \overline{V}^{\hat\pi}(d_0)
        &\leq \overline{V}^{\GG(\hat\pi)}(d_0) + \varepsilon \\
        &\leq \overline{Q}(d_0, \mu) + \frac{\min\{\sigma + \eta, 2\gamma\}}{1-\gamma} + \varepsilon,
    \end{align*}
    where the second inequality follows from \cref{th:shielded policy is safe} and $\varepsilon$-suboptimality of $\hat\pi$ in $\widetilde\MM$.
\end{proof}

\section{Additional Discussion of \alg}

\subsection{Necessity of the partial property} \label{sec:non-partial}

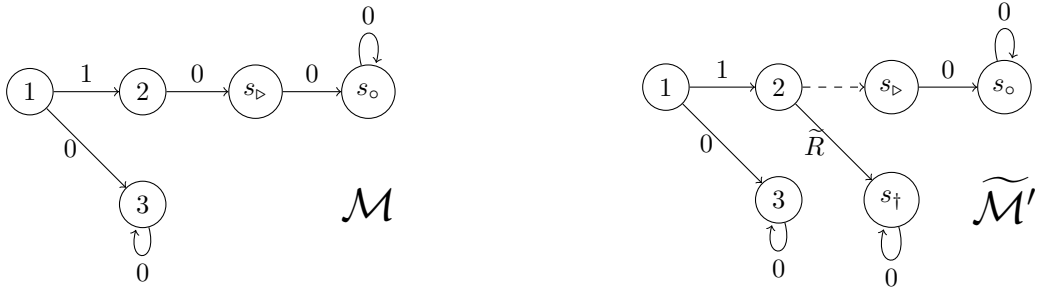
\begin{figure}[h!]
    \centering
    \begin{subfigure}{0.49\textwidth}
        \centering
        \begin{tikzpicture}[->, node distance=1.5cm]
  \node[draw, circle] (1) {$1$};
  \node[draw, circle, right of=1] (2) {$2$};
  \node[draw, circle, below of=2] (3) {$3$};
  \node[draw, circle, right of=2] (violation) {\violation};
  \node[draw, circle, right of=violation] (absorbing) {\absorbing};
  \node[below of=absorbing] (M) {{\LARGE $\MM$}};

  \path
  (1) edge node [above] {$1$} (2)
  (1) edge node [left] {$0$} (3)
  (2) edge node [above] {$0$} (violation)
  (3) edge [loop below] node {$0$} (3)
  (violation) edge node [above] {$0$} (absorbing)
  (absorbing) edge [loop above] node {$0$} (absorbing);
\end{tikzpicture}
    \end{subfigure}
    \begin{subfigure}{0.49\textwidth}
        \centering
        \begin{tikzpicture}[->, node distance=1.5cm]
  \node[draw, circle] (1) {$1$};
  \node[draw, circle, right of=1] (2) {$2$};
  \node[draw, circle, below of=2] (3) {$3$};
  \node[draw, circle, right of=2] (violation) {\violation};
  \node[draw, circle, right of=violation] (absorbing) {\absorbing};
  \node[draw, circle, right of=3] (intervened) {\intervened};
  \node[below of=absorbing] (M) {{\LARGE $\widetilde\MM'$}};

  \path[every path/.style={dashed}]
  (2) edge node [above] {} (violation);

  \path
  (1) edge node [above] {$1$} (2)
  (1) edge node [left] {$0$} (3)
  (3) edge [loop below] node {$0$} (3)
  (violation) edge node [above] {$0$} (absorbing)
  (absorbing) edge [loop above] node {$0$} (absorbing)
  (2) edge node [left] {$\widetilde R$} (intervened)
  (intervened) edge [loop below] node {$0$} (intervened);
  
\end{tikzpicture}
    \end{subfigure}
    \caption{A simple example illustrating a non-partial intervention. Edge weights correspond to rewards. If $\widetilde{R} > -1/\gamma$, the optimal policy in $\widetilde\MM'$ will always go into the intervention set.}
    \label{fig:non-partial intervention}
\end{figure}

We highlight that the subset $\II$ being \emph{partial} (\cref{def:admissible intervention}) is crucial for the unconstrained MDP reduction behind \alg.
If we were to construct an absorbing MDP $\widetilde{\MM}'$ described in \cref{sec:absorbing mdp} using an arbitrary non-partial subset $\II' \subseteq \SS\times\AA$, then the optimal policy of  $\widetilde{\MM}'$ can still enter $\II'$ when $\widetilde{R}>-\infty$, because the optimal policy of $\widetilde{\MM}'$ can use earlier rewards to make up for the penalty incurred in $\II'$.

To see this, consider the toy MDP $\MM$ shown in \cref{fig:non-partial intervention}.
Since there is no alternative action available at state $2$, the intervention illustrated in $\widetilde\MM'$ is \emph{not} partial.
Suppose $\widetilde{R} > -1/\gamma$.
Then, in $\widetilde\MM'$, a policy choosing to transition from $1$ to $3$ has a value of $0$, and a policy choosing to transition from $1$ to $2$ has a value of $1 + \gamma \widetilde{R} > 0$.
Therefore, the optimal policy will transition from $1$ to $2$ and go into the non-partial intervention set $\II'$.
Once applied to the original MDP $\MM$, this policy will always go into the unsafe set.

One might think generally it is possible to set $\widetilde{R}$ to be negative enough to ensure the optimal policy will never go into the intervention set, which is indeed true for the counterexample above.
But we remark that we need to set $\widetilde{R}$ to be arbitrarily large (in the negative direction) for general problems, which can cause high variance issues in return or gradient estimation~\citep{shalev2016safe}.
Because of the discount factor $\gamma<1$, the negative reward stemming from the absorbing state will be at most $\gamma^T \widetilde{R}$, where $T$ is the time step that the system enters $\II'$.
For a fixed and finite $\widetilde R$, we can then extend the above MDP construction to let the agent go through a long enough chain after transitioning from $1$ to $2$ so that the resultant value satisfies $1 + \gamma^T \widetilde{R} > 0$.
Like the example above, this path would be the only path with positive reward, despite intersecting the intervention set.
Therefore, the optimal policy of $\widetilde{\MM}'$ will enter $\II'$.

\subsection{Bias of \alg}

In \cref{th:main theorem}, we give a performance guarantee of \alg 
\[
    V^*(d_0) - V^{\hat\pi}(d_0) \leq \frac{2}{1-\gamma} P_\GG(\pi^*) + \varepsilon .
\]
It shows that \alg has a bias $P_{\GG}(\pi^*)\in[0,1]$, which is the probability that the optimal policy $\pi^*$ would be intervened by the advantage-based intervention rule.
Here we discuss special cases where this bias vanishes.

The first special case is when the original problem is unconstrained (i.e., \eqref{eq:CMDP formulation} has a trivial constraint with $\delta=1$). In this case, we can set the threshold $\eta \geq \gamma$ in \alg to turn off the intervention, and \alg returns the optimal policy of the MDP $\MM$ when the base RL algorithm can find one.

Another case is when $\pi^*$ is a perfect safe policy, i.e., $\overline{V}^{\pi^*}(d_0)=0$ and we run \alg with the intervention rule $\GG^* = (\overline{Q}^*, \overline{\pi}^* , 0)$ (\cref{th:optimal intervention}). Similar to the proof of \cref{lm:optimal intervention lemma}, one can show that running $\pi^*$ would not trigger the intervention rule $\GG^*$ and therefore the bias $P_{\GG^*}(\pi^*)$ is zero.

However, we note that generally the bias $P_{\GG}(\pi^*)$ can be non-zero.

\section{Experimental Details}
\subsection{Point Robot} \label{app:point robot}
\begin{figure}[h!]
    \centering
    \includegraphics[width=0.3\textwidth]{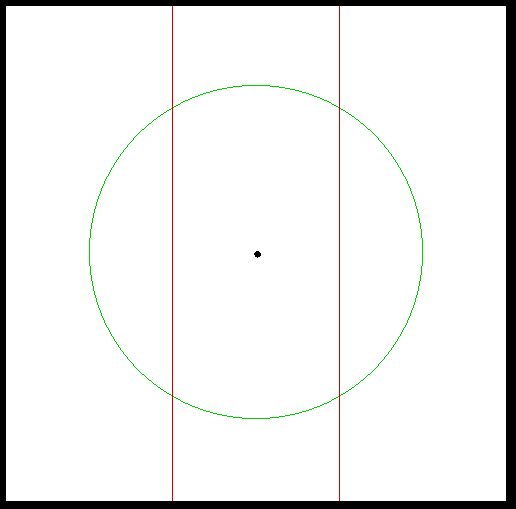}
    \caption{The point environment.
    The black dot corresponds to the agent, the green circle to the desired path, and the red lines to the constraints on the horizontal position.
    The vertical constraints are outside of the visualized environment.}
    \label{fig:point environment}
\end{figure}
This environment (\cref{fig:point environment}) is a simplification of the point environment from \citep{achiam2017constrained}.
The state is $s = (x, y, \dot x, \dot y)$, where $(x, y)$ is the x-y position and $(\dot x, \dot y)$ is the corresponding velocity.
The action $a = (a_x, a_y)$ is the force applied to the robot (each component has maximum magnitude $a_{\max}$).
The agent has some mass $m$ and can achieve maximum speed $v_\mathrm{max}$.
The dynamics update (with time increment $\Delta t$) is:
\begin{align*}
(x_{t+1}, y_{t+1}) &= (x_t, y_t) + (\dot{x}_t, \dot{y}_t) \Delta t + \frac{1}{2m} a_t \Delta t^2 \\
(\dot{x}_{t+1}, \dot{y}_{t+1}) &= \texttt{clip-norm}\left( ((\dot{x}_t, \dot{y}_t) + \frac{1}{m} a_t \Delta t,\, v_\mathrm{max} \right),
\end{align*}
where $\texttt{clip-norm}(u, c)$ scales $u$ so that its norm matches $c$ if $\norm{u} > c$.
The reward corresponds to following a circular path of radius $R^*$ at a high speed and the safe set to staying within desired positional bounds $x_\mathrm{max}$ and $y_\mathrm{max}$:
\begin{align*}
r(s, a) &= \frac{(\dot x, \dot y) \cdot (-y, x)}{1 + \abs{\norm{(x, y)} - R^*}} \\
\SS_\mathrm{safe} &= \{s \in \SS : \abs{x} \le x_\mathrm{max} ~\text{and}~ \abs{y} \le y_\mathrm{max} \}
\end{align*}

For our experiments, we set these parameters to $m = 1$, $v_\mathrm{max} = 2$, $a_{\max} = 1$, $\Delta t = 0.1$, $R^* = 5$, $x_\mathrm{max} = 2.5$, and $y_\mathrm{max} = 15$.

For this environment, we also consider a \emph{shaped} cost function $\hat{c}(s,a)$ which is a function of the \emph{distance} of the state $s$ to the boundary of the unsafe set, denoted by $\mathrm{dist}(s, \unsafeset)$.
Here, $\unsafeset$ denotes the 2D unsafe region in this environment (i.e., those outside the vertical lines in~\cref{fig:point environment}).
Note that in the theoretical analysis $\unsafeset$ is abstracted into $\{\violation, \intervened\}$.

For the point environment, the distance function is $\mathrm{dist}(s, S_\mathrm{unsafe}) = \max\{0, \min\{x_{\max}-x, x_{\max}+x, y_{\max}-y, y_{\max}+y\}\}$.
For some constant $\alpha \ge 0$, the cost function is defined as a hinge function of the distance:
\begin{equation}
\hat{c}(s, a) =
\begin{cases}
    \one\{\mathrm{dist}(s, \unsafeset) = 0\}, & \alpha = 0 \\
    \max \left\{ 0, 1 - \frac{1}{\alpha}\mathrm{dist}(s, \unsafeset) \right\}, & \text{otherwise.}
\end{cases}
\label{eq:hinge}
\end{equation}
We note that $\hat{c}$ is an upper bound for $c$ if $\alpha > 0$ and $\hat{c} = c$ if $\alpha = 0$.
We shape the cost here to make it continuous, so that the effects of approximation bias is smaller than that resulting from a discontinuous cost (i.e., the original indicator function).

\textbf{Intervention Rule:}
The backup policy $\mu$ applies a decelerating force (with component-wise magnitude up to $a_{\max}$) until the agent has zero velocity.
Our experiments consider the following approaches to construct $\overline{Q}$:
\begin{itemize}
    \item \textbf{Neural network approximation}:
        We construct a dataset of points mapping states and actions to state-action values $\overline{Q}^\mu$ by picking some state and action in the MDP, executing the action from that state, and then continuing the rollout with the backup policy $\mu$ to find the empirical state-action value with respect to the shaped cost function $\hat c$.
        Our dataset consists of $10^7$ points resulting from a uniform discretization of the state-action space.
        We apply a similar method to form a dataset for the state values $\overline{V}^\mu$.

        We then train four networks (two to independently approximate $\overline{Q}^\mu$, and two for $\overline{V}^\mu$), where each network has three hidden layers each with $256$ neurons and a ReLU activation.
        The predicted advantage is $\overline{A}(s, a) = \max\{\overline{Q}_1(s,a), \overline{Q}_2(s,a)\} - \min\{\overline{V}_1(s), \overline{V}_2(s)\}$, where we apply the pessimistic approach from~\citep{thananjeyan2020recovery} to prevent overestimation bias.

    \item \textbf{Model-based evaluation}:
        Here, we have access to a model of the robot where all parameters match the real environment except possibly the mass $\hat m$.
        We refer to the modeled transition dynamics as $\hat\PP$ and the resulting trajectory distribution under $\mu$ as $\hat\rho^\mu$.
        The function $\overline{Q}$ is then set to be the model-based estimate of $\overline{Q}^\mu$ using the shaped cost function $\hat c$ and dynamics $\hat\PP$:
        \[
            \overline{Q}(s,a) = \E_{\hat\rho^\mu | s_0 = s, a_0 = a} \left[ \sum_{t=0}^\infty \gamma^t \hat c(s_t, a_t) \right].
        \]
        For our experiments, the modeled mass $\hat m$ is either $1$ (unbiased case) or $0.5$ (biased case).

\end{itemize}
For our experiments, we set the advantage threshold $\eta = 0.08$ when using the neural network approximator and $\eta = 0$ when using the model-based rollouts.

\textbf{Hyperparameters:}
All point experiments were run on a 32-core Threadripper machine.
The given hyperparameters were found by hand-tuning until good performance was found on all algorithms.

\begin{table}[h!]
\begin{tabular}{|l|l|} \hline
{\bf Hyperparameter}                                             & {\bf Value}                                                 \\ \hline
Epochs                                                     & 500                                                   \\
Neural Network Architecture                                & 2 hidden layers, 64 neurons per hidden layer, $\tanh$ act. \\
Batch size                                                 & 4000                                                  \\
Discount $\gamma$                                          & 0.99                                                  \\
Entropy bonus                                              & 0.001                                                 \\
CMDP threshold $\delta$                                    & 0.01                                                  \\
Penalty value $\tilde R$                                   & $-2$                                                  \\
Lagrange multiplier step size (for constrained approaches) & 0.05                                                  \\
Cost shaping constant $\alpha$                             & 0.5                                                   \\
Number of seeds                                            & 10                                                    \\ \hline
\end{tabular}
\end{table}

\subsection{Half-Cheetah} \label{app:cheetah}
\begin{figure}[h!]
    \centering
    \includegraphics[width=0.3\textwidth]{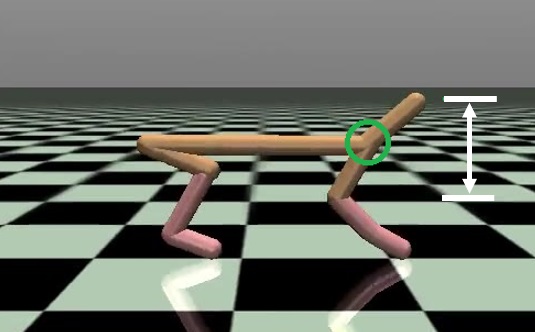}
    \caption{The half-cheetah environment.
    The green circle is centered on the link of interest, and the white double-headed arrow denotes the allowed height range of the link.}
    \label{fig:cheetah environment}
\end{figure}
This environment (\cref{fig:cheetah environment}) comes from OpenAI Gym and has reward equal to the agent's forward velocity.
One of the agent's links (denoted by the green circle in \cref{fig:cheetah environment}) is constrained to lie in a given height range, outside of which the robot is deemed to have fallen over.
In other words, if $h$ is the height of the link of interest, $h_{\min}$ is the minimum height, and $h_{\max}$ is the maximum height, the safe set is defined as $\safeset = \{s \in \SS : h_{\min} \le h \le h_{\max} \}$.
For our experiments, we set $h_{\min} = 0.4$ and $h_{\max} = 1$.

\textbf{Heuristic Intervention Rule:}
This intervention rule $\GG = (\overline{Q}, \mu, \eta)$ relies on a dynamics model (here, unbiased) to greedily predict whether the safety constraint would be violated at the next time step.
In particular, if $s$ is the current state and $\hat{a} \sim \pi(\cdot | s)$ is the proposed action, the agent will be intervened if the height $\hat{h}'$ in the next state $\hat{s}' \sim \PP(\cdot | s, \hat{a})$ lies outside the range $[\hat{h}_{\min}, \hat{h}_{\max}]$, where $\hat{h}_{\min}$ and $\hat{h}_{\max}$ can be set to a smaller range than $[h_{\min}, h_{\max}]$ to induce a more conservative intervention.
Once intervened, the episode terminates.
The reason for using a smaller range $[\hat{h}_{\min}, \hat{h}_{\max}]$ is an attempt to make the intervention rule possess the partial property (see the discussion in \cref{sec:gr}). If we were to set the range to be the ordinary range $[h_{\min}, h_{\max}]$ that defines the safe subset, the penalty $\widetilde{R}$ would need to be very negative, which would destabilize learning.
Furthermore, there is no guarantee that the intervention set for the original range is partial since there may be no available action to keep the agent from being intervened.

\textbf{MPC-Based Intervention Rule:}
Similarly with the model-based intervention rule for the point environment, the MPC intervention rule $\GG = (\overline{Q}, \mu, \eta)$ uses a model of the half-cheetah.
The backup policy $\mu$ is a sampling-based model predictive control (MPC) algorithm based on \citep{williams2017information}.
The MPC algorithm has an optimization horizon of $H = 16$ time steps and minimizes the cost function corresponding to an indicator function of the link height being in the range $[0.45, 0.95]$.\footnote{Observe that this is slightly smaller than the $[0.4, 1]$ height range of the original safety constraint.}
The function $\overline{Q}$ is defined as:
\[
    \overline{Q}(s,a) = \E_{\hat\rho} \left[\sum_{t=0}^H \gamma^t \hat{c}(\hat{s}_t, \hat{a}_t) \;\middle|\; \hat{s}_0 = s, \hat{a}_0 = a, \hat{a}_{1:H} = \mathrm{MPC}(\hat{s}_1) \right],
\]
where $\hat{c}(s, a)$ is the hinge-shaped cost function (in \eqref{eq:hinge}) corresponding to the distance function $\mathrm{dist}(s, S_\mathrm{unsafe}) = \max\{0, \min\{h - h_{\min}, h_{\max} - h\}\}$.

For our experiments, we set the advantage threshold $\eta = 0.2$.
We also use a modeled mass of $14$ (unbiased) and $12$ (biased) in our experiments.

{\bf Hyperparameters}: Except for the MPC-based intervention, all half-cheetah experiments were run on a 32-core Threadripper machine.
The MPC-based intervention experiments were run on 64-core Azure servers with each run taking 24 hours.
The given hyperparameters were found by hand-tuning until good performance was found on all algorithms.

\begin{table}[h!]
\begin{tabular}{|l|l|} \hline
{\bf Hyperparameter}                                             & {\bf Value}                                                 \\ \hline
Epochs                                                     & 1250                                                   \\
Neural Network Architecture                                & 2 hidden layers, 64 neurons per hidden layer, $\tanh$ act. \\
Batch size                                                 & 4000                                                  \\
Discount $\gamma$                                          & 0.99                                                  \\
Entropy bonus                                              & 0.01                                                 \\
CMDP threshold $\delta$                                    & 0.01                                                  \\
Penalty value $\tilde R$                                   & $-0.1$                                                  \\
Lagrange multiplier step size (for constrained approaches) & 0.05                                                  \\
Heuristic intervention range $[\hat{h}_{\min}, \hat{h}_{\max}]$ & $[0.4, 0.9]$                                     \\
Cost shaping constant $\alpha$                             & 0.05                                                   \\
Number of seeds                                            & 8                                                    \\ \hline
\end{tabular}
\end{table}

\section{Ablations for Point Robot}

\begin{figure}[h!]
    \centering
    {\bf Episode return without the intervention} \\
	\begin{subfigure}{0.3\textwidth}
		\includegraphics[width=\textwidth]{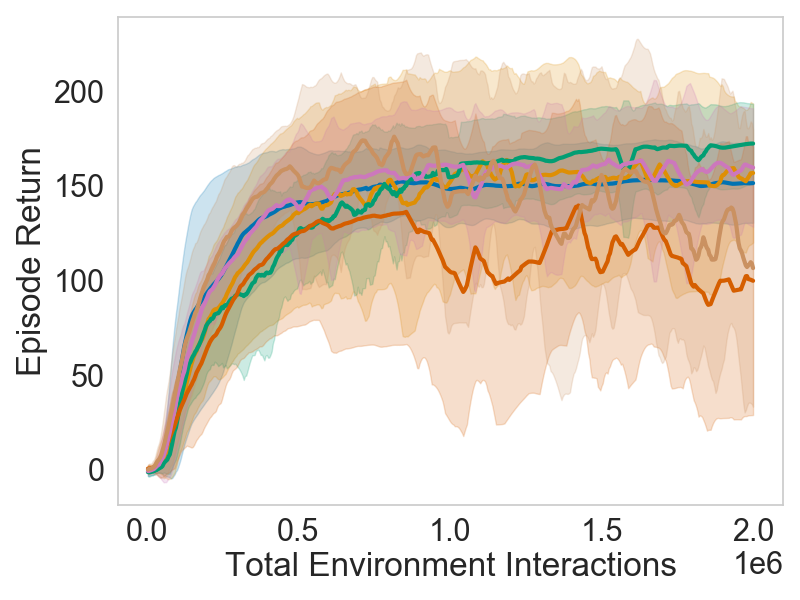}
	\end{subfigure}
	\begin{subfigure}{0.3\textwidth}
		\includegraphics[width=\textwidth]{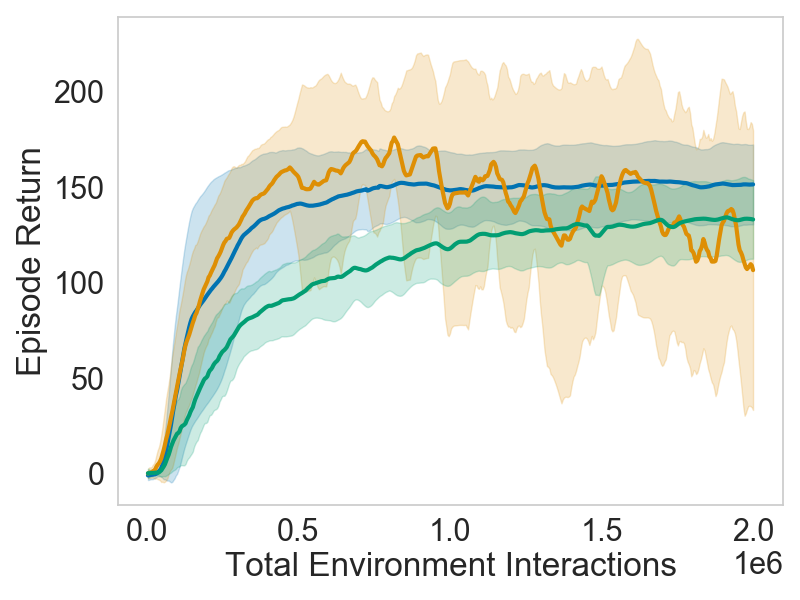}
	\end{subfigure}
	
	{\bf Episode length without the intervention} \\
	\begin{subfigure}{0.3\textwidth}
		\includegraphics[width=\textwidth]{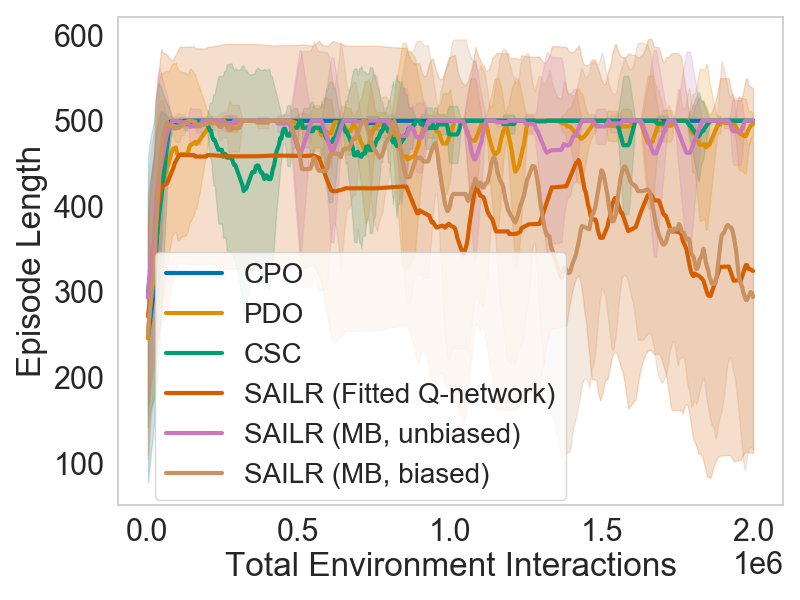}
	\end{subfigure}
	\begin{subfigure}{0.3\textwidth}
		\includegraphics[width=\textwidth]{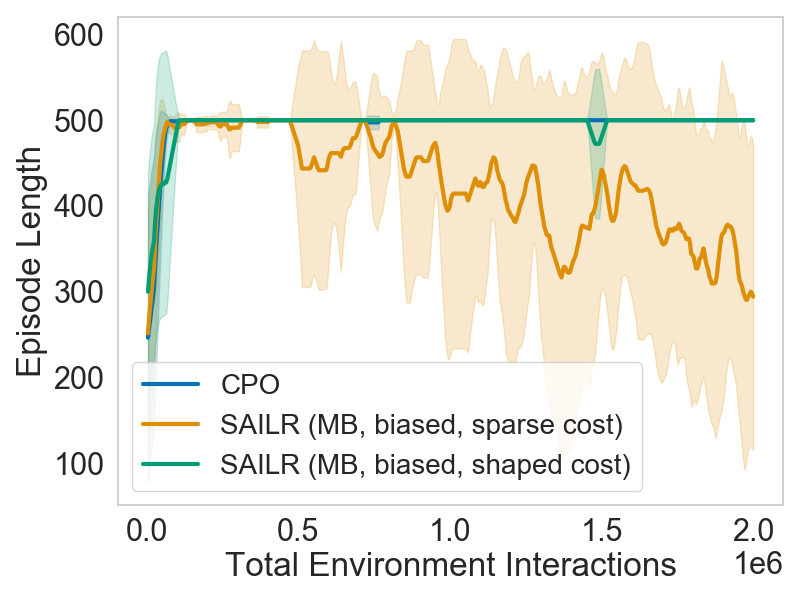}
	\end{subfigure}
	
	{\bf Total number of safety violations during training} \\
	\begin{subfigure}{0.3\textwidth}
		\includegraphics[width=\textwidth]{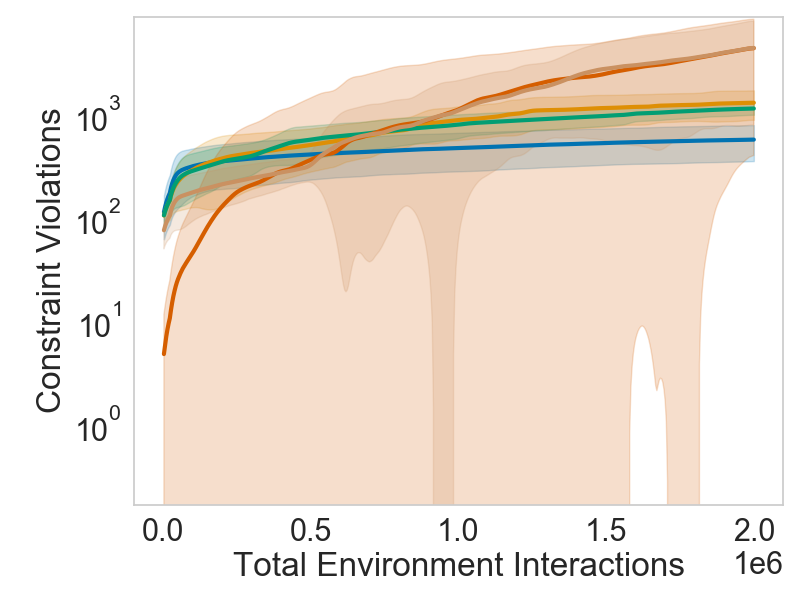}
		\caption{Sparse cost}
		\label{fig:point sparse cost}
	\end{subfigure}
	\begin{subfigure}{0.3\textwidth}
		\includegraphics[width=\textwidth]{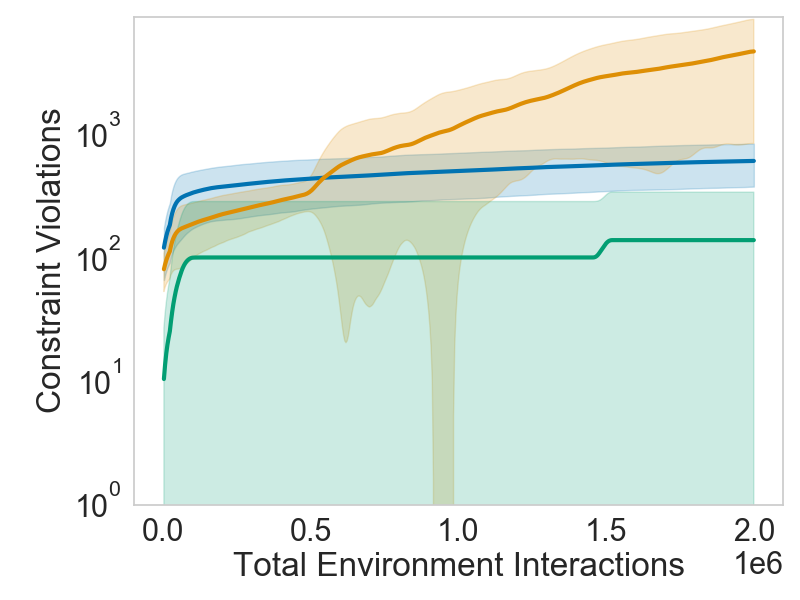}
		\caption{Biased model, either a sparse cost or shaped cost}
		\label{fig:point biased model}
	\end{subfigure}
\caption{Ablations for point experiment}
\label{fig:point ablations}
\end{figure}

We run the following two ablations for the point environment, with results shown in \cref{fig:point ablations}:
\begin{enumerate}
	\item We additionally run all the algorithms with the original sparse cost~(\cref{fig:point sparse cost}).
		Here, the baseline algorithms as expected yield high deployment returns while violating many constraints during training.
		For \alg, however, only the model-based instance with an unbiased model is able to satisfy the desiderata of high deployment returns while being safe during training.
		In this case, the sparse cost along with the approximation errors from the other two instances result in the slack $\sigma$ being large for admissibilty, meaning the safety bounds in~\cref{th:main theorem,th:shielded policy is safe} are loose.

	\item We run the model-based instance of \alg with a biased model and either the sparse cost or the shaped cost~(\cref{fig:point biased model}).
		Using the sparse cost with the biased model for intervention has deleterious effects in safety and performance.
		The model mismatch causes a compounding number of safety violations in training (bottom plot) and destabilizes the policy optimization, as observed in the deteriorated returns (top plot) and safety (middle plot), respectively.
		Shaping the cost function for intervention results in far fewer safety violations and stabilizes the policy optimization.
\end{enumerate}

\section{Varying Intervention Penalty for Half-Cheetah}

\begin{figure}[h!]
    \centering
    {\bf Episode return without the intervention} \\
	\begin{subfigure}{0.24\textwidth}
		\includegraphics[width=\textwidth]{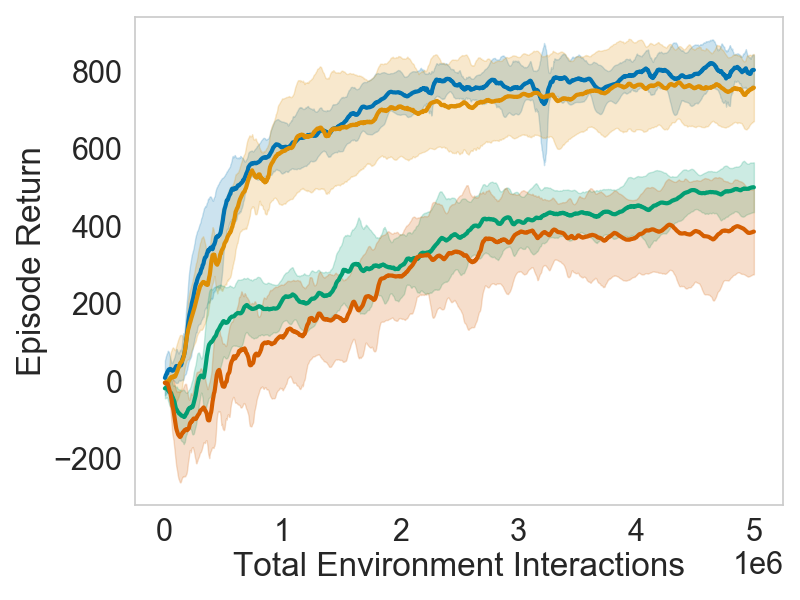}
	\end{subfigure}
	\begin{subfigure}{0.24\textwidth}
		\includegraphics[width=\textwidth]{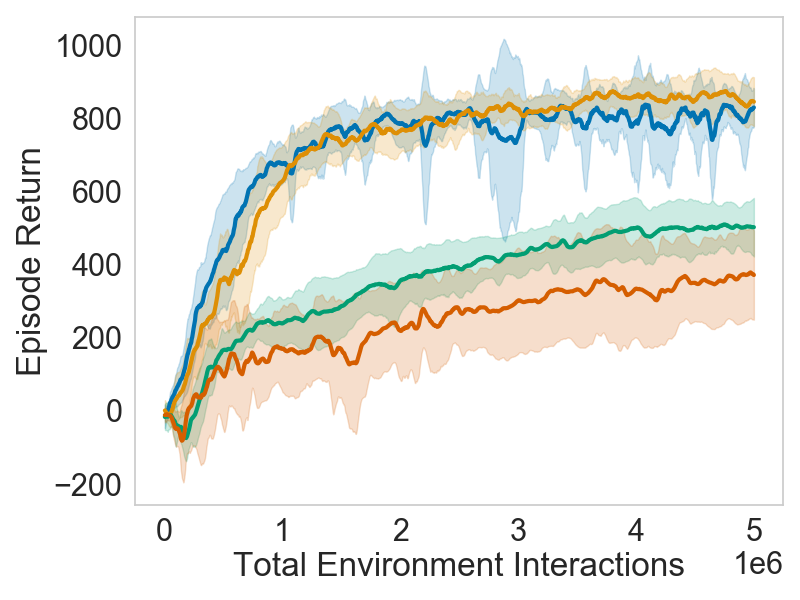}
	\end{subfigure}
	\begin{subfigure}{0.24\textwidth}
		\includegraphics[width=\textwidth]{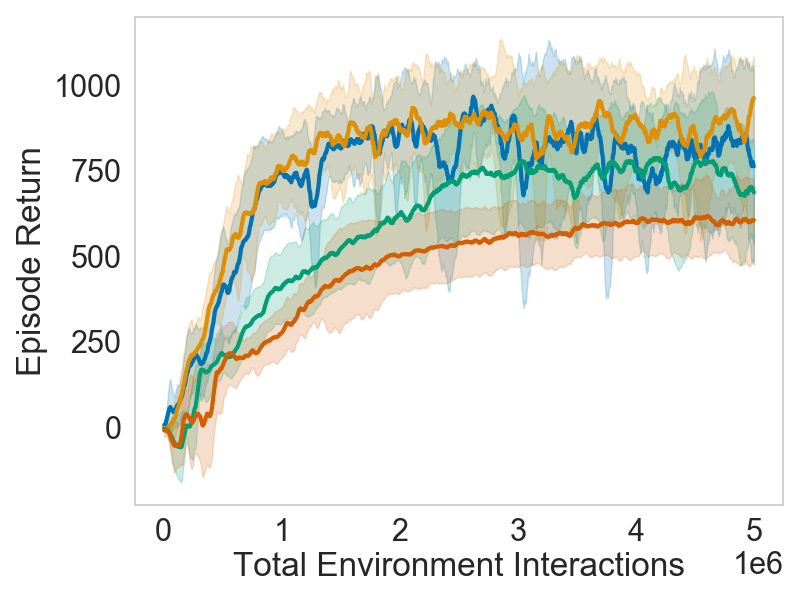}
	\end{subfigure}
	\begin{subfigure}{0.24\textwidth}
		\includegraphics[width=\textwidth]{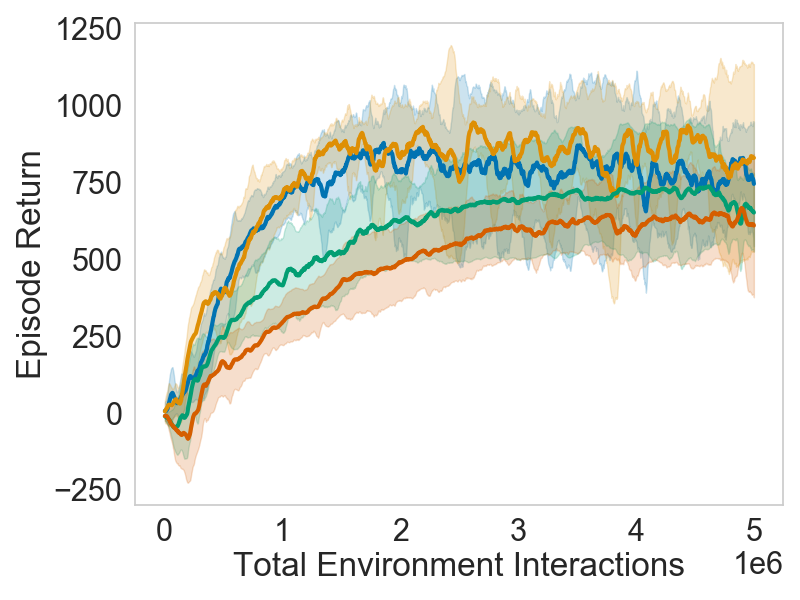}
	\end{subfigure}
	
	{\bf Episode length without the intervention} \\
	\begin{subfigure}{0.24\textwidth}
		\includegraphics[width=\textwidth]{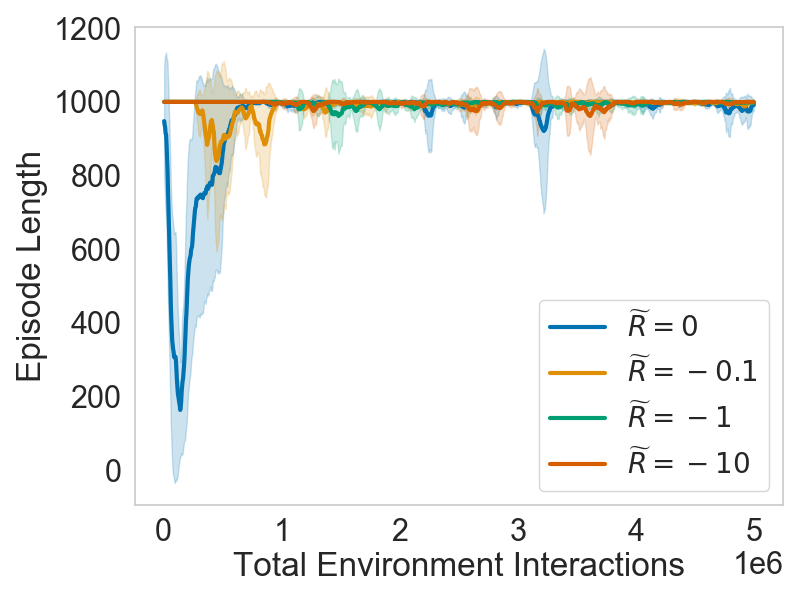}
	\end{subfigure}
	\begin{subfigure}{0.24\textwidth}
		\includegraphics[width=\textwidth]{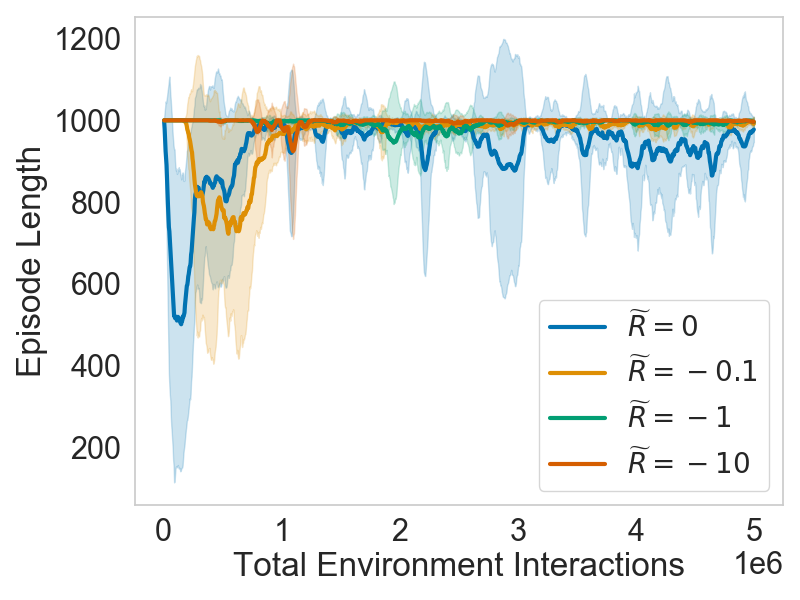}
	\end{subfigure}
	\begin{subfigure}{0.24\textwidth}
		\includegraphics[width=\textwidth]{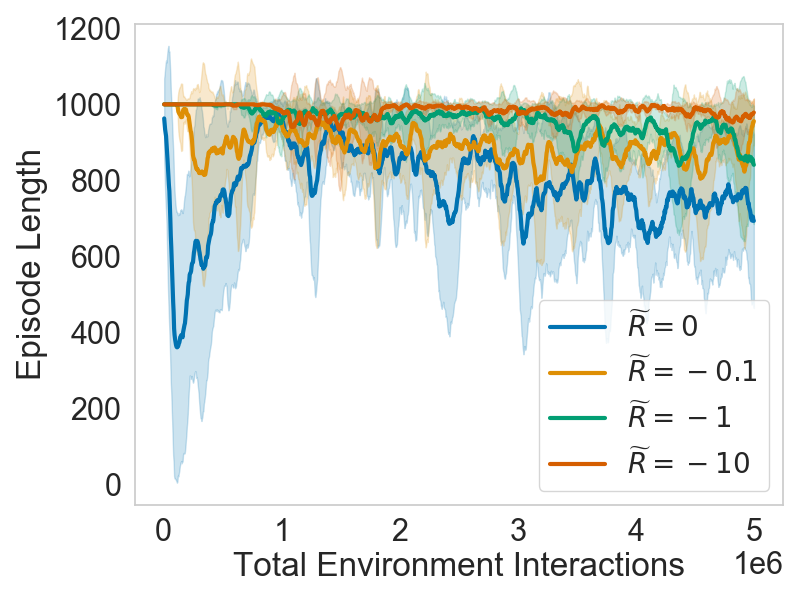}
	\end{subfigure}
	\begin{subfigure}{0.24\textwidth}
		\includegraphics[width=\textwidth]{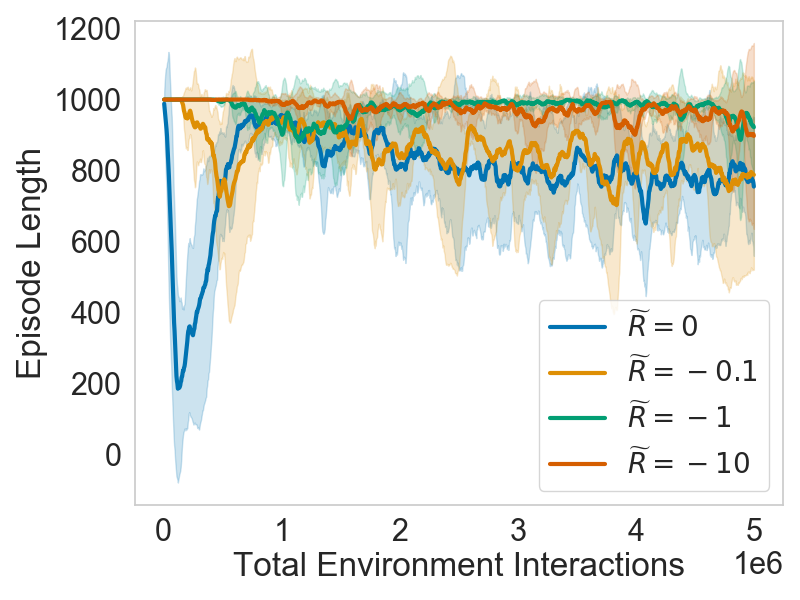}
	\end{subfigure}
	
	{\bf Total number of safety violations during training} \\
	\begin{subfigure}{0.24\textwidth}
		\includegraphics[width=\textwidth]{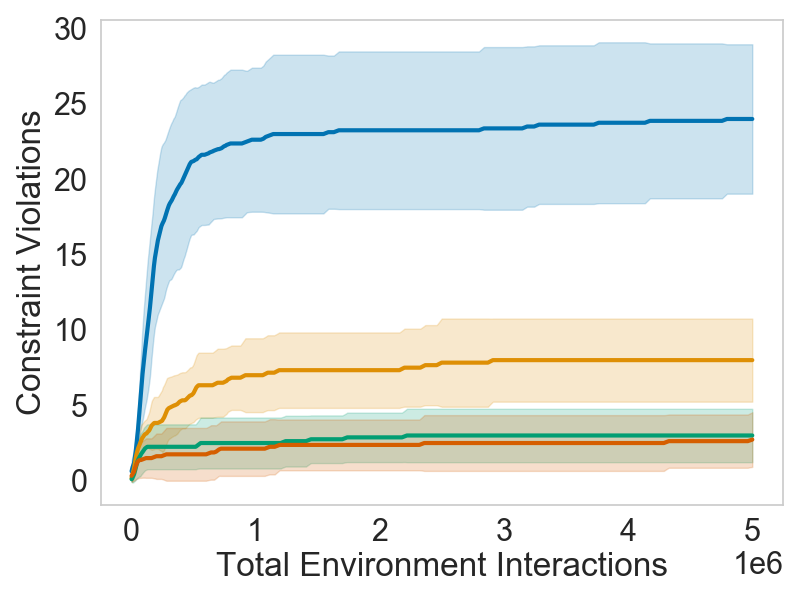}
		\caption{MPC with unbiased model}
		\label{fig:cheetah mpc unbiased}
	\end{subfigure}
	\begin{subfigure}{0.24\textwidth}
		\includegraphics[width=\textwidth]{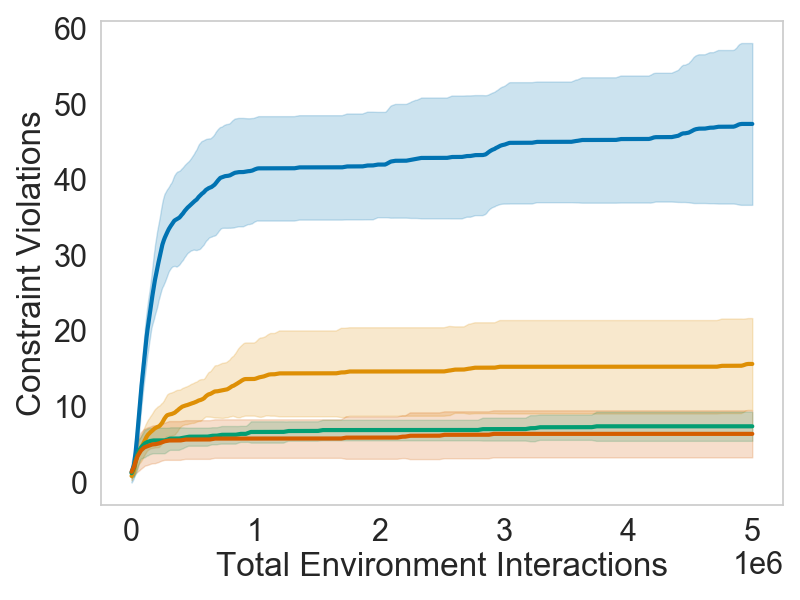}
		\caption{MPC with biased model}
		\label{fig:cheetah mpc biased}
	\end{subfigure}
	\begin{subfigure}{0.24\textwidth}
		\includegraphics[width=\textwidth]{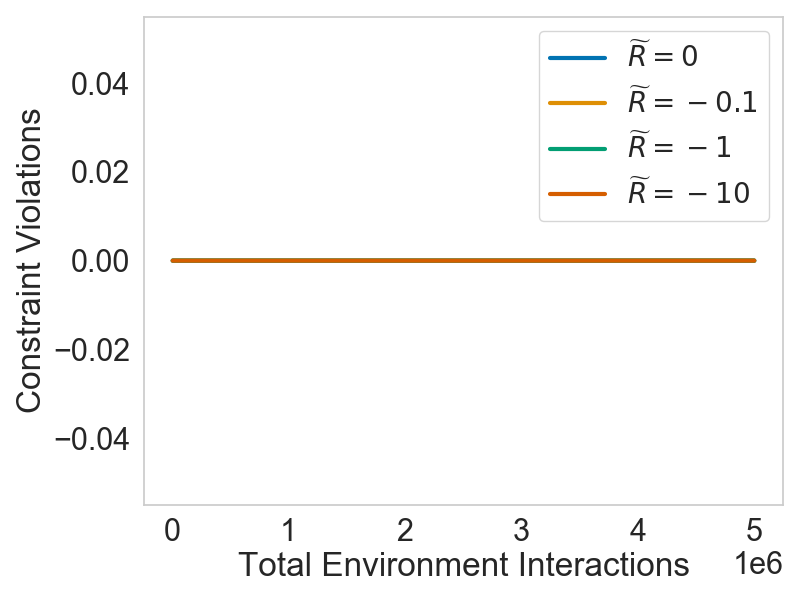}
		\caption{Heuristic with smaller height range}
		\label{fig:cheetah heuristic}
	\end{subfigure}
	\begin{subfigure}{0.24\textwidth}
		\includegraphics[width=\textwidth]{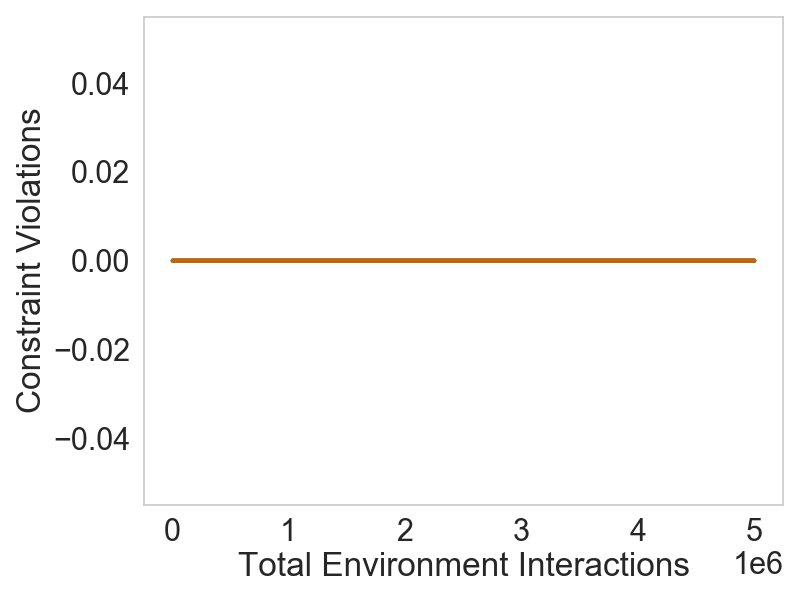}
		\caption{Heuristic with original height range}
		\label{fig:cheetah original heuristic}
	\end{subfigure}
\caption{Varying intervention penalty for half-cheetah experiment}
\label{fig:cheetah penalty}
\end{figure}

We vary the intervention penalty $\widetilde R$ for both the MPC-based intervention and heuristic intervention~(\cref{fig:cheetah penalty}).
Common among all results is that the deployment episode return (top row) decreases and deployment safety (middle row) increases with the magnitude of the penalty, consistent with the performance and safety bounds in~\cref{th:performance and safety}.
Furthermore, early in training, we remark that larger penalties result in the agent learning to be safe more quickly (middle row).
This is likely because the large penalties prioritize the agent to not be intervened, which allows it to more quickly learn to be as safe as the backup policy $\mu$.

For training-time safety with the MPC backup policy (bottom row of~\cref{fig:cheetah mpc unbiased,fig:cheetah mpc biased}), we observe that there are more violations as the penalty decreases, likely because the agent is less conservative during rollouts.

For the heuristic intervention~(\cref{fig:cheetah heuristic,fig:cheetah original heuristic}), we surmise that neither heuristic is partial since we require $\widetilde R$ to be relatively large in order for the agent to learn to be safe (middle row).
This is in constrast with the MPC-based intervention rule (middle row of~\cref{fig:cheetah mpc unbiased,fig:cheetah mpc biased}), where the penalty only needs to be nonzero, which indicates that the MPC-based intervention is partial.

\end{document}